\documentclass[10pt,twocolumn,letterpaper]{article}

\usepackage[pagenumbers]{cvpr} 

\definecolor{cvprblue}{rgb}{0.21,0.49,0.74}
\usepackage[pagebackref,breaklinks,colorlinks,allcolors=cvprblue]{hyperref}

\usepackage[T1]{fontenc}
\usepackage{booktabs}
\usepackage[misc]{ifsym}

\usepackage{graphicx}
\usepackage{booktabs}
\usepackage{amsfonts}
\usepackage{tcolorbox}
\usepackage{amsmath,amssymb}
\usepackage{csquotes}
\usepackage{booktabs}
\usepackage{multirow}
\usetikzlibrary{decorations.pathmorphing}
\usepackage{wrapfig}
\usepackage{subcaption}
\usepackage{mathtools}
\usepackage{enumitem}
\usepackage{algorithm}
\usepackage{algpseudocode}   
\usepackage{tikz}
\usetikzlibrary{decorations.pathmorphing, backgrounds, calc, shapes.arrows}
\definecolor{elegantBlue}{RGB}{65, 105, 225} 
\definecolor{elegantOrange}{RGB}{237, 125, 49} 
\usepackage[numbers]{natbib}
\usepackage{balance}

\usepackage{mathtools} 
\usepackage{booktabs} 
\usepackage{tikz} 

\usepackage[accsupp]{axessibility}  

\usepackage{amsthm}

\tcolorboxenvironment{theorem}{
  boxrule=0.3pt,
  colback=blue!5!white,
  colframe=black,
  sharp corners,
  before skip=4pt,
  after skip=4pt,
  fontupper=\small,
}

\newtheorem{theorem}{Theorem}
\newtheorem{lemma}[theorem]{Lemma} 

\def\paperID{*****} 
\def\confName{CVPR}
\def\confYear{2026}

\title{PAPA: Online Personalized Active Preference Alignment}

\author{Anindya Sarkar $^\ast$\\ 
Washington University in St. Louis\\
{\tt\small anindya@wustl.edu}
\and
Nasik Muhammad Nafi \thanks{Equal Contribution.}\\
Oak Ridge National Laboratory\\
{\tt\small nafinm@ornl.gov}
\and 
Isaac Lyngaas\\
Oak Ridge National Laboratory\\
{\tt\small lyngaasir@ornl.gov}
\and 
Muralikrishnan Gopalakrishnan Meena\\
Oak Ridge National Laboratory\\
{\tt\small gopalakrishm@ornl.gov}
\and 
Yevgeniy Vorobeychik \\
Washington University in St. Louis\\
{\tt\small yvorobeychik@wustl.edu}
}

\begin{document}
\maketitle

\begin{abstract}
Diffusion models are highly effective at modeling complex data distributions, including images and text. However, in applications like personalized recommender systems, the objective often shifts to modeling specific regions of the distribution that maximize user preferences—initially unknown but gradually uncovered through interactive feedback. This can naturally be framed as a reinforcement learning problem, where the goal is to fine-tune a diffusion model to maximize a reward function based on preferences. However, the main challenge lies in learning a parameterized reward model, which typically requires large-scale preference data—something that is often not feasible in practice. In this work, we introduce Personalized Active Preference Alignment (\emph{PAPA}), a novel method that bypasses the requirement for a parametrized reward model by directly optimizing the diffusion model using real-time user feedback. \emph{PAPA} enables feedback-efficient preference alignment, drawing inspiration from the variational inference framework. We demonstrate \emph{PAPA}'s effectiveness through extensive experiments and ablation studies across diverse class-conditioned and fine-grained alignment tasks. Additionally, based on theoretical insights, we propose an enhanced fine-tuning strategy, referred to as \emph{EPAPA}, that requires less computational budget and accelerates the fine-tuning process, further boosting \emph{PAPA}'s suitability for real-world deployment. Our code is made publicly available at \url{https://github.com/NasikNafi/papa}. 

\let\thefootnote\relax\footnotetext{Accepted at ECML PKDD 2026}


\end{abstract}

\section{Introduction}
\label{sec:intro}

Diffusion models are deep generative models that generate data by reversing a diffusion process, excelling at capturing complex spaces like natural image manifolds. However, in applications like personalized product recommendations, the goal is to steer generation toward items that align with individual user preferences, which are revealed over time through user activity. A similar challenge arises in other domains as well. For instance, in image generation, diffusion models are trained on vast datasets scraped from the internet, but practical applications often require images with high aesthetic quality. In fact, many other scenarios share this general structure, such as drug discovery, where the goal is to guide generation toward compounds with high bioactivity. This can be framed as a reinforcement learning (RL) problem, where the objective is to fine-tune the diffusion model to maximize a reward that reflects the desired properties of the user’s preferences. However, these methods rely on extensive preference data to learn the reward model, making them ineffective in scenarios like personalized recommendation platforms, where large-scale preference data for each user is unavailable but can be gathered through costly interactive feedback.

\begin{figure*}[t]
    \centering
    \includegraphics[width=0.9\textwidth]{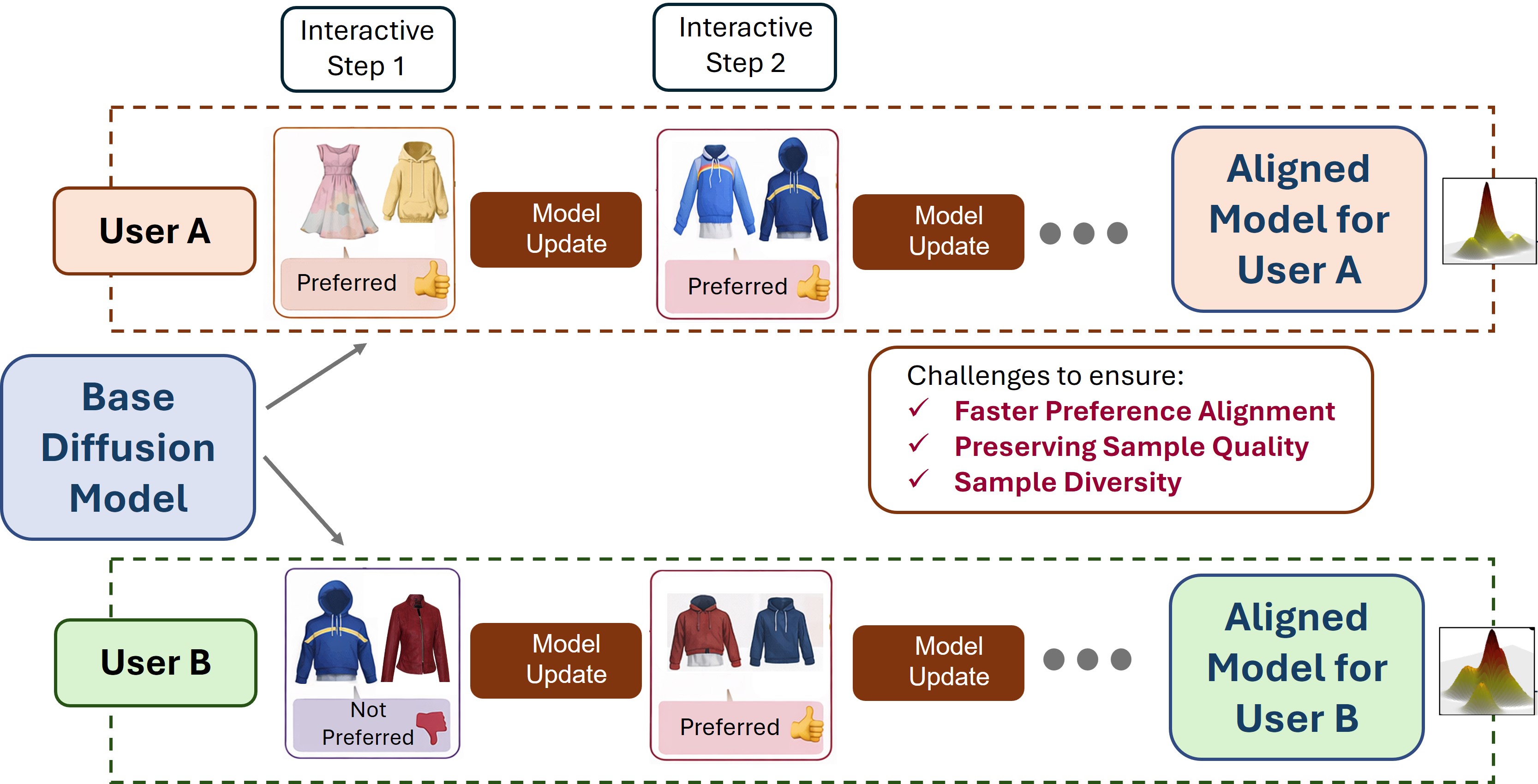}
    \caption{Overview of the considered personalized active preference alignment problem.}
    \label{fig:intro}
\end{figure*}


\noindent
The challenge is twofold: Firstly, achieving this objective requires efficient exploration. However, in high-dimensional spaces, such as those of natural images, this goes beyond simply discovering new regions. It also necessitates respecting the structural constraints of the problem. For instance, in areas like product recommendation, valid solutions—such as realistic-looking products—are typically confined to a lower-dimensional manifold within a much larger design space. Therefore, an effective, feedback-efficient fine-tuning method must explore this space while staying within the feasible area, as venturing outside would lead to wasteful invalid queries. Moreover, fine-tuning the diffusion model to aggressively optimize based on the preferences collected so far can reduce sample diversity. This is because human preferences are often multimodal, and the model, if overly focused on a narrow set of preferences, may fail to capture the full spectrum of diverse user preferences, leading to a less varied sample generation. Therefore, efficient exploration is crucial to maintaining the quality of generated samples and ensuring greater diversity in sample generation.

Secondly, a key challenge in many applications is the high cost of acquiring feedback for the ground-truth reward function. For instance, in a product recommendation system, determining user preferences requires subjective human judgment, which is both costly and time-consuming. This challenge is further compounded by the need for the model to not only explore new options but also to exploit the information it has gathered to generate samples that align with the user's preferences. If the model continues to explore without producing samples that meet the user's expectations, it risks disengaging the user. In a nutshell, the model must strike a balance between exploration and exploitation—effectively generating preference-aligned samples while minimizing costly reward queries. While several recent works have proposed RL-based fine-tuning methods for diffusion models~\citep{black2023training,fan2024reinforcement,uehara2024fine}, none directly tackle the challenge of feedback efficiency in an online setting.~\citep{uehara2024feedback} introduced a framework that accounts for the online nature of feedback but still relies on a separate parameterized reward model for optimization. 
Our goal instead is to develop a feedback-efficient online fine-tuning approach that entirely eliminates the need for a separate reward model, instead directly fine-tuning the diffusion model using real-time user feedback. 

Although direct preference optimization~\citep{liu2023zero} has enabled efficient offline alignment, these methods frequently falter in online interactive settings, due to its reliance on pre-existing large-scale preference data. Critically, prior approaches, such as~\cite{rafailov2024direct}, fail to disentangle sample quality and diversity from alignment itself, resulting in poor performance when preference spans multiple, heterogeneous classes intrinsic to real human feedback. To this end, we propose a principled feedback-efficient online fine-tuning method for diffusion models, derived using the tools of variational inference, aimed at Personalized Active Preference Alignment (PAPA). 

We showcase the effectiveness of PAPA through comprehensive experiments and ablation studies across various domains. Moreover, we propose a pruning-style fine-tuning strategy paired with a sampling approach that leverages both the pre-trained and fine-tuned models. This strategy significantly reduces the training computation requirements of PAPA, thereby accelerating the fine-tuning process. The effectiveness of this approach is supported by both theoretical insights and empirical results, which highlight two key observations: (i) the pre-trained diffusion model retains strong zero-shot denoising capabilities at low noise levels, and (ii) overfitting can occur when using the fine-tuned model for denoising at low noise levels. Below, we summarize key contributions:
\begin{tcolorbox}[colback=blue!5!white, colframe=blue!40!black, boxrule=0.5pt, arc=2mm]
\begin{itemize}[noitemsep,topsep=0pt, leftmargin=*]
    \item We introduce \emph{PAPA}, a novel online fine-tuning method for diffusion models that enables feedback-efficient preference alignment. 
    \item We also propose \emph{EPAPA}, Enhanced-PAPA, a pruning-style fine-tuning strategy that accelerates the fine-tuning process while reducing computational costs without affecting the performance. 
    \item We validate the effectiveness of each component of \emph{PAPA} and \emph{EPAPA} through comprehensive quantitative and qualitative ablation studies across diverse preference alignment settings.
    \item Compared to prior methods, PAPA adapts and aligns more quickly to fine-grained alignment tasks, making it well-suited for real-world interactive alignment scenarios. Notably, PAPA aligns preferences spanning across diverse classes, a capability lacking in previous methods.
\end{itemize}
\end{tcolorbox}

\section{Related Work}
\label{sec:rel_work}

\noindent\textit{Fine-Tune Diffusion model with RL:}
fine-tuning generative models with human feedback, such as user preferences, has become increasingly popular~\citep{ouyang2022training,touvron2023llama}.
Many prior studies have examined fine-tuning diffusion models by optimizing reward functions through methods like supervised learning~\citep{lee2023aligning}, control-based techniques~\citep{ajay2022conditional,janner2022planning}, or policy-gradient~\citep{black2023training}. However, these methods often rely on a static reward model, treating rewards as either fixed ground truth or not allowing for online feedback queries. In contrast, our approach focuses on an online setting that enables interactive preference learning, where the user preferences are initially unknown and gradually revealed through sequential user feedback.
\cite{dong2023raft} presents a general online learning approach for generative models. However, their method is not specifically tailored for diffusion models, and it relies on a separate reward model to select high-quality samples for fine-tuning. These key differences are what fundamentally set our work apart. More recently, ~\cite{uehara2024feedback} proposed a feedback-efficient online fine-tuning approach for diffusion models. However, their method still relies on learning a separate reward model. The SFT method introduced by~\cite{fan2023optimizing} applied RL to diffusion models to improve the performance of existing fast DDPM samplers. ReFL~\cite{xu2024imagereward} leverages the RLHF framework. It first trains a model based on human preferences, and then fine-tunes the diffusion model through reinforcement learning. 
DDPO~\cite{black2023training} frames the denoising process of diffusion models as a Markov Decision Process (MDP) in order to fine-tune the models using multiple reward functions. 
All of these models require a powerful reward model, which in turn necessitates a large dataset of images and comprehensive human evaluations.

\smallskip
\noindent\textit{Direct Preference Optimization:}
In RL, exploring policies based on preferences rather than explicit rewards has gained attention through various methods. Preference-based Reinforcement Learning~\citep{liu2023zero} learns from binary preferences derived from a hidden scoring function, rather than explicit rewards. 
The DPO approach~\cite{rafailov2024direct} was recently proposed to fine-tune LLMs directly using preferences. It capitalizes on the relationship between reward functions and optimal policies, effectively tackling the challenge of reward maximization within a single phase of policy training.
The D3PO method~\cite{yang2024using} was recently introduced to fine-tune diffusion models directly. It operates similarly to DPO but is more cost-effective and reduces computational overhead for diffusion model training. However, these methods are not designed for online settings; they assume access to pre-existing large-scale pairwise preference data to optimize parameters.
\section{Preliminaries}
Denoising \emph{diffusion models (DMs)} 
generate samples from a learned target distribution by progressively denoising white Gaussian noise. More formally, diffusion models generate samples by reversing a forward diffusion process with $T$ steps that start from a data point $x_0$ and evolve as $x_t = \sqrt{\alpha_t} x_{t-1} + \sqrt{(1 - \alpha_t)} \tilde{\epsilon}_t$, $t=1,\ldots, T$, where $\{\tilde{\epsilon}_t\}$ are i.i.d standard Gaussian vectors. Samples from this forward diffusion process can be alternatively expressed as
\begin{equation}
    x_t = \sqrt{\bar{\alpha}_t} x_0 + \sqrt{(1 - \bar{\alpha}_t)} \epsilon_t, \quad \epsilon_t \sim \mathcal{N}(0, I)
\end{equation}
\text{where } $\bar{\alpha}_t = \prod_{s=1}^{t} \alpha_s$. 
\( \{\alpha_t\} \) are chosen such that \( \{\bar{\alpha}_t\} \) forms a monotonic sequence with \( \bar{\alpha}_T \approx 0 \). This ensures that the density \( p_{x_T} \) is close to the normal distribution $\mathcal{N}(0, I)$. The reverse diffusion process is learned by modeling the distribution of \( x_{t-1} \) given \( x_t \) as Gaussian with mean 
\begin{equation}
    \mathbb{E}[x_{t-1} \mid x_t] = \frac{1}{\sqrt{\alpha_t}}\left( x_t - \frac{1 - \alpha_t}{\sqrt{1 - \bar{\alpha}_t}} \epsilon_\theta(x_t, t)\right)
\end{equation}
and covariance \( \sigma_t I \), where \( \epsilon_\theta(\cdot, \cdot) \) is a neural network and \( \{\sigma_t\} \) are fixed hyperparameters. Training is performed by minimizing the ELBO loss, which simplifies to a series of MSE terms as defined below:
\begin{equation}
    L(\theta) = \sum_{t=1}^{T} \mathbb{E}_{x_0, \epsilon_t} \left[ \left\| \epsilon_\theta(x_t, t) - \epsilon_t \right\|^2_2 \right].
\end{equation}
Upon reaching the optimal solution, the neural network approximates the posterior mean, which depends on the timestep as 
\(
    \epsilon_\theta(x_t, t) = \mathbb{E}[\epsilon_t \mid x_t = x_t].
\)

To generate samples, DMs sample \( x_T \sim \mathcal{N}(0, I) \) and then iteratively follow the learned reverse probabilities
to produce a sample of \( x_0 \). Specifically, at each timestep \( t \), the DM takes \( x_t \) as input and predicts the noise \( \epsilon_t \) 
from which \( x_{t-1} \) is obtained by sampling from the reverse distribution~\citep{ho2020denoising}. 

\section{Problem Formulation}
Consider a pre-trained Diffusion model, such as DDPM, denoted as \( g_{\theta^*} \), with initial parameters \( \theta^* \in \Theta \subseteq \mathbb{R}^d \), where \( \Theta \) represents the full parameter space. The model has been trained on a specific dataset \( \mathcal{D} \), consisting of \( m \) independent and identically distributed (i.i.d.) samples \( \{x_i\}_{i=1}^m \) drawn from a distribution \( \mathcal{P}_X \) over the data space \( X \). The goal of the model is to learn the underlying data distribution \( \mathcal{P}_X \). For instance, when the goal is to generate natural images, the cardinality of $|X|$ is enormous. Although the raw design space is vast, the actual feasible and meaningful solutions typically lie within a complex, yet potentially low-dimensional manifold embedded in $X$, denoted as $\mathcal{X}_{feas}$. We consider feedback-efficient online fine-tuning of \( g_{\theta^*} \) for personalized active preference alignment. Specifically, we work in a setting where we do not have any data with feedback initially, but we have a pre-trained diffusion model \( g_{\theta^*} \) trained on \( \mathcal{P}_X \). We aim to fine-tune \( g_{\theta^*} \) to produce a sequence of new models \( g_{\theta^{ft}_t} \) over the period of online interaction so as to maximize the expected user reward
\begin{equation}
    \max_{\{g_{\theta^{ft}_t}\}}\sum_{t=1}^{t=\mathcal{B}}\mathbb{E}_{x_t \sim g_{\theta^{ft}_t}}[r(x_t)],
\end{equation}
where $r(x)$ is the (unknown) user binary reward function with $r(x) = 1$ if the user likes (approves) and $r(x) = 0$ if they dislike the input $x$,  
\( g_{\theta^{ft}_t} \) is the fine-tuned diffusion model used to sample $x_t$ at the $t$'th online interaction step, and $\mathcal{B}$ is the number of input queries we can make to the user to elicit associated rewards.
We assume that $r(x)=0$ whenever $x$ is outside the support of the feasible (or meaningful) set of inputs $\mathcal{X}_{\text{feas}}$.
By leveraging the sequence of fine-tuned models, we aim to efficiently explore the vast sample space $X$ to uncover the user's diverse preferences, while also exploiting the gathered information to generate samples that align with those preferences. Next, we present our proposed approach to tackle the problem at hand. 

\section{Algorithm}
At the very first step of the online interaction process, we leverage the pre-trained diffusion model \( g_{\theta^*} \) to generate samples $\mathcal{D}^1$, since no user feedback data is available at the outset. Based on the recommended samples $\mathcal{D}^1$, the user provides binary feedback on each sample to express their preferences. We denote the set of preferred and non-preferred samples as $\mathcal{D}_{p}$ and $\mathcal{D}_{np}$ respectively, where $\mathcal{D}^1 = \{\mathcal{D}_{p} \cup \mathcal{D}_{np} \}$. A natural objective for fine-tuning the diffusion model is to maximize the probability of generating samples from $\mathcal{D}_{p}$. To this end, we propose a deceptively simple loss function, $\mathcal{L}^{p}(\theta, \mathcal{D}_{p})$, derived by optimizing the variational lower bound on the negative log-likelihood of the preferred data $\mathcal{D}_{p}$, following the approach in~\cite{ho2020denoising}. The definition of $\mathcal{L}^{p}(\theta, \mathcal{D}_{p})$ is as follows:
\begin{equation}
\begin{aligned}
= & \underbrace{\gamma \left( \sum_{x_0 \in \mathcal{D}_{p}}\sum_{t=1}^{T} \frac{1- \alpha_t}{\alpha_t (1 - \bar{\alpha}_{t-1})} \left\| \epsilon_0 - \epsilon_\theta(x_t, t) \right\|^2 \right)}_{\textit{Preference Aligner}}
\label{eq:papa_p_1}
\end{aligned}
\end{equation}
We now demonstrate that optimizing $\mathcal{L}^p(\theta, \mathcal{D}_{p})$ guarantees continuous, monotonic improvement in preference alignment at each fine-tuning step.
\begin{theorem}\label{th:online}
    Let $\mathcal{P}(x_0)$ be the data distribution from the pre-trained diffusion model, and $w(x_0)$ a non-negative weighting function integrable with respect to $\mathcal{P}(x_0)$, proportional to the preference alignment objective. Assume the diffusion model $g_{\theta}$ perfectly learns the distribution implied by the preference alignment loss term~\ref{eq:papa_p_1} at each online interaction step, as defined below: 
    \[
     \small{\underbrace{\{ \sum_{x_0 \in \mathcal{D}_{p}}\gamma\}}_{\textit{$w(x_0)$}} \left(\sum_{t=1}^{T} \frac{1- \alpha_t}{\alpha_t (1 - \bar{\alpha}_{t-1})} \left\| \epsilon_0 - \epsilon_\theta(x_t, t) \right\|^2 \right)}
    \]
    Then, the learned data distribution after $H$ online interaction steps is given by:
    \[
        \mathcal{P}^H_{\theta}(x_o) = \frac{w(x_0)^H \mathcal{P}(x_0)}{N_H} ; N_H = \int w(x_0)^H \mathcal{P}(x_0) \, dx_0
    \]    
\end{theorem}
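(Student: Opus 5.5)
The argument is an induction on the number of online interaction steps $H$. The single-step ingredient is a standard fact about weighted denoising score matching: when the per-sample diffusion loss is weighted by a non-negative $w(x_0)$, the population minimizer is the same as for the unweighted loss on the reweighted distribution $w(x_0)\mathcal{P}(x_0)/\int w\,d\mathcal{P}$. So, under the theorem's assumption that $g_\theta$ perfectly learns the distribution implied by the loss, one fine-tuning step maps the current model distribution $Q$ to $wQ/\int w\,dQ$.

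\textbf{Step 1 (one-step reweighting).} Let the model currently generate samples from a distribution $Q(x_0)$. The loss term in the statement is an expectation over $x_0 \sim Q$ (the empirical set $\mathcal{D}_p$ drawn from $Q$ stands in for it), with per-sample weight $w(x_0)$ multiplying the usual per-timestep ELBO terms. I would write this expectation as
\[
\int w(x_0)\,Q(x_0)\,\ell_\theta(x_0)\,dx_0 = Z\int \tilde Q(x_0)\,\ell_\theta(x_0)\,dx_0,
\qquad \tilde Q = \frac{wQ}{Z},\quad Z=\int wQ .
\]
Here $\ell_\theta(x_0)$ is the sum over $t$ of the weighted noise-prediction errors, and $Z<\infty$ because $w$ is integrable. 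The right-hand side is a positive constant times the ELBO objective for the data distribution $\tilde Q$.

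For each $t$ the noise-prediction term is minimized by $\epsilon_\theta(x_t,t)=\mathbb{E}_{\tilde Q}[\epsilon_t\mid x_t]$, which is the posterior mean recalled in the Preliminaries. With the ELBO tight at the optimum, the reverse process then generates exactly $\tilde Q$. The positive per-timestep coefficients $\frac{1-\alpha_t}{\alpha_t(1-\bar\alpha_{t-1})}$ do not move the per-$t$ minimizer, so they can be ignored.

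\textbf{Step 2 (induction).} For the base case $H=0$, the model distribution is $\mathcal{P}$ and $N_0=1$. Suppose that after $H$ steps the model distribution is $\mathcal{P}^H_\theta = w^H\mathcal{P}/N_H$. At step $H+1$, samples are drawn from $\mathcal{P}^H_\theta$ and weighted by the same $w$. By Step 1 the new distribution is
\[
\frac{w\,\mathcal{P}^H_\theta}{\int w\,\mathcal{P}^H_\theta}
= \frac{w^{H+1}\mathcal{P}/N_H}{N_{H+1}/N_H}
= \frac{w^{H+1}\mathcal{P}}{N_{H+1}},
\]
which is the claimed formula at $H+1$.

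\textbf{Main obstacle.} The hardest part is making Step 1 rigorous from the informal hypothesis. Two points need care.
\begin{itemize}
\item The preference weight $w$ must be interpreted as a fixed function of $x_0$, the same at every interaction step; this is how the statement's "$\sum_{x_0\in\mathcal D_p}\gamma$" should be read. Only then does the exponent accumulate as $w^H$.
\item Each $N_H=\int w^H\mathcal{P}$ must be finite and positive. Integrability of $w$ alone does not guarantee finiteness of $w^H$, so I would add a boundedness assumption on $w$. Given the paper's binary rewards, $w$ is bounded in any case; when $w$ is indicator-like, $w^H=w$ and $N_H$ is trivially finite.
\end{itemize}
With these stated, "perfect learning" can be formalized as: the learned score equals the true score of the reweighted distribution at all noise levels. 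Step 1 then follows from the minimizer characterization of the MSE objective.
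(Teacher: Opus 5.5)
Your argument is correct and has the same skeleton as the paper's proof. You rewrite the weighted loss as $Z$ times the ordinary loss under $w\mathcal{P}/Z$, conclude that one fine-tuning step maps the current distribution $Q$ to $wQ/\int wQ$, and induct.

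Where you differ is in how you justify that one-step map. The paper imports the advantage-weighted-regression formula of~\cite{peng2019advantage}, $\mathcal{P}^{\mathrm{induced}}\propto\mathcal{P}\exp(-\gamma L)$, and applies it to the reweighted distribution to get $\mathcal{P}^1_\theta\propto w\mathcal{P}\exp(-\gamma\mathcal{L}^p)$. It then additionally assumes the per-sample loss converges to $0$, so the exponential factor drops out. You instead characterize the per-$t$ minimizer of the weighted MSE as the posterior mean $\mathbb{E}_{\tilde Q}[\epsilon_t\mid x_t]$, i.e.\ the score of the reweighted distribution. Under that reading, ``perfect learning'' directly means the reverse process samples $\tilde Q$.

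Your route is the more defensible one. The denoising loss cannot vanish unless the target is a point mass, since its minimum over $\epsilon_\theta$ is the expected posterior variance of $\epsilon_t$ given $x_t$. So the paper's zero-loss step is an extra hypothesis standing in for perfect learning, not a consequence of it.

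You also make explicit two points the paper passes over. First, $w$ must be one fixed function across all steps, whereas in Algorithm~\ref{alg:1} the set $\mathcal{D}_p$ grows. Second, integrability of $w$ only gives $N_1<\infty$, so finiteness of $N_H$ for larger $H$ needs something like boundedness of $w$.

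One small slip: for $w=\gamma\mathbf{1}_{\mathcal{D}_p}$ you have $w^H=\gamma^{H-1}w$, not $w$. After normalization this gives the same distribution for every $H\ge 1$, which is harmless for the theorem.
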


\vspace{5pt}
\noindent
\textbf{\emph{Remark:}}\label{par:ins_p} Here, $N_H$ is the normalization constant that ensures $\mathcal{P}^H_{\theta}(x_o)$  is a valid probability distribution. We provide detailed proof of Th.~\ref{th:online} in the Appendix. According to Th.~\ref{th:online}, fine-tuning the diffusion model leads to a strict improvement in preference alignment, assuming $w(x_0) > 0$. Interestingly, as $H \rightarrow \infty$, the learned distribution $\mathcal{P}^H_{\theta}(x_o)$ converges to a Dirac-delta function that perfectly aligns with one of the user's preference (Proof is in the Appendix). Since human preferences are inherently multimodal, $\mathcal{L}^p(\theta, \mathcal{D}_{p})$ alone is insufficient for effective active alignment, highlighting the necessity of a diversity-enhancing objective. 

Furthermore, to effectively solve the active preference alignment problem, a sample-efficient approach is essential, one that fully leverages all available user feedback. $\mathcal{L}^p(\theta, \mathcal{D}_{p})$, however, overlooks non-preference data, which is crucial for improving the efficiency of preference alignment learning. According to neuroscientists~\cite{davis2017biology}, in order to actively learn something new, the brain often needs to \enquote{forget} or weaken certain existing neural connections. This is because our brain has limited capacity and must prioritize storing the most relevant information, a process known as \enquote{memory consolidation}. Building on these insights, we propose a novel objective function that utilizes non-preference data for active memory consolidation and addresses all the challenges mentioned (such as diversity) in a unified manner. 

To accomplish this, we draw insights from the Bayesian inference literature.  
According to~\citep{blundell2015weight,ghahramani2000online}, fine-tuned parameters $\theta^{1}_{ft}$ are a sample from the fine-tuned model's parameter posterior distribution $\mathcal{P}(\theta \mid \mathcal{D}_p)$ i.e., $\theta^{1}_{ft} \sim \mathcal{P}(\theta \mid \mathcal{D}_p)$. Similarly, the pre-trained model's parameters $\theta^{*} \sim \mathcal{P}(\theta \mid \mathcal{D}_{p}, \mathcal{D}_{np})$. Motivated by this, at time step 1, we aim to approximate the posterior distribution of the fine-tuned model's parameters \( \mathcal{P}(\theta \mid \mathcal{D}_p) \) as follows:
\begin{equation}
\small
\begin{aligned}
\mathcal{P}(\theta \mid \mathcal{D}_p, \mathcal{D}_{np}) &\propto \mathcal{P}(\mathcal{D}_{p}, \mathcal{D}_{np} \mid \theta) \mathcal{P}(\theta) \\
&\propto \mathcal{P}(\mathcal{D}_{np} \mid \theta) \mathcal{P}(\mathcal{D}_p \mid \theta) \mathcal{P}(\theta) \\
&\propto \mathcal{P}(\mathcal{D}_{np} \mid \theta) \mathcal{P}(\theta \mid \mathcal{D}_p)
\label{eq:posterior}
\end{aligned}
\end{equation}
Eqn.~\ref{eq:posterior} directly follows from Bayes' rule, neglecting the normalizing constant and assuming independence between $\mathcal{D}_{p}$ and $\mathcal{D}_{np}$. As evident from Eqn.~\ref{eq:posterior}, that the posterior distribution $\mathcal{P}(\theta \mid \mathcal{D}_p)$ is intractable, necessitating an approximation in the form $\mathrm{Mapp}(\mathcal{P}(\theta \mid \mathcal{D}_{p}) \approx \phi^*(\theta)$. Here, $\mathrm{Mapp}(.)$ denotes a projection function that transforms an intractable, unnormalized distribution into a normalized one. We employ variational KL-divergence minimization as $\mathrm{Mapp}(.)$ in our approach, as previous studies~\cite{bui2016deep} have shown it outperforms other inference methods~\citep{wild2022generalized,nguyen2017variational,knoblauch2022optimization} for complex models. Therefore, our approach is framed as a variational KL-divergence minimization over a collection of plausible approximate posterior distributions $\mathcal{\psi}$ as follows:
\begin{equation}
\small
    \phi^{*}(\theta) = \arg\min_{\phi(\theta) \in \mathcal{\psi}} \, \mathbb{D}_{KL} \left[\phi(\theta) \, \| \, Z \cdot \frac{\mathcal{P}(\theta \mid \mathcal{D}_p, \mathcal{D}_{np})}{\mathcal{P}(\mathcal{D}_{np} \mid \theta)} \right]
    \label{eq:kl}
\end{equation}
Here, $Z$ represents the intractable normalization constant, which does not depend on the parameter $\theta$. Next, we present a theoretical result that provides the foundation for deriving a tractable optimization objective for Eqn.~\ref{eq:kl}. 
\begin{theorem}
    Assume a Gaussian mean-field approximation in the parameter space, i.e., if the variational prior distribution is \( \phi(\theta) = \prod_{i=1}^{d} \mathcal{N}(\theta_i, \sigma_i^2) \) and the posterior distribution with full data is \( \mathcal{P}(\theta \mid \mathcal{D}_{np}, \mathcal{D}_p) = \prod_{i=1}^{d} \mathcal{N}(\mu_i^*, \sigma_i^{*2}) \), then the following holds:
    \begin{equation}
    \begin{aligned}
    \mathbb{D}_{KL} \left[ \phi(\theta) \, \| \, Z \cdot \frac{\mathcal{P}(\theta \mid \mathcal{D}_{np}, \mathcal{D}_{p})}{\mathcal{P}(\mathcal{D}_{np} \mid \theta)} \right] \gtrsim \\ \quad - \sum_{x_0 \in \mathcal{D}_{np}}\sum_{t=2}^{T} \frac{(1 -\alpha_t)}{ \alpha_t \cdot (1 - \bar{\alpha}_{t-1})} \left\| \epsilon_0 - \epsilon_\theta(x_t, t) \right\|^2 \\
     + \sum_{i=1}^{d} \left[ \frac{(\theta_i - \mu_i^*)^2}{2 \sigma_i^{*2}} -\frac{1}{2} + \mathrm{log}\frac{\sigma_i^{*}}{\sigma_i} + \frac{\sigma_i^{2}}{2 \sigma_i^{2}} \right].
    \end{aligned}
    \end{equation}
    \label{th:1}
\end{theorem}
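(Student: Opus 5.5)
We expand the KL divergence in Theorem~\ref{th:1} into an entropy term and two cross-entropy terms, one for each factor of the target density, and handle each separately. Writing the target as $Z\cdot \mathcal{P}(\theta\mid\mathcal{D}_{np},\mathcal{D}_p)/\mathcal{P}(\mathcal{D}_{np}\mid\theta)$, we get
\begin{equation*}
\mathbb{D}_{KL}[\phi\,\|\,\cdot\,] = \mathbb{E}_{\phi}[\log\phi(\theta)] - \mathbb{E}_{\phi}[\log \mathcal{P}(\theta\mid\mathcal{D}_{np},\mathcal{D}_p)] + \mathbb{E}_{\phi}[\log \mathcal{P}(\mathcal{D}_{np}\mid\theta)] - \log Z .
\end{equation*}
The constant $-\log Z$ does not depend on $\theta$, so we drop it; this is the first reason the statement is an inequality $\gtrsim$ up to constants. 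The first two terms together form $\mathbb{D}_{KL}[\phi\,\|\,\mathcal{P}(\theta\mid\mathcal{D}_{np},\mathcal{D}_p)]$, which is a KL divergence between two diagonal Gaussians under the mean-field assumption.

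\textbf{Gaussian part.} Since both $\phi$ and the full-data posterior factorize over coordinates, the divergence is a sum of univariate Gaussian KLs. For coordinate $i$ this is
\begin{equation*}
\log\frac{\sigma_i^*}{\sigma_i} + \frac{\sigma_i^2 + (\theta_i-\mu_i^*)^2}{2\sigma_i^{*2}} - \frac{1}{2}.
\end{equation*}
Here $\theta_i$ denotes the variational mean. This reproduces the second sum in the statement; the last fraction there should read $\sigma_i^2/2\sigma_i^{*2}$, and we will note the apparent typo.

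\textbf{Likelihood part.} We need a lower bound on $\mathbb{E}_\phi[\log \mathcal{P}(\mathcal{D}_{np}\mid\theta)]$. Using independence of the samples, $\log \mathcal{P}(\mathcal{D}_{np}\mid\theta)=\sum_{x_0\in\mathcal{D}_{np}}\log p_\theta(x_0)$. For each $x_0$ we apply the standard diffusion ELBO of~\cite{ho2020denoising}, the same derivation that yields~\eqref{eq:papa_p_1}:
\begin{equation*}
\log p_\theta(x_0)\ \ge\ -\sum_{t\ge 2} \mathbb{D}_{KL}\big(q(x_{t-1}\mid x_t,x_0)\,\|\,p_\theta(x_{t-1}\mid x_t)\big) + \mathbb{E}_q\log p_\theta(x_0\mid x_1) - \mathbb{D}_{KL}\big(q(x_T\mid x_0)\,\|\,p(x_T)\big).
\end{equation*}
With the $\epsilon$-parameterization, each $t\ge2$ term equals the weighted MSE $\frac{(1-\alpha_t)}{\alpha_t(1-\bar\alpha_{t-1})}\|\epsilon_0-\epsilon_\theta(x_t,t)\|^2$, up to a positive factor from $\sigma_t$ that we absorb into the weights or constants. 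The prior-matching term is $\theta$-independent, and the $t=1$ reconstruction term is treated as a constant or dropped. This explains why the sum in the statement starts at $t=2$.

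Summing the ELBOs over $\mathcal{D}_{np}$ gives $\log \mathcal{P}(\mathcal{D}_{np}\mid\theta)\ \gtrsim\ -\sum_{x_0\in\mathcal{D}_{np}}\sum_{t=2}^T w_t\|\epsilon_0-\epsilon_\theta(x_t,t)\|^2$, with $w_t$ the weights above. Because this term enters the KL with a plus sign, it contributes the negative MSE sum, which is the desired inequality direction. We evaluate the expectation under $\phi$ at its mean, a plug-in or first-order approximation, which is the second source of ``$\gtrsim$''. Adding the Gaussian and likelihood parts yields the claim.

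\textbf{Main obstacle.} The hardest step is making the likelihood bound rigorous. The ELBO is a lower bound on $\log p_\theta$ only in expectation over the forward noise $q$, the expectation over $\phi(\theta)$ of a nonconvex network loss is not equal to the loss at the mean, and the $t=1$ term is discarded. I would make each of these explicit as additive constants or approximations absorbed into $\gtrsim$. I would first try a delta-method or small-$\sigma_i$ argument to justify replacing $\mathbb{E}_\phi$ by evaluation at the mean.
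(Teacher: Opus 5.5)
Your proposal is correct at the paper's (heuristic) level of rigor and follows essentially the same route. Like the paper, you drop $\log Z$, recognize $\mathbb{D}_{KL}[\phi\,\|\,\mathcal{P}(\theta\mid\mathcal{D}_{np},\mathcal{D}_p)]$ as a sum of univariate Gaussian KLs, and lower-bound $\mathbb{E}_\phi[\log\mathcal{P}(\mathcal{D}_{np}\mid\theta)]$ by the per-sample diffusion ELBO with the $t=1$ and prior-matching terms discarded; your corrected fraction $\sigma_i^2/(2\sigma_i^{*2})$ agrees with the paper's own intermediate formula.

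The differences are minor. The paper handles $\mathbb{E}_\phi$ by a Monte Carlo average over sampled parameters rather than your plug-in at the mean. It also pins down your ``positive factor'' as exactly $\tfrac{1}{2}$, by taking the reverse-kernel variance equal to $\sigma_q^2(t)$, and then drops it. That is legitimate, since it only loosens the bound in the $\ge$ direction.
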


\begin{figure*}[t]
    \centering
    \includegraphics[width=0.80\textwidth]{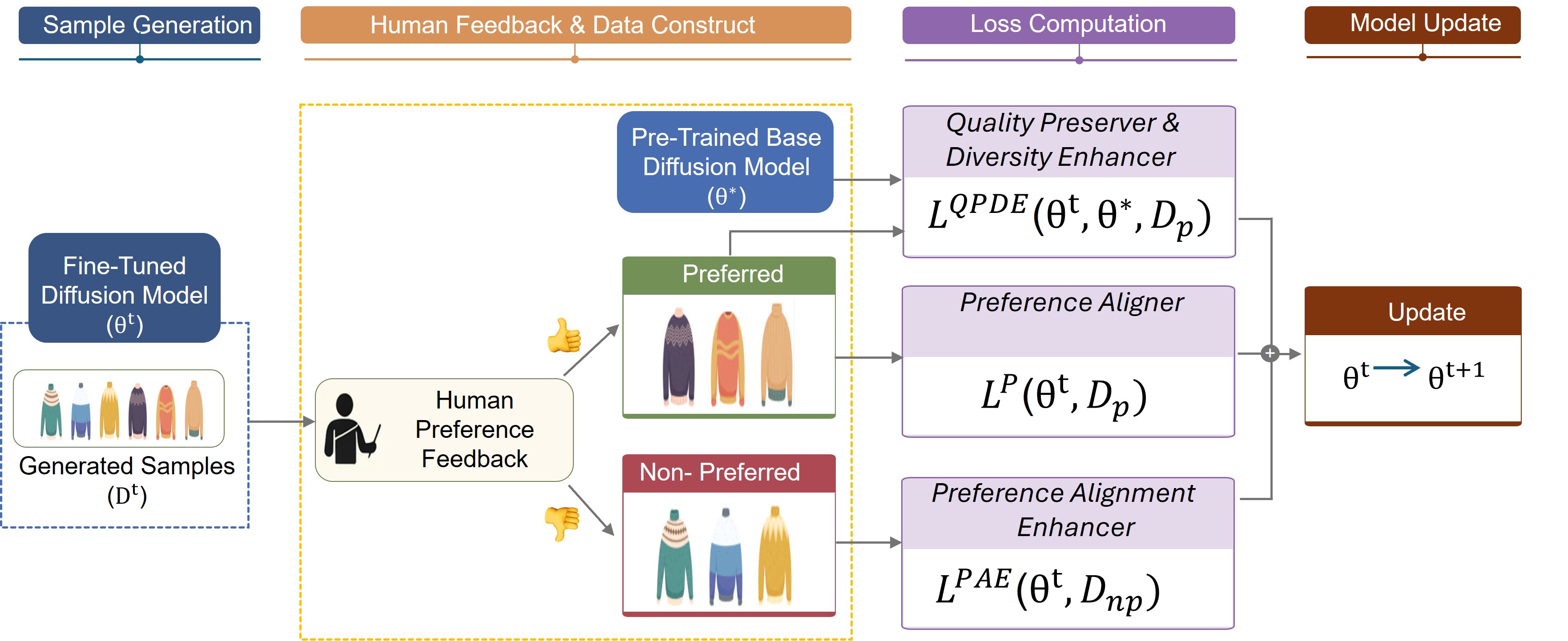}
    \caption{\small An overview of PAPA at step $t$.}
    \label{fig:PAPA}
\end{figure*}

\noindent
We present the proof in Appendix. Finally, when the variational prior distribution is \( \phi(\theta) = \prod_{i=1}^d \mathcal{N}(\theta_i, \sigma_i^2) \) and the posterior distribution with full data is \( \mathcal{P}(\theta \mid \mathcal{D}_p, \mathcal{D}_{np}) = \prod_{i=1}^d \mathcal{N}(\mu^*, \sigma^{*2}) \), Eqn.~\ref{eq:kl} results in the minimization of the following loss function, which we define as the personalized active preference alignment loss $\mathcal{L}^{np}(\theta, \theta^*, \mathcal{D})$
\begin{equation}
\small
\begin{aligned}
= & \underbrace{- \left( \sum_{x_0 \in \mathcal{D}_{np}}\sum_{t=1}^{T} \frac{1- \alpha_t}{\alpha_t (1 - \bar{\alpha}_{t-1})} \left\| \epsilon_0 - \epsilon_\theta(x_t, t) \right\|^2 \right)}_{\textit{Preference Alignment Enhancer}(\mathcal{L}^{\mathrm{PAE}}(\theta, 
 \mathcal{D}_{np}))} \\
&\quad \quad \quad \quad + \underbrace{\beta \sum_{i=1}^{d} \frac{ (\theta_i - \mu_i^*)^2}{2 \sigma_i^{*2}}}_{\textit{Quality Preserver \& Diversity Enhancer} (\mathcal{L}^{\mathrm{QPDE}}(\theta^*, \theta, 
 \mathcal{D}_p))}
\label{eq:papa_np}
\end{aligned}
\end{equation}
The minimization of the $\mathcal{L}^{np}(\theta, \theta^*, \mathcal{D})$ loss function is equivalent to minimizing Eqn.~\ref{eq:kl} by realizing that the first term on the right-hand side of the inequality in Theorem~\ref{th:1} resembles the preference alignment enhancer term from the first part of the loss function $\mathcal{L}^{np}(\theta, \theta^*, \mathcal{D})$ as defined in Eqn.~\ref{eq:papa_np}. Furthermore, the second term in Theorem~\ref{th:1} appears to be the same as the \emph{quality preserver and Diversity Enhancer} term in Equation~\ref{eq:papa_np} if we assume $\sigma_i = \sigma^{*}_i$.
Thus we utilize the proposed loss function $\mathcal{L}^{np}(\theta, \theta^*, \mathcal{D})$ for optimizing the pre-trained model during the fine-tuning process. $\mathcal{L}^{np}(\theta, \theta^*, \mathcal{D})$ consists of two main components: the first, called the \emph{“Preference Alignment Enhancer,”} minimizes the log-likelihood of the non-preferred data, hence aiding in preference alignment, while the second, the \emph{“Quality Preserver and Diversity Enhancer,”} penalizes deviations in the model’s parameters to prevent excessive divergence from their pre-trained values during the fine-tuning process, thus ensuring quality preservation. It also boosts sample diversity by letting the model enjoy the core strengths of the pre-trained model, like its diversity, by staying close to its original parameters $\theta^{*}$.
The sequence \( \{ \alpha_t : t \in T \} \) represents the noise scheduler of the diffusion model, where \( \bar{\alpha}_t = \prod_{i=1}^{t} \alpha_i \). Here, \( \epsilon_0 \) denotes the true added noise, \( \epsilon_\theta(x_t, t) \) is the model’s predicted noise at time \( t \) given the noisy sample $x_t$, and \( d \) is the dimension of the parameter space. To implement \emph{“Quality Preserver and Diversity Enhancer”}, we align the representations of the pre-trained ($\epsilon_{\theta^*}$) and fine-tuned denoisers ($\epsilon_{\theta}$), both processing the same noisy sample ($x_t$), as defined below: 
\begin{equation}
  \small
 \mathcal{L}^{\mathrm{QPDE}}(\theta^*, \theta, \mathcal{D}_p) = \sum_{x_0 \in \mathcal{D}_{p}}\sum_{t=1}^{T} \left\| \epsilon_{\theta^*}(x_t, t)) - \epsilon_\theta(x_t, t) \right\|^2 
\end{equation}
It is important to note that, we consider solely samples from the preference set ($\mathcal{D}_p$) for this objective, as optimizing it for non-preference set elements ($\mathcal{D}_{np}$) contradicts the preference alignment enhancer goal. 
Finally, our proposed method, PAPA, integrates the preference alignment objectives, $\mathcal{L}^{\mathrm{PAE}}(.)$ and $\mathcal{L}^{\mathrm{P}}(.)$, driving exploitation by optimizing the model based on the accumulated preference data, alongside $\mathcal{L}^{\mathrm{QPDE}}(.)$, which fosters exploration by ensuring the fine-tune model remains close to the pre-trained model. We define our final $\mathcal{L}_{\mathrm{PAPA}}(.)$ objective below:
\begin{equation}\label{eq:unified}
  \small
  \underbrace{\mathcal{L}^{p}(\theta, \mathcal{D}_{p}) +\mathcal{L}^{\mathrm{PAE}}(\theta, \mathcal{D}_{np})}_{\textit{Exploitation}} + \beta \cdot \underbrace{\mathcal{L}^{\mathrm{QPDE}}(\theta, \theta^{*}, \mathcal{D}_{p})}_{\textit{Exploration}}
\end{equation}

\begin{algorithm}[t]
\caption{Personalized Active Preference Alignment}
\label{alg:1}
\small
\begin{algorithmic}[1]
\State \textbf{Input:}  \( \mathcal{D}_p = \emptyset \), \( \mathcal{D}_{np} = \emptyset \), Pre-trained diffusion model parameter \( \theta^* \), \( \mathcal{B} \), \( \alpha \) \( \eta \), \( \gamma \), Fine-tune model parameter \( \theta^1 \).
\State \textbf{Initialize:} \( \theta^1 \leftarrow \theta^* \)
\For{each interaction step \( t = 1 \) to \( \mathcal{B} \)}
    \State Generate samples $\mathcal{D}^t = \mathcal{D}^t_p \cup \mathcal{D}^t_{np}$ following a series of reverse-diffusion steps with $\theta^{t}$ as the \emph{denoiser}. 
    \State Collect user's preference for samples in $\mathcal{D}^t$ \& update \\ $\mathcal{D}_p \leftarrow \mathcal{D}_p \cup \{\mathcal{D}^t \setminus \mathcal{D}^t_{np}$\}; $\mathcal{D}_{np} \leftarrow \mathcal{D}_{np} \cup \{\mathcal{D}^t \setminus \mathcal{D}^t_{p}\}$.
    \State Compute the loss function \( \mathcal{L}^{np}(\theta^t, \theta^*, \mathcal{D}) \) and \( \mathcal{L}^{p}(\theta^t, \mathcal{D}_{p}) \) as defined in~\ref{eq:papa_np} and~\ref{eq:papa_p_1} respectively.
    \State Update the fine-tune diffusion model parameters: \\ \( \theta^{t+1} \leftarrow \theta^t - \eta \nabla_\theta \mathcal{L}_{\mathrm{PAPA}}(\theta_t, \theta^*, \mathcal{D}_{p}, \mathcal{D}_{np} ) \).
\EndFor
\State \textbf{Output:} Final parameters \( \theta_{\mathcal{B}} \).
\end{algorithmic}
\end{algorithm}
Where $\beta$ controls the balance between exploration and exploitation. Importantly, $\beta$ introduces a crucial degree of freedom, allowing independent control over alignment, sample quality, and diversity. This explicit control enables PAPA to remain efficient even when preferences span diverse classes. 
We outline the \emph{PAPA} algorithm in~\ref{alg:1} and provide an illustrative overview in Fig.~\ref{fig:PAPA}. Next, we present \emph{EPAPA}, an enhanced version of PAPA that accelerates the fine-tuning process while reducing the computational cost.

\cite{zhong2024diffusion} observed that a pre-trained diffusion model serves as an effective universal denoiser for lightly corrupted data, adept at identifying and correcting subtle distortions. This capability results in enhanced generation quality when the fine-tuned model is replaced by the original pre-trained one, especially in scenarios with low distortion. However, the suboptimal performance of fine-tuned models for lightly corrupted data points to potential issues such as overfitting, mode collapse, or catastrophic forgetting. Experiments conducted with varying denoising steps demonstrate that the fine-tuning objective should emphasize high-level shaping, which is tied to domain-specific characteristics.

Building on this insight, we propose a sampling strategy that leverages the pre-trained model, which excels at low-level denoising, with the fine-tuned model, which specializes in high-level shaping skills. The inverse relationship between the optimality of the pre-trained and fine-tuned diffusion models is formally justified by the following Theorem.

\begin{figure*}[t] 
    \centering
        \centering
        \includegraphics[width=0.9\linewidth]{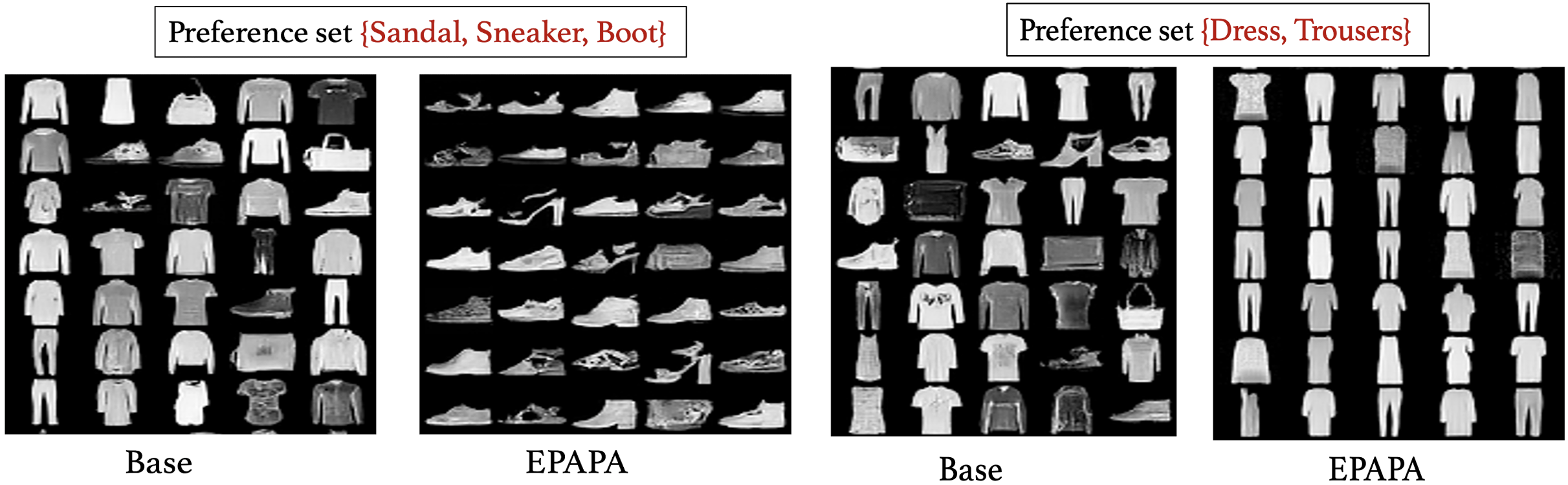}
        \caption{Alignment results for diverse preference set from Fashion MNIST dataset.}
        \label{fig:pa}
        \smallskip
        \centering
        \includegraphics[width=0.9\linewidth]{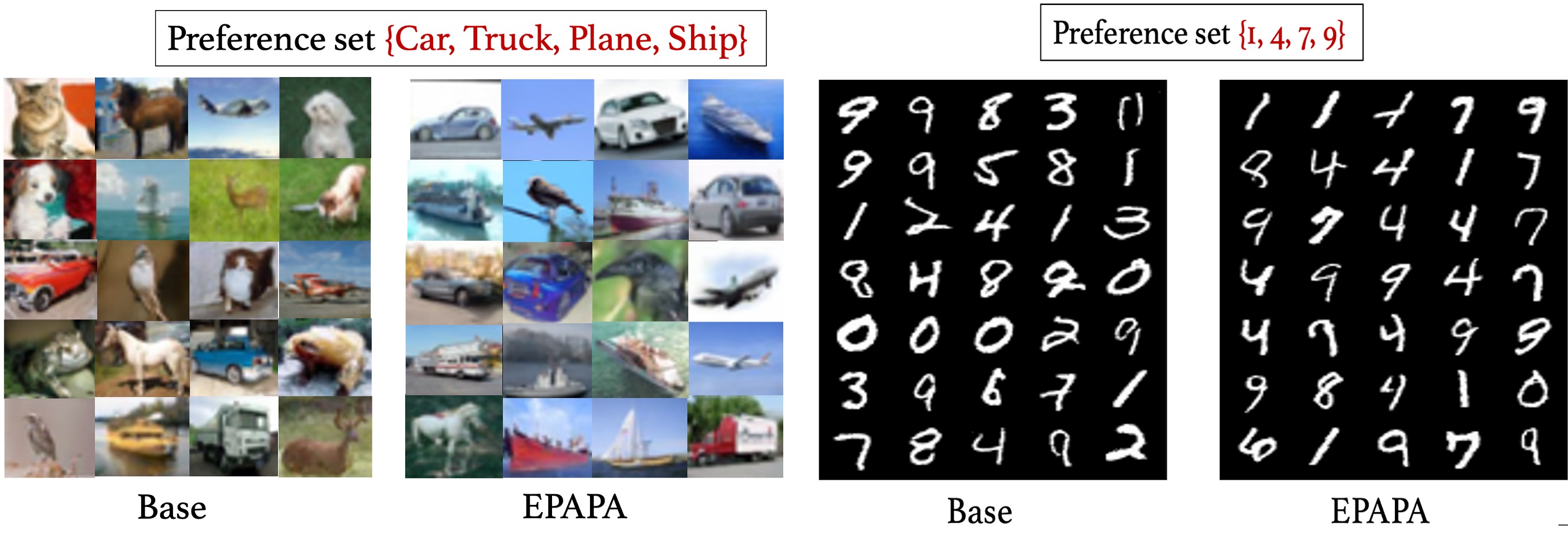}
        \caption{Alignment with different preference set from CIFAR-10 and MNIST.}
        \label{fig:pa-right}
\end{figure*}

\begin{theorem}
    Suppose a diffusion model with $ \lim_{t \to 0} \alpha_t = 1 \quad \text{and} \quad \lim_{t \to T} \alpha_t = 0$ over finite samples. Then the ideal denoiser \( F \) satisfies: (1) \(\lim_{t \to 0} F(x_t) = \arg \min_{x_0} \|x_0 - x_t\|\), i.e., the closest sample in the dataset. (2) \(\lim_{t \to T} F(x_t) = \mathbb{E}_{x_0 \sim p_D(x_0)}[x_0]\), i.e., the mean of the data distribution.
\label{th:2}
\end{theorem}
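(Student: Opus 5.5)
The plan is to compute the ideal denoiser in closed form for a finite dataset and then take the two limits directly. Let the dataset be $\{x^{(1)},\dots,x^{(m)}\}$, so that $p_D(x_0)=\frac1m\sum_{i}\delta_{x^{(i)}}(x_0)$. I read the $\alpha_t$ in the statement as the cumulative signal coefficient $\bar\alpha_t$ of the forward process, so that $x_t=\sqrt{\bar\alpha_t}\,x_0+\sqrt{1-\bar\alpha_t}\,\epsilon_t$. Here $\bar\alpha_t\to1$ is the low-noise end and $\bar\alpha_t\to0$ is the pure-noise end.

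\textbf{Step 1: closed form of the ideal denoiser.} The MSE training objective of the Preliminaries is minimized by $\epsilon_\theta(x_t,t)=\mathbb{E}[\epsilon_t\mid x_t]$. By the linear relation $x_0=(x_t-\sqrt{1-\bar\alpha_t}\,\epsilon_t)/\sqrt{\bar\alpha_t}$, the corresponding $x_0$-prediction is
\[
F(x_t)=\mathbb{E}[x_0\mid x_t].
\]
Applying Bayes' rule with the Gaussian likelihood $x_t\mid x_0\sim\mathcal N(\sqrt{\bar\alpha_t}x_0,(1-\bar\alpha_t)I)$ and the empirical prior gives
\[
F(x_t)=\sum_{i=1}^m w_i(x_t)\,x^{(i)},\qquad
w_i(x_t)=\frac{\exp\!\bigl(-\|x_t-\sqrt{\bar\alpha_t}x^{(i)}\|^2/(2(1-\bar\alpha_t))\bigr)}{\sum_{j}\exp\!\bigl(-\|x_t-\sqrt{\bar\alpha_t}x^{(j)}\|^2/(2(1-\bar\alpha_t))\bigr)}.
\]
This is a softmax over negative squared distances with temperature $1-\bar\alpha_t$. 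Both claims then reduce to limits of these weights.

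\textbf{Step 2: the low-noise limit $\bar\alpha_t\to1$.} I will hold $x_t=x$ fixed and assume $x$ has a unique nearest data point $x^{(k)}$; the tie set has Lebesgue measure zero and can be excluded. Write $d_i(t)=\|x-\sqrt{\bar\alpha_t}x^{(i)}\|^2$. Then $d_i(t)\to\|x-x^{(i)}\|^2$, so for $i\ne k$ the gap $d_i(t)-d_k(t)$ tends to some $\Delta_i>0$ and stays above $\Delta_i/2$ for $t$ near $0$. Hence
\[
\frac{w_i}{w_k}=\exp\!\bigl(-(d_i-d_k)/(2(1-\bar\alpha_t))\bigr)\to0,
\]
which gives $w_k\to1$ and $F(x)\to x^{(k)}=\arg\min_{x_0}\|x_0-x\|$ over the dataset.

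\textbf{Step 3: the high-noise limit $\bar\alpha_t\to0$.} Expanding the square,
\[
d_i=\|x\|^2-2\sqrt{\bar\alpha_t}\,\langle x,x^{(i)}\rangle+\bar\alpha_t\|x^{(i)}\|^2.
\]
The $\|x\|^2$ term is common to all $i$ and cancels in the softmax. The remaining terms tend to $0$, and the temperature $1-\bar\alpha_t$ tends to $1$. So every exponent difference tends to $0$, each $w_i\to1/m$, and $F(x)\to\frac1m\sum_i x^{(i)}=\mathbb{E}_{x_0\sim p_D}[x_0]$.

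\textbf{Main obstacle.} The limits themselves are elementary; the delicate part is pinning down the statement. First, one must fix the $\alpha_t$ versus $\bar\alpha_t$ convention and the fact that $F$ is the $x_0$-prediction rather than the $\epsilon$-prediction. Second, one must specify the mode of convergence. I will state pointwise convergence for fixed $x$, since the limit in (1) fails on Voronoi boundaries and need not be uniform near them. Alternatively, one can treat $x_t$ as a random variable along the forward process; then, with probability $1$, it converges to the data point it was generated from, and Step 2 applies along that path.
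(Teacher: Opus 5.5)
Your proposal is correct and follows essentially the same route as the paper: the ideal $x_0$-predictor is the posterior mean $\mathbb{E}[x_0\mid x_t]$, which for a finite dataset is a Gaussian-weighted (softmax) average of the data points. Both claims then follow because these weights concentrate on the unique nearest point as $\alpha_t\to 1$ and become uniform as $\alpha_t\to 0$, and your reading of $\alpha_t$ as the cumulative coefficient matches the paper's proof.
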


\vspace{5pt}

\noindent
\textbf{Remark:} Derivation for the above Theorem can be found in~\citep{zhong2024diffusion}. We have added the proof in the Appendix for completeness. According to Th.~\ref{th:2}, as $t \rightarrow 0$, a model trained on a dataset $\mathcal{D}$ can effectively perform zero-shot denoising within the vicinity of the support set $\mathrm{supp}(\mathcal{D})$. As the training dataset grows in scale, the coverage of $\mathrm{supp}(\mathcal{D})$ broadens, empowering diffusion models to serve as robust zero-shot denoisers for data associated with small $t$. This implies that a pre-trained diffusion model, trained on a large-scale dataset, is particularly adept at low-level denoising, as lightly corrupted samples remain close to the support of $\mathcal{D}$. Interestingly, as $t \rightarrow T$, the diffusion model’s generalization power is profoundly shaped by the distribution distance $\mathrm{dist}$($\mathbb{E}_{\mathcal{D}}[x_0], \mathbb{E}_{\mathcal{D}^{ft}}[x_0 ]$), where $\mathcal{D}$ is the dataset used for pre-training and $\mathcal{D}^{ft}$ represents the fine-tuned dataset, defined as $\mathcal{D}^{ft} = \mathcal{D}_{p} \cup \mathcal{D}_{np}$. This insight underscores the need for a fine-tuned diffusion model that excels at denoising heavily corrupted data, honing high-level shaping capabilities.

Inspired by these observations and theoretical insights, we propose \emph{EPAPA}, which utilizes the low-level denoising capabilities of a pre-trained model during the reverse diffusion phase, allowing the fine-tuning process (as defined in~\ref{eq:papa_np},~\ref{eq:papa_p_1}) to focus solely on handling higher levels of noise. Specifically, we exclude the denoising loss terms from both loss functions, \(\mathcal{L}^{np}(\theta^t, \theta^*, \mathcal{D}) \) and \( \mathcal{L}^{p}(\theta^t, \mathcal{D}_{p}) \), as defined in Eqn.~\ref{eq:papa_np} and~\ref{eq:papa_p_1} respectively, for values of \( t \leq K \), where \( 1 \ll K \ll T \). 
This omission speeds up the fine-tuning process and cuts down on computational costs significantly, making it a more efficient alternative to \emph{PAPA}.
Furthermore, by concentrating exclusively on higher noise levels during the fine-tuning process, we empower the fine-tuned diffusion model to specialize in denoising heavily corrupted data, thereby enhancing its high-level shaping capabilities. As a result, during the reverse diffusion process, we employ the fine-tuned model as the denoiser for higher noise levels (larger $t$) and the pre-trained model for lower noise levels. Our sampling process, which utilizes both models, is outlined in~\ref{alg:sampling_epapa}. Moreover, by utilizing a pre-trained model, \emph{EPAPA} inherently promotes both diversity and quality by inheriting the properties of the pre-trained model. 





\section{Experiments and Results}

\begin{algorithm}[t]
\small
\caption{Sampling Strategy of EPAPA}
\begin{algorithmic}[1]
\State \textbf{Input:} \( x_T \sim \mathcal{N}(0, I) \), Pre-trained model $\theta^{*}$, Fine-tuned model $\theta$, $\{\alpha_{t}\}_{t=0}^T, \{\sigma_{t}\}_{t=0}^T$, $K$, $z \sim \mathcal{N}(0, I)$.
\For{$t = T, T-1, \dots, 1$}
    \If{$t \geq K$} 
      \State Calculate \( \epsilon_\theta(x_t, t) \) using the \emph{fine-tuned model}.
      \State Update \( x_{t-1} \) according to:
      $x_{t-1} = \frac{1}{\sqrt{\alpha_t}} \left( x_t - \frac{1 - \alpha_t}{\sqrt{1 - \bar{\alpha_t}}} \epsilon_\theta(x_t, t) \right) + \sigma_t z,$
    \Else 
      \State Calculate \( \epsilon_{\theta^{*}}(x_t, t) \) using the \emph{pre-trained model}.
      \State Update \( x_{t-1} \) according to:
      $x_{t-1} = \frac{1}{\sqrt{\alpha_t}} \left( x_t - \frac{1 - \alpha_t}{\sqrt{1 - \bar{\alpha_t}}} \epsilon_{\theta^{*}}(x_t, t) \right) + \sigma_t z,$
      \EndIf      
\EndFor 
\State \textbf{Output:} Generated sample $x_0$.
\end{algorithmic}
\label{alg:sampling_epapa}
\end{algorithm}

\begin{table*}[t]
\caption{Quality \& diversity analysis with diverse pref. sets for MNIST \& F-MNIST.}
\label{tb:mnist_fmnist_combined}
    \scriptsize          
    \centering

    \begin{minipage}[t]{0.28\linewidth}
        \centering
        \begin{tabular}{p{0.84cm}p{1.4cm}p{1.0cm}}
            \toprule
            \multicolumn{3}{c}{\footnotesize MNIST: set $s_1$} \\
            \midrule
            Method & $\>\>\>\>$FID$\downarrow$ & IS$\uparrow$ \\
            \midrule
            Base   & 39.23$\pm$2.3  & 2.0$\pm$.02 \\
            D3PO   & 41.1$\pm$7.9    & 2.0$\pm$.03 \\
            \hline
            PAPA   & 25.01$\pm$5.2   & 2.0$\pm$.03 \\
            EPAPA  & \textbf{18.87$\pm$5.5} & \textbf{2.0$\pm$.05} \\
            \bottomrule
        \end{tabular}
    \end{minipage}\hfill
    \begin{minipage}[t]{0.22\linewidth}
        \centering
        \begin{tabular}{p{1.5cm}p{1.1cm}}
            \toprule
            \multicolumn{2}{c}{\footnotesize MNIST: set $s_2$} \\
            \midrule
            FID$\downarrow$ & IS$\uparrow$ \\
            \midrule
            30.09$\pm$12.6  & 1.8$\pm$.08 \\
            59.9$\pm$39.8   & \textbf{2.0$\pm$.07} \\
            \hline
            24.08$\pm$12.9  & 1.9$\pm$.05 \\
            \textbf{22.98$\pm$12.1} & 1.9$\pm$.06 \\
            \bottomrule
        \end{tabular}
    \end{minipage}\hfill
    \begin{minipage}[t]{0.22\linewidth}
        \centering
        \begin{tabular}{p{1.5cm}p{1.1cm}}
            \toprule
            \multicolumn{2}{c}{\footnotesize F-MNIST: set $s_1$} \\
            \midrule
            FID$\downarrow$ & IS$\uparrow$ \\
            \midrule
             119.07$\pm$2.2  & 2.6$\pm$.04 \\
             154.33$\pm$50.8 & \textbf{3.4$\pm$.54} \\
            \hline
             80.70$\pm$31.4  & 2.8$\pm$.41 \\
             \textbf{36.72$\pm$16.9} & 2.8$\pm$.34 \\
            \bottomrule
        \end{tabular}
    \end{minipage}\hfill
    \begin{minipage}[t]{0.23\linewidth}
        \centering
        \begin{tabular}{p{1.5cm}p{1.1cm}}
            \toprule
            \multicolumn{2}{c}{\footnotesize F-MNIST: set $s_2$} \\
            \midrule
            FID$\downarrow$ & IS$\uparrow$ \\
            \midrule
            102.39$\pm$2.38 & 2.8$\pm$0.06 \\
            111.62$\pm$10.1 & \textbf{3.6$\pm$0.11} \\
            \hline
            72.76$\pm$14.2  & 2.7$\pm$0.54 \\
            \textbf{40.67$\pm$8.8} & 2.4$\pm$0.30 \\
            \bottomrule
        \end{tabular}
    \end{minipage}

\end{table*}

\begin{figure*}[t]
    \centering
    \begin{subfigure}{0.49\linewidth}
        \centering
        \includegraphics[width=\linewidth]{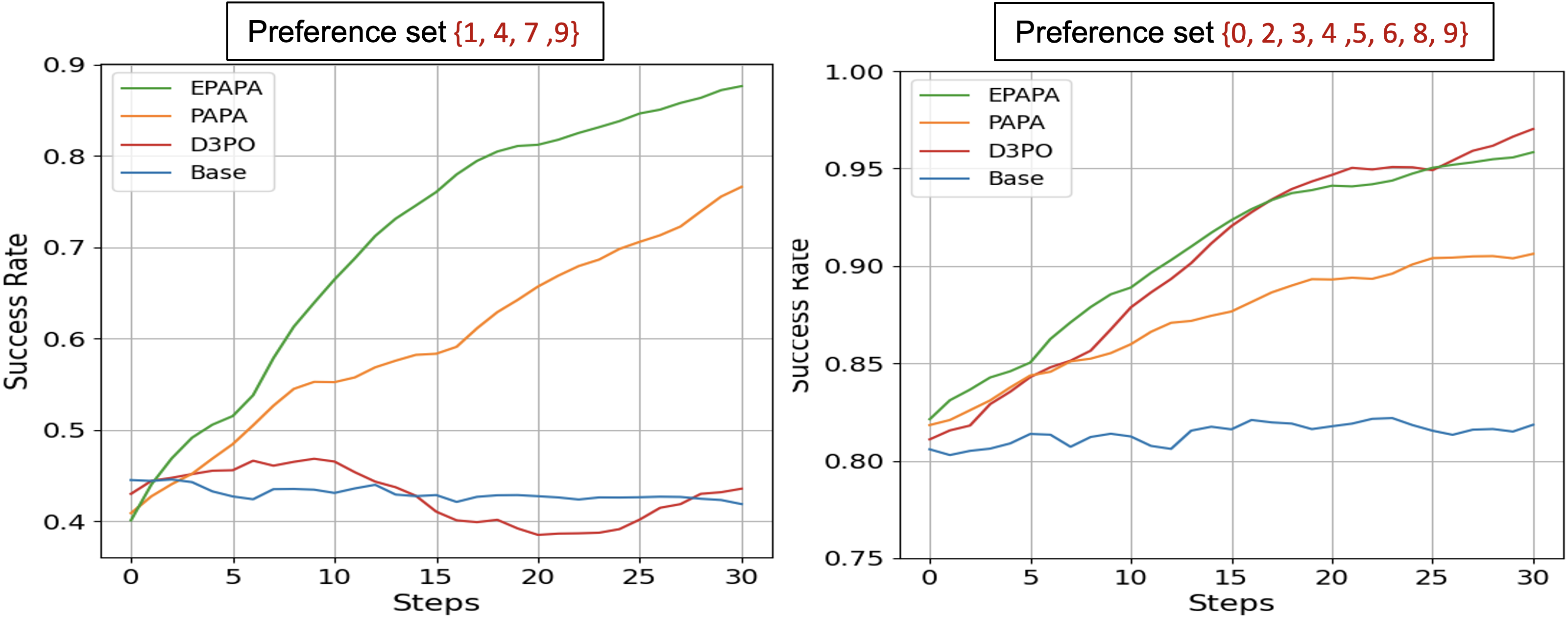}
    \end{subfigure}
    \hfill
    \begin{subfigure}{0.49\linewidth}
        \centering
        \includegraphics[width=\linewidth]{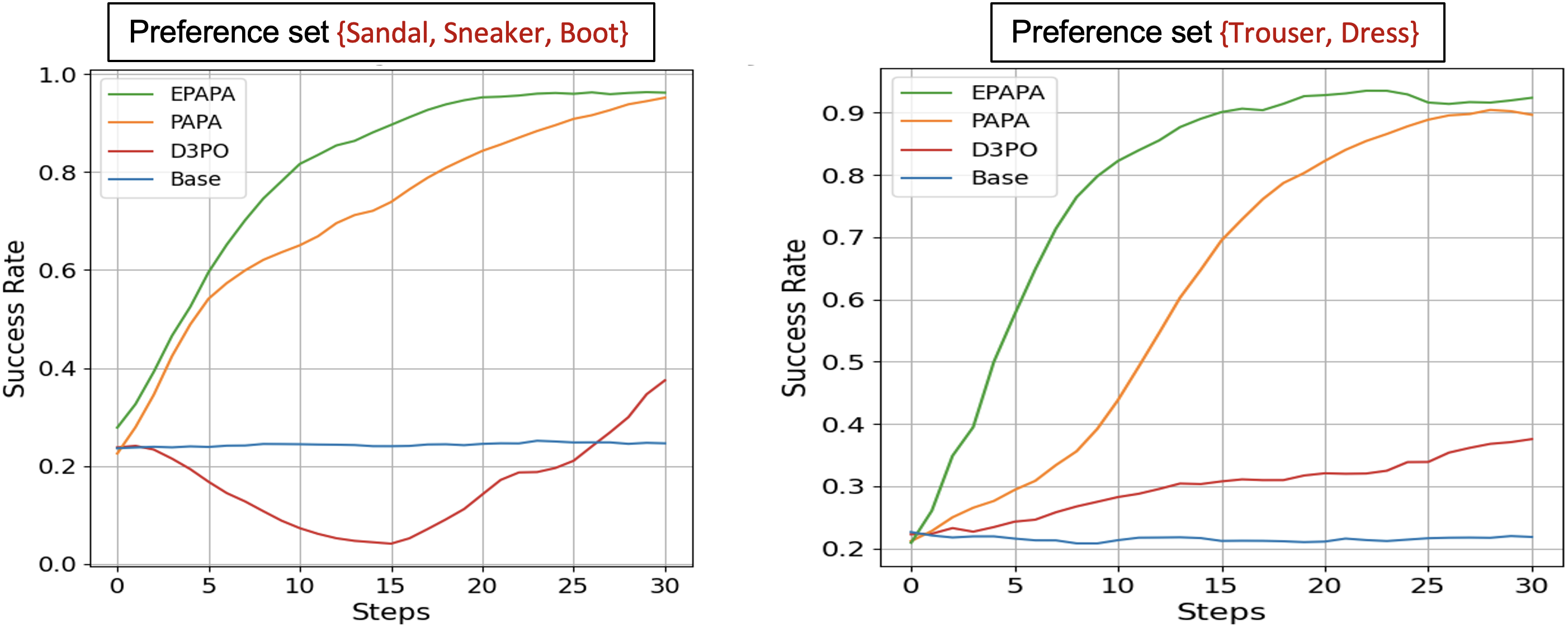}
    \end{subfigure}
 \caption{Success rates across interaction steps for different preference sets from MNIST (left two) and Fashion MNIST (right two).}
 \label{fig:sr}
\end{figure*}

\paragraph{\underline{Evaluation Metric}}
Since \emph{PAPA} aims to align with user preferences while maintaining sample diversity and preserving sample quality, we evaluate its performance using the following metrics, each addressing a specific aspect of the assessment. We evaluate preference alignment using \emph{Success Rate} (SR), which is the proportion of generated samples that are preferred by the user. Specifically, in the $t$-th interaction step, SR is calculated as: $SR = \frac{\mathcal{D}^t_p}{\mathcal{D}^t_p + \mathcal{D}^t_{np}}$.
For a comprehensive evaluation of both the diversity and quality of the generated samples throughout the fine-tuning process, we rely on FID~\citep{heusel2017gans} and IS~\citep{salimans2016improved} metrics. The FID score is calculated by comparing the generated samples from the fine-tuned model with real samples produced by the pre-trained diffusion model trained solely on the preference set as the real distribution. 
We compute the FID and IS scores after each fine-tuning step throughout the interaction phase. Once we accumulate these scores, we calculate their mean and standard deviation to summarize the model's performance. At each online step, we generate 5K samples for evaluation but present only 8 for user feedback, reflecting real-world scenarios where users interact with a limited subset of products before making a decision. This approach models the feedback process with a minimal sample set, aligning with the constraints of actual user interactions. We employ a pre-trained, near-accurate class prediction model or score models as a surrogate for real human feedback. Following sections present details on feedback models and evaluations. 


\begin{figure*}[!t] 
    \centering
    \begin{minipage}{0.48\textwidth} 
        \centering
        \includegraphics[width=1.0\linewidth]{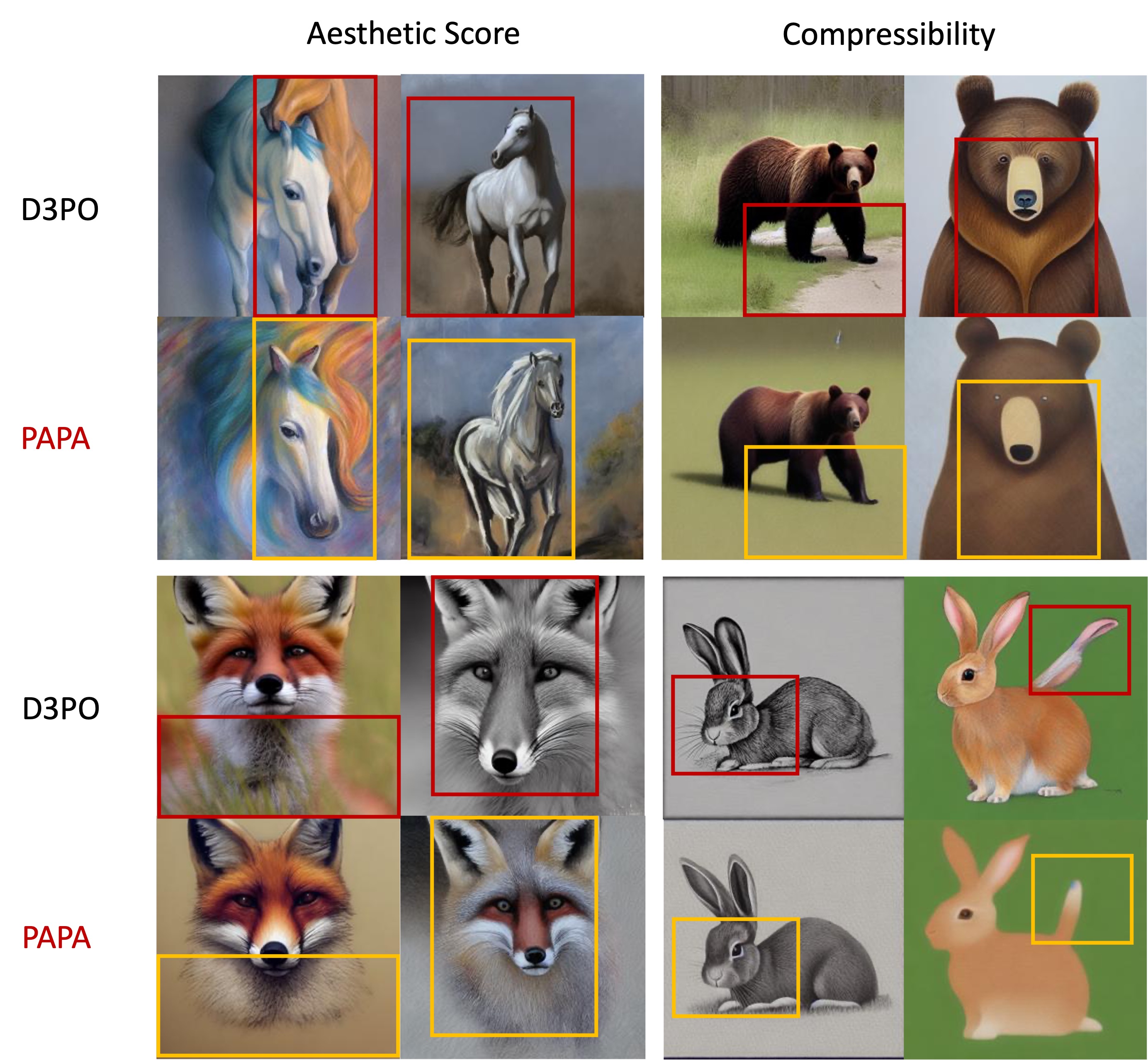}
        \caption{Images with \textit{Aesthetic Quality} and \textit{Compressibility} as preference.}
        \label{fig:stable_diff}
    \end{minipage}\hfill 
    \begin{minipage}{0.48\textwidth} 
        \centering
        \includegraphics[width=0.65\linewidth]{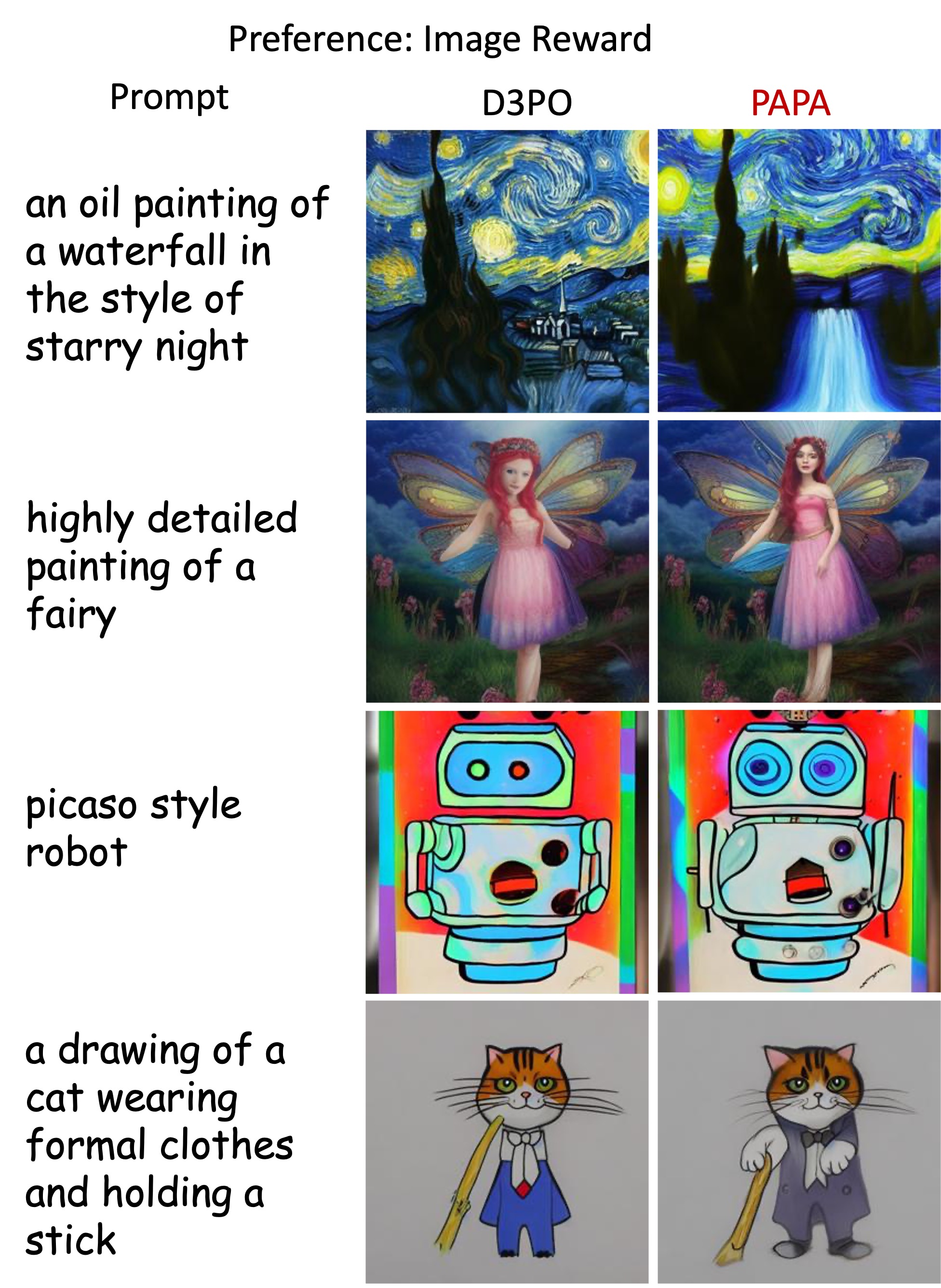}
        \caption{\textit{Image Reward} as preference.}
        \label{fig:stable_diff1}
    \end{minipage}
\end{figure*}

\paragraph{\underline{Datasets and Baselines}} 

We assess the performance of PAPA and EPAPA across: (i) MNIST, (ii) Fashion-MNIST, and (iii) CIFAR-10. We further validate PAPA on preference alignment tasks that are hard to capture through text prompts alone, such as image compressibility and aesthetic quality~\cite{yang2024using}. We demonstrate that our method effectively improves prompt-image alignment through Image Reward \cite{xu2024imagereward} on \textit{Art} subset of Parti Prompt dataset \cite{yuscaling}. We benchmark against D3PO~\cite{yang2024using}, a SOTA reward-free baseline.

\subsection{Class-targeted Alignment}
\label{r:m}
We first evaluate performance on two Fashion-MNIST preference sets with complementary challenges: $s1 \in \{Boot, Sandal, Sneaker\}$, and $s2 \in \{Dress, Trouser\}$. Figure~\ref{fig:sr} (\emph{right}) reports SR for both cases, showing that EPAPA’s SR steadily improves with fine-tuning and additional preference data, outperforming the pre-trained \emph{Base} model.   Similar results on MNIST Fig.~\ref{fig:sr} (\emph{left}) with diverse preference sets $s1 \in \{ 1, 4, 7, 9\}$, where the target set is small, and $s2 \in \{0, 2, 3, 4, 5, 6, 8, 9\}$, which covers most classes, further highlight PAPA’s advantage. Notably, while D3PO performs competitively on $s2$, it fails on $s1$, indicating PAPA’s robustness under larger distribution shifts. D3PO struggles to align diverse human preferences primarily because, unlike PAPA, it lacks an explicit mechanism to independently control sample quality and diversity apart from alignment. Additional analysis in the appendix explaining why D3PO struggles to align diverse preferences. We evaluate sample quality and diversity using FID and IS, with results presented in Table~\ref{tb:mnist_fmnist_combined}. The result indicates that our approach effectively maintains high quality (as our approach achieves a similar IS score compared to \emph{Base} (trained on pref. set)) while ensuring diversity within the preferred set (as suggested by lower FID), as visualized in Figure~\ref{fig:pa-right}. These empirical outcomes also reveal the relative benefits of EPAPA compared to PAPA. Therefore, unless stated otherwise, we use EPAPA with $K=400$ throughout the paper. Additionally, we evaluate our approach on CIFAR-10 and observe consistent results (in Appendix).  Fig.~\ref{fig:pa} and Fig.~\ref{fig:pa-right} showcases EPAPA-generated samples, further confirming its efficacy in producing preference-aligned samples. Qualitative comparisons and more generated samples are in the Appendix.
\noindent

\begin{figure}[t]
    \centering
    \includegraphics[width=0.98\linewidth]{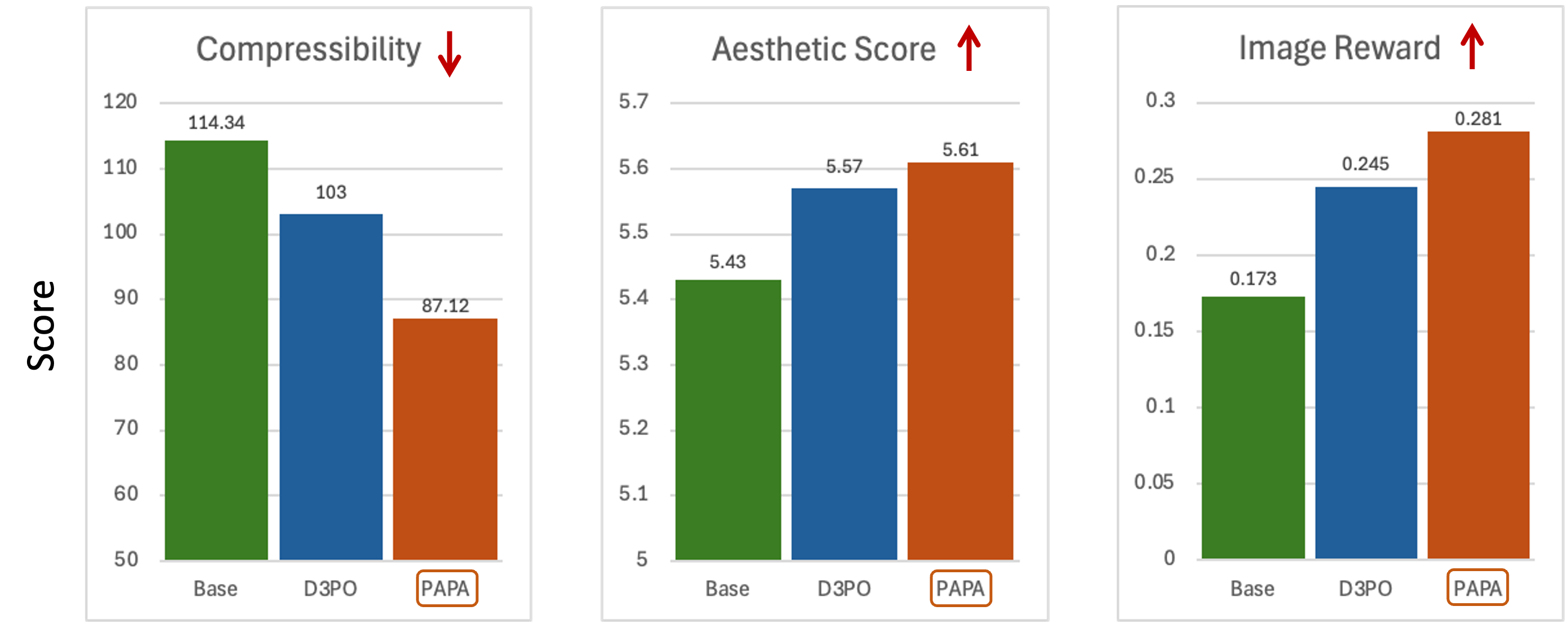}
    \caption{\small Reward comparison on fine-grained objective and prompt-image alignment.}
    \label{fig:score_comparison}
\end{figure}

\begin{figure*}[t]
    \centering
    
    \begin{minipage}[c]{0.50\textwidth}
        \centering
        \scriptsize
        \captionof{table}{Effect of  $\mathcal{L}^{\mathrm{QPDE}}$ (top), $K$ (bottom).}
        \label{tb:k}
        \begin{tabular}{p{1.8cm}p{2.2cm}p{1.5cm}}
      \hline 
      \toprule
      Obj. & FID$\downarrow$ & IS$\uparrow$ \\
      \midrule
      $\mathcal{L}^{noqp}$ & \scriptsize{39.93 $\pm$ 18.3} & \scriptsize{2.8 $\pm$ 0.38} \\
      $\mathcal{L}_{\mathrm{PAPA}}$ & \scriptsize{36.72 $\pm$ 16.9} & \scriptsize{2.9 $\pm$ 0.34} \\
      \bottomrule 
      \medskip
    \end{tabular}
    \label{tb:diversity}
    
    \begin{tabular}{p{1.8cm}p{2.2cm}p{1.5cm}}
      \hline 
      \toprule
      $K$ & FID$\downarrow$ & IS$\uparrow$ \\
      \midrule
      100 & \scriptsize{50.62$\pm$13.25} & \scriptsize{2.9$\pm$0.30} \\
      400 & \scriptsize{\textbf{36.72$\pm$16.9}} & \scriptsize{\textbf{2.9$\pm$0.34}} \\
      700 & \scriptsize{435.40$\pm$106.5} & \scriptsize{1.4$\pm$0.70} \\
      \bottomrule
    \end{tabular}
    
  \end{minipage}
    \begin{minipage}[c]{0.48\textwidth}
        \centering
        \includegraphics[width=0.65\linewidth]{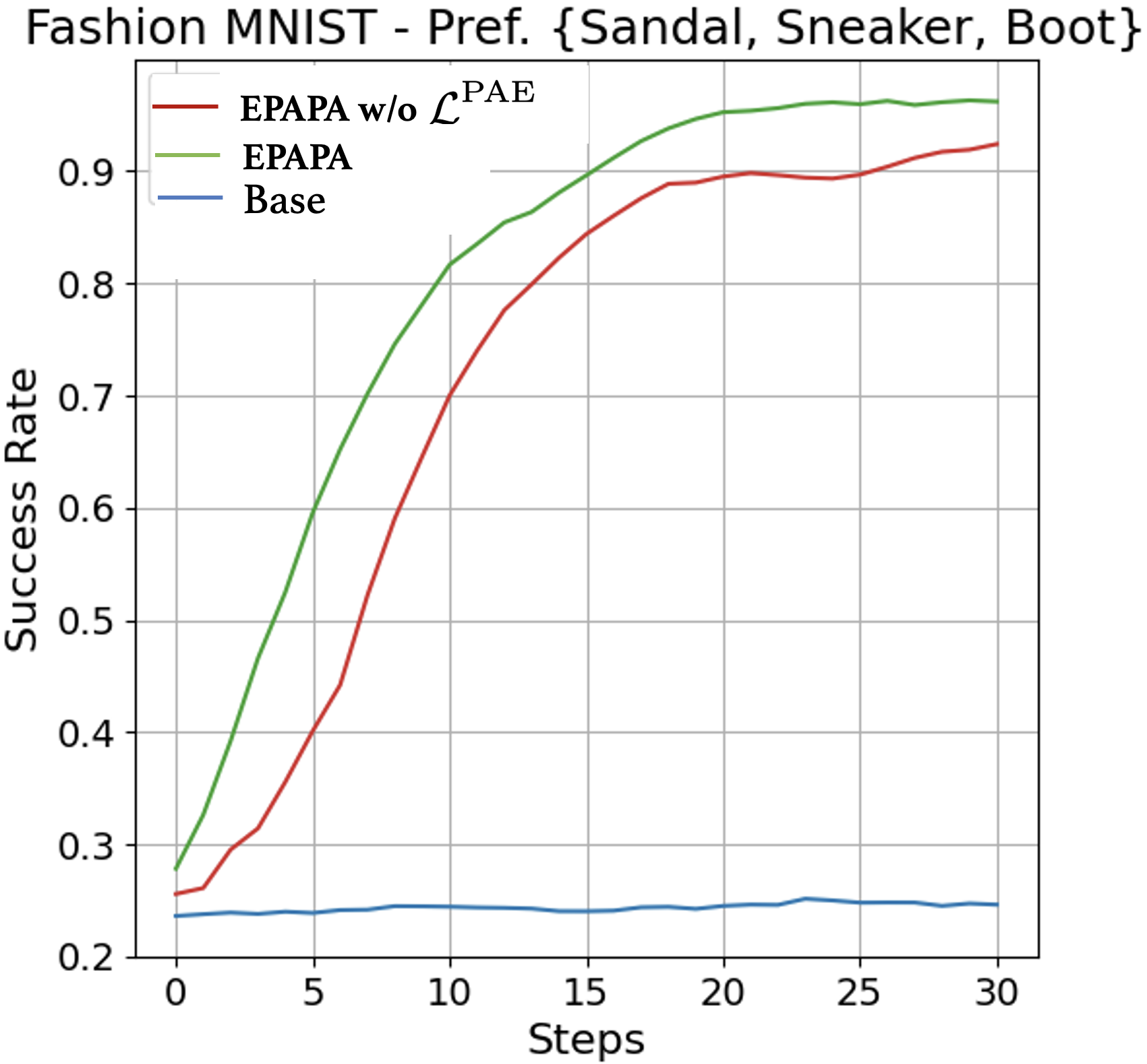} 
        \captionof{figure}{\small Ablation $\mathcal{L}^{\mathrm{PAE}}$.}
        \label{fig:right_figure}
    \end{minipage}
\end{figure*}

\begin{figure}[t]
    \centering
    \includegraphics[width=0.9\linewidth]{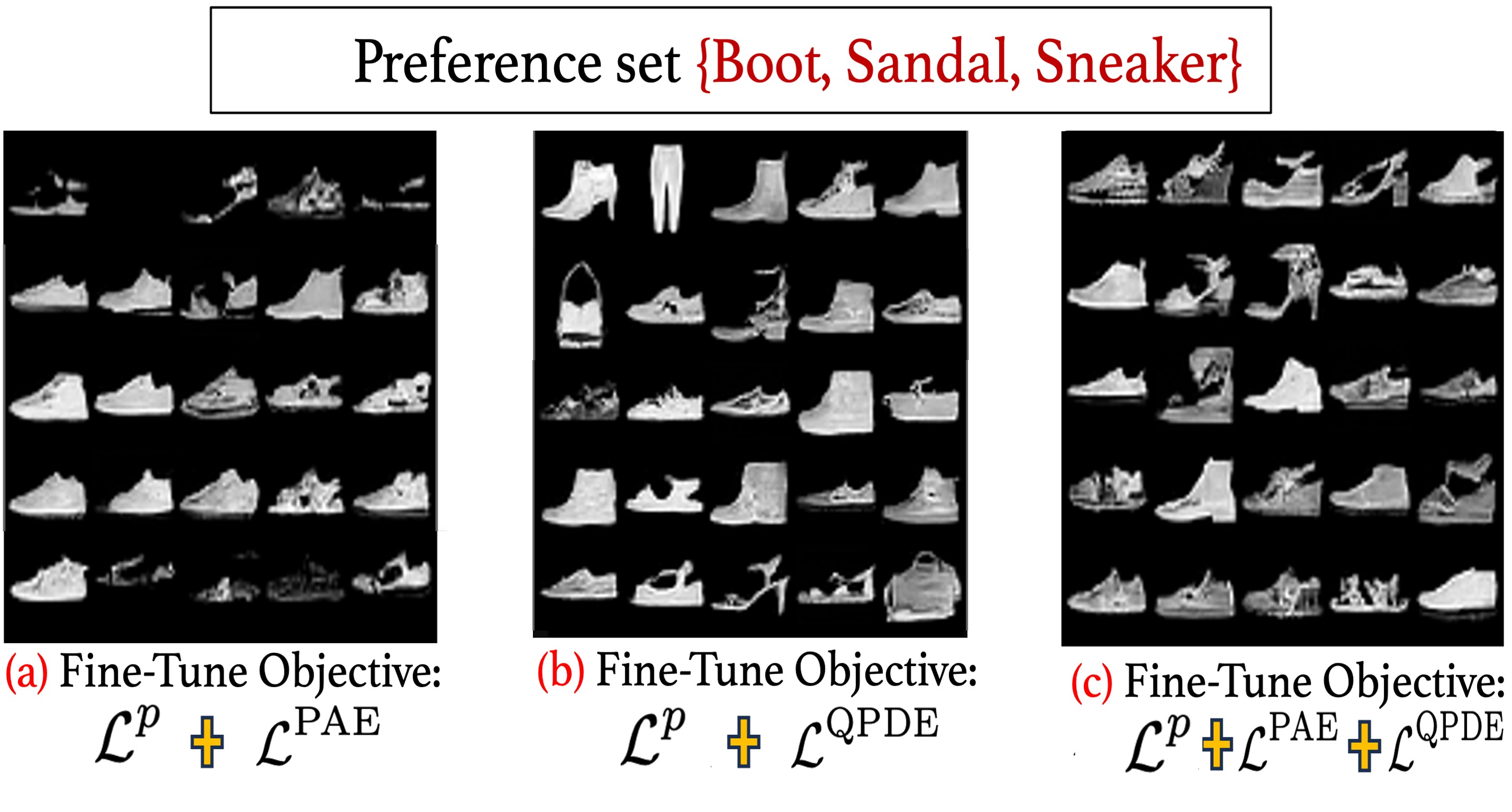} 
    \captionof{figure}{Qualitative performance comparison across variants with individual components removed. Results validate the necessity of each component.}
    \label{fig:left_figure}
\end{figure}

\subsection{Fine Grained Preference Alignment}
We further evaluate the efficacy of PAPA on fine-grained preference alignment tasks with pre-defined quantifiable objectives. We consider three diverse alignment tasks: a) \emph{\textbf{Compressibility}}, in which an image with a smaller size is regarded as better; b) \emph{\textbf{Aesthetic Quality}}, in which we use the LAION aesthetic score predictor~\cite{schuhmann2022laion} to automatically assign aesthetic ratings to images, 
enabling objective reward assignment based on visual quality without requiring human evaluation; c) \emph{\textbf{Prompt-Image Alignment}}, that assesses how well the generated images align with the given text prompts. We use Image Reward \cite{xu2024imagereward}, a general-purpose text-to-image human preference score, to generate feedback and evaluation. For these analysis, we use Stable Diffusion v1.5 as the base model and compare its performance with D3PO~\cite{yang2024using}. Our experimental findings, as reported in Fig.~\ref{fig:score_comparison}, indicate that PAPA rapidly adapts to the preference with comparatively fewer online feedback responses than D3PO (we train on 10-20 times less number of samples), justifying its suitability for sophisticated preference alignment in interactive environments. 
In Figs.~\ref{fig:stable_diff} and \ref{fig:stable_diff1}, we qualitatively compare samples generated from identical Gaussian noises to ensure a fair comparison between the two models. PAPA consistently better reflects the qualitative criteria of the preference objective and is less prone to deformities. For the compressibility preference, although it is quantified via file size, qualitatively it corresponds to images with simplified subject details and smoother, often single-color backgrounds while preserving overall structure. In Fig.~\ref{fig:stable_diff} (rabbit and bear examples), D3PO retains fine details and often introduces structural distortions, whereas PAPA produces coherent, meaningful shapes and removes superfluous details leading to smaller-size images. For the aesthetic preference, which is measured via the LAION aesthetic score~\cite{schuhmann2022laion}, PAPA generates images with sharp subject details, blurred backgrounds (bokeh), appealing composition, and harmonious colors. In Fig.~\ref{fig:stable_diff} (horse and fox examples), D3PO exhibits various deformities and lack of vibrancy, while PAPA preserves body structure, adds stylistic components, enhances fine details, and more clearly separates foreground from background, demonstrating rapid adaptation across fine-grained alignment tasks. Figure \ref{fig:stable_diff1} shows that PAPA generates more semantically aligned images in text-to-image setting. It is evident from the generated images that PAPA outperforms D3PO considering artistic style, object instantiation, spatial arrangement, concept realization, and fidelity. 


\subsection{Analysis and Ablation}
\noindent\underline{\textbf{$\mathcal{L}^{\mathrm{QPDE}}$ as Quality Preserver \& Sample Diversity:}}\label{par:qpde}
To analyze the role of $\mathcal{L}^{\mathrm{QPDE}}$ in preserving quality and diversity, we compare our approach with a variant, $\mathcal{L}^{noqp}$, which is identical to ours but excludes the $\mathcal{L}^{\mathrm{QPDE}}$ loss term from $\mathcal{L}^{np}$ as defined in  Eqn.~\ref{eq:papa_np}. For this comparison, we select $\{$Sandal, Sneaker, Boot$\}$ as the preference set, with results shown in Table~\ref{tb:k} (top). Omitting the $\mathcal{L}^{\mathrm{QPDE}}$ term from $\mathcal{L}^{np}$ leads to a notable decline in both FID and IS scores. Qualitative visualizations, such as the comparison between Figures~\ref{fig:left_figure}(a) and~\ref{fig:left_figure}(c), further highlight how the quality-preserving term maintains the structure of generated samples while ensuring sample diversity. Additionally, we observe that excluding $\mathcal{L}^{\mathrm{QPDE}}$ does not affect SR, as both $\mathcal{L}^{noqp}$ and $\mathcal{L}_{\mathrm{PAPA}}$ achieves similar SR across the interaction steps, as discussed in the Appendix with  visualizations.\\
\noindent
\underline{\textbf{$\mathcal{L}^{\mathrm{PAE}}$ as Preference Alignment Enhancer:}}
To assess the impact of $\mathcal{L}^{\mathrm{PAE}}$ on
preference alignment, we compare our approach with a variant, $\mathcal{L}^{nopa}$, which is identical to ours but omits the $\mathcal{L}^{\mathrm{PAE}}$ term from $\mathcal{L}^{np}$ as defined in Eqn.~\ref{eq:papa_np}. For this comparison, we select $\{$Sneaker, Sandal, Boot$\}$ as the preference set, with results presented in Figure~\ref{fig:right_figure}. Excluding $\mathcal{L}^{\mathrm{PAE}}$ from $\mathcal{L}^{np}$ results in a consistent and substantial decline in SR, underscoring the importance of $\mathcal{L}^{\mathrm{PAE}}$ as the preference-alignment enhancer. Qualitative visualizations (see Figures~\ref{fig:left_figure}(b) and~\ref{fig:left_figure}(c)) further demonstrate how the inclusion of this term helps produce samples that better align with the user preferences. \\ 

\noindent
\textbf{\underline{Effect of $K$:}}
To realize the optimal value of $K$, we conduct
experiments with different choices of K, and compare the performance in terms of FID and IS scores. For this analysis, we select $\{$Sandal, Sneaker, Boot$\}$ as the preference set, with results in Table~\ref{tb:k} (down). Our experiments reveal that both extremely high and low values of $K$ are ineffective. A very high $K$ is problematic because pre-trained diffusion models struggle to denoise effectively at elevated noise levels. Conversely, a very low $K$ is also not ideal, as these fine-tuned models are optimized for higher noise levels and struggle with low noise levels. Therefore, an optimal range for $K$ lies in the middle, where the model performs most effectively. More analysis on the effect of K, $\beta$, and PAPA's generalizability under non-binary feedback is in Appendix.

\section{Conclusion} 
We introduce a novel feedback-efficient approach for fine-tuning diffusion models to achieve personalized active preference alignment. It enables diffusion models to align actively with diverse human preferences while preserving the quality of generated samples. We validate its effectiveness through comprehensive experiments and ablation studies across diverse class-conditioned and fine-grained alignment tasks. We hope this work will pave the way for future research into personalized active preference alignment. Deploying PAPA across a range of scientific problems, from drug discovery to materials generation, would be a potential direction to explore.\\


\noindent
\textbf{Acknowledgments:}
This research used resources of the Oak Ridge Leadership Computing Facility at the Oak Ridge National Laboratory, which is supported by the Advanced Scientific Computing Research programs in the Office of Science of the U.S. Department of Energy under Contract No. DE-AC05-00OR22725. This work was partially supported by the NSF (IIS-2214141), ARO (W911NF-25-1-0059), ONR (N000142412663), Foresight Institute, and Amazon. We thank William Hsu and Beocat High-Performance Computing (HPC) cluster at Kansas State University for initial compute support.

This manuscript has been authored in part by UT-Battelle, LLC, under contract DE-AC05-00OR22725 with the US Department of Energy (DOE). The US government retains and the publisher, by accepting the article for publication, acknowledges that the US government retains a nonexclusive, paid-up, irrevocable, worldwide license to publish or reproduce the published form of this manuscript, or allow others to do so, for US government purposes. DOE will provide public access to these results of federally sponsored research in accordance with the DOE Public Access Plan ( https://www.energy.gov/doe-public-access-plan ).

\balance
{
    \small
    \bibliographystyle{ieeenat_fullname}
    \bibliography{main}
}

\appendix
\onecolumn
\section*{\centering {PAPA: Online Personalized Active Preference Alignment}}



\section{Omitted Proofs}

\subsection{Proof of Theorem 4.1} \label{app:th1}

\begin{proof}

Assume the distribution induced by the pre-trained generative model $\mathcal{P}(x_0)$. Given the standard DDPM loss function:
\[
\mathcal{L} (\theta) = \mathbb{E}_{x_0 \sim \mathcal{P}(x_0)} \underbrace{\left[ \sum_{t=1}^{T} \frac{1- \alpha_t}{\alpha_t (1 - \bar{\alpha}_{t-1})} \left\| \epsilon_0 - \epsilon_\theta(x_t, t) \right\|^2 \right]}_{\textit{Let's denote it as $L(x_0)$}}  \tag{1}
\]
Here, $L(x_0)$ is a loss function defined for sample $x_0$. Now, if we fine-tune the parameters of the diffusion model (i.e., $\theta$) with the objective defined in 1, then according to~\cite{peng2019advantage} the induced data distribution $\mathcal{P}^{induced}(x_0)$ can be defined as:
\[
\mathcal{P}^{induced}(x_0) \propto \mathcal{P}(x_0) \exp (-\gamma L(x_0))    \tag{2}
\]
with $\gamma > 0$ being a positive constant. Now, consider a weighted probability distribution $\mathcal{P}^{weighted}(x_0) = w(x_0)\mathcal{P}(x_0)$. In that case, we can represent the induced distribution $\mathcal{P}^{induced}(x_0)$ in terms of $\mathcal{P}(x_0)$, as follows:
\[
\mathcal{P}^{induced}(x_0) \propto w(x_0)\mathcal{P}(x_0) \exp (-\gamma L(x_0))    \tag{3}
\]
We derive 3, by observing that 
\[
\mathcal{L}(\theta) = \mathbb{E}_{x_0 \sim \mathcal{P}^{weighted}(x_0)}[L(x_0)]\]
This, in turn, implies that 
\[\mathcal{P}^{induced}(x_0) \propto \underbrace{\mathcal{P}^{weighted}(x_0)}_{\textit{$w(x_0)\mathcal{P}(x_0)$}} \exp (-\gamma L(x_0))\]

We can re-write $\mathcal{L}^{p}(\theta)$ as follows:
\[
 = \mathbb{E}_{t \sim U[0,1], x_0 \sim \mathcal{P}(x_0), x_t \sim p_t(x_t|x_0)} \left[ w(x_0) \cdot \underbrace{\frac{1- \alpha_t}{\alpha_t (1 - \bar{\alpha}_{t-1})}}_{\textit{= c > 0}} \| \epsilon_{\theta}(x_t, t) - \epsilon_0) \|^2 \right]  \tag{4}
\]
Where
\[
w(x_0) = \gamma \quad \text{if} \quad x_0 \in \mathcal{D}_{P}, \quad \text{else} \quad w(x_0) = 0.
\]
Note that, $\gamma > 0$. Assuming, there is at least a single element in $\mathcal{D}_p$, implies $w(x_0) > 0$.
\[
\mathcal{L}^p(\theta) = \int_0^1  \int_{\mathcal{X}} w(x_0) \mathcal{P}(x_0) \int_{\mathcal{X}} \| \epsilon_{\theta}(x_t; t) - \epsilon_0) \|^2 \mathcal{P}_t(x_t | x_0) \, dx_t \, dx_0 \, dt \tag{5}
\]
Then, we can define the reweighted distribution \( p^{\text{induced}}(x_0) \) as:
\[
p^{\text{induced}}(x_0) = \frac{w(x_0) \mathcal{P}(x_0)}{Z}, \quad \text{where} \quad Z = \int w(x_0) \mathcal{P}(x_0) \, dx_0 \tag{6}
\]
Since \( w(x_0) \geq 0 \) and \( Z < \infty \), \( p^{\text{induced}}(x_0) \) is a valid pdf over \( \mathcal{X} \).

Substituting $p^{\text{induced}}(x_0)$ (from 6) into the loss function in 5, we have:
\[
\mathcal{L}^p(\theta) = Z \int_0^1  \int_{\mathcal{X}} p^{\text{induced}}(x_0) \int_{\mathcal{X}} \| \epsilon_{\theta}(x_t; t) - \epsilon_0) \|^2 \mathcal{P}_t(x_t | x_0) \, dx_t \, dx_0 \, dt \tag{7}
\]

We can re-write the above expression as:
\[
 = Z \cdot \mathbb{E}_{t \sim U[0,1], x_0 \sim \mathcal{P}^{\text{induced}}(x_0), x_t \sim p_t(x_t|x_0)} \left[ \| \epsilon_{\theta}(x_t, t) - \epsilon_0) \|^2 \right] \>\>\>\> \tag{8} 
\]
Therefore, the gradient of the above expression is:
\[
\mathcal{L}^p(\theta) = (Z)\> \mathbb{E}_{t \sim U[0,1], x_0 \sim \mathcal{P}^{\text{induced}}(x_0), x_t \sim p_t(x_t|x_0)} \left[ \nabla_{\theta} \| \epsilon_{\theta}(x_t, t) - \epsilon_0) \|^2\right]  \tag{9}
\]
 Note that the normalizing factor Z does not depend on the optimization variable $\theta$. Hence, it does not affect the optimization process. Therefore, minimizing $\mathcal{L}^p(\theta)$ is equivalent to minimizing the expected loss under the distribution $\mathcal{P}^{\text{induced}}(x_0)$.
 
 Following a similar result as in Equation 2, we can express the learned data distribution by optimizing Equation 9, denoted as $\mathcal{P}^1_{\theta}(x_0)$ as follows:
 \[
 \mathcal{P}^{1}_{\theta}(x_0) \propto \mathcal{P}^{\text{induced}}(x_0) \exp (-\gamma \mathcal{L}^p(x_0;\theta) ) \tag{10}
 \]
 Utilizing the relation from Equation 6, we can write:
 \[
 \mathcal{P}^{1}_{\theta}(x_0) \propto w(x_0) \mathcal{P}(x_0) \exp (-\gamma \mathcal{L}^p(x_0;\theta) )   \tag{11}
 \]  
 Assuming the $\mathcal{L}^p(x_0;\theta)$ 
 loss converges to $0$ after the fine-tuning step, we can write the above expression as:
 \[
 \mathcal{P}^{1}_{\theta}(x_0) \propto w(x_0) \mathcal{P}(x_0)   \tag{12}
 \] 
 By normalizing, with normalization constant Z, we can write:
 \[
 \mathcal{P}^{1}_{\theta}(x_0) = \frac{w(x_0) \mathcal{P}(x_0)}{Z}   \tag{13}
 \]    

 We can now iteratively repeat these steps, and by induction, express the learned data distribution after $H$ update steps (i.e., $H$ online interaction steps) as follows:
 \[
 \mathcal{P}^{H}_{\theta}(x_0) = \frac{w(x_0)^{H} \mathcal{P}(x_0)}{Z^H} , \text{where,} Z^H = \int_{\mathcal{X}}w(x_0)^H\mathcal{P}(x_0) dx_0   \tag{14}
 \]   
 This completes the proof.
\end{proof}

\subsection{Limiting case: as $H \rightarrow \infty$}\label{app:th1_inf}
\begin{proof}
    \textbf{(Limiting Case).} Let's now consider an interesting limiting case where \( H \to \infty \) :
    
Assume \( w(x_0) \) attains its maximum at \( x_0^* \).  We define
\[
\phi(x_0) = \frac{w(x_0)}{w(x_0^*)} \tag{15}
\]
which implies \( \phi(x_0^*) = 1 \) since \( \frac{w(x_0^*)}{w(x_0^*)} = 1 \), and \( 0 \leq \phi(x_0) < 1 \) for \( x_0 \neq x_0^* \), since \( w(x_0) < w(x_0^*) \).

Then, we can rewrite \( \mathcal{P}^H_{\theta}(x_0) \) using \( \phi(x_0) \) as:
\[
\mathcal{P}^H_{\theta}(x_0) = \frac{[w(x_0^*)]^H \phi(x_0)^H \mathcal{P}(x_0)}{Z^H} \tag{16}
\]

Then, for \( x_0 \neq x_0^* \), we have \( \phi(x_0)^H \to 0 \) as \( H \to \infty \) since \( \phi(x_0) < 1 \). 

Thus, 
\[ \mathcal{P}^H_{\theta}(x_0) \to 0 \>\>\>\>\text{as}\>  H \to \infty ,  \forall x_0 \neq x_0^*    \tag{17}
\] 

And for \( x_0 = x_0^* \), we have \( \phi(x_0^*)^H = 1 \), and \( \mathcal{P}^H_{\theta}(x_0^*) = \frac{[w(x_0^*)]^H \mathcal{P}(x_0^*)}{Z^H} \).

The normalization constant can be written as:
\[
Z^H = \int_{\mathcal{X}} w(x_0)^H \mathcal{P}(x_0) \, dx_0  = [w(x^*_0)]^H \int_{\mathcal{X}} \phi(x_0)^H \mathcal{P}(x_0) dx_0\tag{18}
\]

Similarly, we can obtain \( Z^H \approx [w(x_0^*)]^H \mathcal{P}(x_0^*) \). 

As \( H \to \infty \): Then, we have the limit behavior:
For \( x_0 \neq x_0^* \):
\[
\mathcal{P}^H_{\theta}(x_0) = \frac{[w(x_0^*)]^H \phi(x_0)^H\mathcal{P}(x_0)}{[w(x_0^*)]^H \mathcal{P}(x_0^*)} \quad = \quad \frac{\phi(x_0)^H \mathcal{P}(x_0)}{\mathcal{P}(x_0^*)} \rightarrow 0. \tag{19}
\]
For \( x_0 = x_0^* \):
\[
\mathcal{P}^H_{\theta}(x_0^*) = \frac{[w(x_0^*)]^H \mathcal{P}(x_0^*)}{[w(x_0^*)]^H \mathcal{P}(x_0^*)}\quad = 1. \tag{20}
\]

Therefore, we conclude:
\[
\lim_{H \to \infty} \mathcal{P}^H_{\theta}(x_0) = \delta(x_0 - x_0^*). \tag{21}
\]

\end{proof}

\subsection{Proof of Theorem 4.2}\label{app:th2}
\begin{proof}
The objective function in equation 7 (in the main paper) can be expressed as follows: 

\[
D_{KL} \left( \phi(\theta) \, \parallel \, Z \cdot \frac{\mathcal{P}(\theta | \mathcal{D}_p, \mathcal{D}_{np})}{\mathcal{P}(\mathcal{D}_{np} \mid \theta)} \right) = \mathbb{E}_{\phi(\theta)} \left[ \ln \frac{\phi(\theta) \mathcal{P}(\mathcal{D}_{np} \mid \theta)}{Z \cdot \mathcal{P}(\theta | \mathcal{D}_p, \mathcal{D}_{np})} \right] \tag{1} 
\]

\[
= \mathbb{E}_{\phi(\theta)} \left[ \ln \frac{\phi(\theta)}{ \mathcal{P}(\theta | \mathcal{D}_p, \mathcal{D}_{np})} \right] + \mathbb{E}_{\phi(\theta)} \left[ \ln \mathcal{P}(\mathcal{D}_{np} | \theta) \right] \textit{; (we ignore $Z$ as it is independent of $\theta$)} \tag{2}
\]

\[
=\mathbb{E}_{\phi(\theta)} \left[ \ln \frac{\phi(\theta)}{ \mathcal{P}(\theta | \mathcal{D}_p, \mathcal{D}_{np})} \right] + \mathbb{E}_{\phi(\theta)} \left[ \sum_{x_0 \in \mathcal{D}_{np}} \underbrace{\ln \mathcal{P}(x_0 | \theta)}_{\textit{assuming i.i.d assumption on the data}} \right]   \tag{3}
\]


\[
=\underbrace{D_{KL} \left( \phi(\theta) \, \parallel \, \mathcal{P}(\theta | \mathcal{D}_p, \mathcal{D}_{np}) \right)}_{\textit{term-I}} + \underbrace{\mathbb{E}_{\phi(\theta)} \left[ \sum_{x_0 \in \mathcal{D}_{np}} \ln \mathcal{P}(x_0 | \theta) \right]}_{\textit{term-II}}  \tag{4}
\]

Now, let's focus on $\textit{term-I}$. The $\textit{term-I}$ in Eq. (4) with the parameter prior distribution \( \phi(\theta) = \prod_{i=1}^{d} \mathcal{N}(\theta_i, \sigma^2) \) and the posterior distribution with full data \( \mathcal{P}(\theta | \mathcal{D}_p, \mathcal{D}_{np}) = \prod_{i=1}^{d} \mathcal{N}(\mu^*, \sigma^{*2}) \) becomes:

\[
D_{KL}(\phi(\theta) \parallel \mathcal{P}(\theta | \mathcal{D}_{np}, \mathcal{D}_p)) = \sum_{i=1}^{d} \left( \ln \frac{\sigma_i^*}{\sigma_i} + \frac{\sigma^{2}_i + (\theta_i - \mu_i^*)^2}{2\sigma_i^{*2}} - \frac{1}{2} \right) \tag{5}
\]
Equation (5) is derived using the following well-known standard lemma, which is presented in \cite{mackay2003information} and stated as follows:

\begin{lemma}
The Kullback-Leibler divergence for two multivariate normal distributions can be expressed as follows:
\[
\begin{aligned}
D_{KL}\!\left( \mathcal{N}(x; \mu_x, \Sigma_x) \parallel \mathcal{N}(y; \mu_y, \Sigma_y) \right)
= \frac{1}{2} \Big(
&\log \lvert \Sigma_y \rvert
- \log \lvert \Sigma_x \rvert
- d \\
&+ \operatorname{tr}(\Sigma_y^{-1} \Sigma_x) \\
&+ (\mu_y - \mu_x)^{\top} \Sigma_y^{-1} (\mu_y - \mu_x)
\Big)
\end{aligned}
\]

\end{lemma}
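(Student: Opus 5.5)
We compute the divergence directly from the two Gaussian densities and reduce it to one-dimensional Gaussian integrals. Write $\mu=\mu_x-\mu_y$, and note that
\[
D_{KL}=\mathbb{E}_{x\sim\mathcal{N}(\mu_x,\Sigma_x)}\bigl[\log \mathcal{N}(x;\mu_x,\Sigma_x)-\log\mathcal{N}(x;\mu_y,\Sigma_y)\bigr].
\]
The log-densities are $-\tfrac{d}{2}\log 2\pi-\tfrac12\log|\Sigma|-\tfrac12 (x-m)^\top\Sigma^{-1}(x-m)$. The constants $\tfrac d2\log 2\pi$ cancel, and the determinant terms give $\tfrac12(\log|\Sigma_y|-\log|\Sigma_x|)$ immediately.

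The core of the argument is evaluating the two expected quadratic forms. For this we use the trace identity $\mathbb{E}[z^\top A z]=\operatorname{tr}(A\,\mathbb{E}[zz^\top])$.

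First, take $z=x-\mu_x$, so that $\mathbb{E}[zz^\top]=\Sigma_x$. The expectation of $(x-\mu_x)^\top\Sigma_x^{-1}(x-\mu_x)$ is then $\operatorname{tr}(I_d)=d$, and with the prefactor $-\tfrac12$ it contributes $-\tfrac d2$.

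Second, write $x-\mu_y=z+(\mu_x-\mu_y)$ and expand. The cross term vanishes because $\mathbb{E}z=0$. The remaining pieces are $\operatorname{tr}(\Sigma_y^{-1}\Sigma_x)+(\mu_y-\mu_x)^\top\Sigma_y^{-1}(\mu_y-\mu_x)$, and these enter with the prefactor $+\tfrac12$.

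Collecting the determinant term, the $-\tfrac d2$ term, and these two pieces gives exactly the stated formula.

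The only step that needs care is the expansion of the shifted quadratic form: we must use $\mathbb{E}[z]=0$ to kill the cross term, and we must track the signs so that the $-d$ and $+\operatorname{tr}$ terms come out correctly. Everything else is bookkeeping.

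Finally, we specialize to diagonal covariances, as needed for Eq. (5). Each term then factorizes coordinatewise, which yields $\sum_i\bigl(\ln\tfrac{\sigma_i^*}{\sigma_i}+\tfrac{\sigma_i^2+(\theta_i-\mu_i^*)^2}{2\sigma_i^{*2}}-\tfrac12\bigr)$.
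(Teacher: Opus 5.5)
Your derivation is correct. The paper gives no proof of this lemma; it invokes it as a standard fact and cites MacKay's textbook, so the only comparison is with that citation. Your computation is the standard derivation:
\begin{itemize}
\item cancel the $\tfrac d2\log 2\pi$ constants;
\item evaluate both expected quadratic forms via $\mathbb{E}[z^\top A z]=\operatorname{tr}(A\,\mathbb{E}[zz^\top])$;
\item kill the cross term with $\mathbb{E}[z]=0$.
\end{itemize}
What you add over the citation is self-containedness. It also makes explicit the one hypothesis the statement leaves implicit: $\Sigma_x,\Sigma_y$ must be positive definite, so that both densities exist and $\Sigma_y^{-1}$ is defined.

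Your diagonal specialization is also right and reproduces Eq.~(5) in the proof of Theorem~\ref{th:1}, including the term $\sigma_i^2/(2\sigma_i^{*2})$. The main-text statement of that theorem has a typo there, writing $\sigma_i^2/(2\sigma_i^2)$.

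Two cosmetic points. Your opening sentence promises a reduction to one-dimensional Gaussian integrals, but you never do that; the trace identity handles the general case directly. The abbreviation $\mu=\mu_x-\mu_y$ is introduced but never used. Drop both.
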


Next, $term-II$ can be expressed using Monte Carlo estimation as follows:
\[
\mathbb{E}_{\phi(\theta)} \left[ \sum_{x_o \in \mathcal{D}_{np}} \ln \mathcal{P}(x_0 | \theta) \right] \approx \frac{1}{B} \sum_{b=1}^{B} \ln \mathcal{P}(x_0 | \theta_m) \tag{6}
\]

\[
\geq \quad \frac{1}{B} \sum_{b=1}^{B} \left[ - \sum_{x_0 \in \mathcal{D}_{np}} \sum_{t=2}^T \mathbb{E}_{q(x_t \mid x_o)} \text{D}_{KL} [\left( q(x_{t-1} | x_t, x_0) \| p_\theta(x_{t-1} | x_t) \right)] \right] \tag{7}
\]

Equation (7) is derived using the following Lemma~\cite{lu2022fast}:
\begin{lemma} The log-likelihood under the backward diffusion process kernel is given by:
\[
\ln p_{\theta}(x_0) \geq - \sum_{t=2}^T \mathbb{E}_{q(x_t \mid x_0)} \left[ \text{D}_{KL} \left( q(x_{t-1} \mid x_t, x_0) \| p_\theta(x_{t-1} \mid x_t) \right) \right]
\]
\end{lemma}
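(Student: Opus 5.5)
The plan is to derive the statement from the standard variational (ELBO) decomposition of the DDPM log-likelihood, as in~\cite{ho2020denoising}, and then account for the two boundary terms that the lemma omits.

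\textbf{Step 1 (ELBO via Jensen).} Write $p_\theta(x_0)=\int p(x_T)\prod_{t=1}^T p_\theta(x_{t-1}\mid x_t)\,dx_{1:T}$. Multiply and divide by the forward kernel $q(x_{1:T}\mid x_0)=\prod_{t=1}^T q(x_t\mid x_{t-1})$. Jensen's inequality for the concave $\ln$ then gives
\[
\ln p_\theta(x_0)\ \ge\ \mathbb{E}_{q(x_{1:T}\mid x_0)}\Big[\ln p(x_T)+\sum_{t=1}^{T}\ln \tfrac{p_\theta(x_{t-1}\mid x_t)}{q(x_t\mid x_{t-1})}\Big].
\]

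\textbf{Step 2 (telescoping into posterior KLs).} For $t\ge 2$, use the Markov property and Bayes' rule to rewrite
\[
q(x_t\mid x_{t-1})=q(x_t\mid x_{t-1},x_0)=\frac{q(x_{t-1}\mid x_t,x_0)\,q(x_t\mid x_0)}{q(x_{t-1}\mid x_0)}.
\]
The ratios $q(x_t\mid x_0)/q(x_{t-1}\mid x_0)$ telescope to $q(x_T\mid x_0)/q(x_1\mid x_0)$, and the $q(x_1\mid x_0)$ factor cancels against the $t=1$ denominator. Regrouping the remaining terms and taking the inner expectations in conditional form (over $q(x_t\mid x_0)$, then $q(x_{t-1}\mid x_t,x_0)$) yields the familiar decomposition
\[
\ln p_\theta(x_0)\ \ge\ \mathbb{E}_{q(x_1\mid x_0)}[\ln p_\theta(x_0\mid x_1)]-D_{KL}\big(q(x_T\mid x_0)\,\|\,p(x_T)\big)-\sum_{t=2}^{T}\mathbb{E}_{q(x_t\mid x_0)}D_{KL}\big(q(x_{t-1}\mid x_t,x_0)\,\|\,p_\theta(x_{t-1}\mid x_t)\big).
\]

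\textbf{Step 3 (boundary terms; the main obstacle).} Passing from this decomposition to the stated inequality requires handling the two boundary terms, and this is where care is needed.

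The prior-matching term $L_T=D_{KL}(q(x_T\mid x_0)\|p(x_T))$ does not depend on $\theta$. It also vanishes in the regime assumed in the Preliminaries, where $\bar\alpha_T\approx 0$ makes $q(x_T\mid x_0)\approx\mathcal{N}(0,I)=p(x_T)$. So dropping it is exact in that limit and otherwise costs only a $\theta$-independent constant. That is harmless for the optimization use in Eq.~(7) of the proof, which is stated with ``$\gtrsim$''.

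The reconstruction term $\mathbb{E}_{q(x_1\mid x_0)}\ln p_\theta(x_0\mid x_1)$ is the delicate one: dropping it is only valid if it is nonnegative. I would first try the convention of~\cite{ho2020denoising,lu2022fast} and treat $L_0$ as a separate decoder term excluded from the bound. Under a sufficiently concentrated continuous decoder ($\sigma_1\to 0$), the Gaussian log-density at $x_0$ is nonnegative, which justifies the sign. If that fails, I would state the lemma as holding up to these $\theta$-independent or lower-order terms. This weaker form is precisely what the paper's subsequent ``$\gtrsim$'' requires.

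\textbf{Step 4 (closed form, optional).} For later use, I would note that both $q(x_{t-1}\mid x_t,x_0)$ and $p_\theta(x_{t-1}\mid x_t)$ are Gaussians with the same variance. Each KL is therefore a weighted squared mean difference, and in the $\epsilon$-parametrization this becomes $\frac{1-\alpha_t}{\alpha_t(1-\bar\alpha_{t-1})}\|\epsilon_0-\epsilon_\theta(x_t,t)\|^2$ up to a constant factor. This is the summand that appears in Theorem~\ref{th:1}.
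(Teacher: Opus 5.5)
Your proposal follows the paper's proof step for step. You apply Jensen's inequality to $\ln \mathbb{E}_{q(x_{1:T}\mid x_0)}[p(x_{0:T})/q(x_{1:T}\mid x_0)]$, use the Markov property and Bayes' rule to replace $q(x_t\mid x_{t-1})$ by the posterior $q(x_{t-1}\mid x_t,x_0)$, telescope to the standard reconstruction/prior/denoising-KL decomposition, and then discard the two boundary terms.

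The only difference is that the paper drops those terms with the bare remark that they are ``insignificant.'' You instead correctly flag that this step needs justification. Precisely, the stated inequality follows whenever $\mathbb{E}_{q(x_1\mid x_0)}[\ln p_\theta(x_0\mid x_1)] \ge D_{KL}(q(x_T\mid x_0)\,\|\,p(x_T))$, and otherwise holds only up to these terms. So your version is, if anything, more careful than the original.
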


\begin{proof}
Let \( x_0 \) denote the true data sample. In order to increase the log-likelihood of the data, we maximize the ELBO as follows:
\[
\ln p(x_0) = \ln \int p(x_{0:T}) \, dx_{1:T} \, 
\]
\[
= \ln \int \frac{p(x_{0:T})}{q(x_{1:T} \mid x_0)} \, q(x_{1:T} \mid x_0) dx_{1:T} \, 
\]
\[
= \ln \mathbb{E}_{q(x_{1:T} \mid x_o)} \frac{p(x_{0:T})}{q(x_{1:T} \mid x_0)}
\]

\[
\geq \mathbb{E}_{q(x_{1:T} \mid x_o)} \left[ \ln \frac{p(x_{0:T})}{q(x_{1:T} \mid x_0)} \right] \quad \text{(by applying Jensen's inequality)}
\]

\[
= \mathbb{E}_{q(x_{1:T} \mid x_o)} \left[ \ln \frac{p(x_{T})\prod_{t=1}^T p_{\theta}(x_{t-1} \mid x_t)}{\prod_{t=1}^T q(x_{t} \mid x_{t-1})} \right]
\]
\text{(Utilizing markovian property of forward process)}

\[
= \mathbb{E}_{q(x_{1:T} \mid x_o)} \left[ \ln \frac{p(x_{T}) p_{\theta}(x_0 \mid x_1)\prod_{t=2}^T p_{\theta}(x_{t-1} \mid x_t)}{q(x_{1} \mid x_{0})\prod_{t=2}^T q(x_{t} \mid x_{t-1})} \right]
\]

\[
= \mathbb{E}_{q(x_{1:T} \mid x_o)} \left[ \ln \frac{p(x_{T}) p_{\theta}(x_0 \mid x_1)\prod_{t=2}^T p_{\theta}(x_{t-1} \mid x_t)}{q(x_{1} \mid x_{0})\prod_{t=2}^T q(x_{t} \mid x_{t-1}, x_0)} \right]
\]

\[
= \mathbb{E}_{q(x_{1:T} \mid x_o)} \left[ \ln \frac{p(x_{T}) p_{\theta}(x_0 \mid x_1)}{q(x_{1} \mid x_{0})} + \ln \prod_{t=2}^T \frac{ p_{\theta}(x_{t-1} \mid x_t)}{ q(x_{t} \mid x_{t-1}, x_0)} \right]
\]

\[
= \mathbb{E}_{q(x_{1:T} \mid x_o)} \left[ \ln \frac{p(x_{T}) p_{\theta}(x_0 \mid x_1)}{q(x_{1} \mid x_{0})} + \ln \prod_{t=2}^T \frac{ p_{\theta}(x_{t-1} \mid x_t)}{ \frac{q(x_{t-1} \mid x_{t}, x_0) q(x_t \mid x_0)}{q(x_{t-1} \mid x_0)}} \right] \quad \text{(Bayes' Rule)}
\]

\[
= \mathbb{E}_{q(x_{1:T} \mid x_o)} \left[ \ln \frac{p(x_{T}) p_{\theta}(x_0 \mid x_1)}{q(x_{1} \mid x_{0})} + \ln \prod_{t=2}^T \frac{ p_{\theta}(x_{t-1} \mid x_t)}{q(x_{t-1} \mid x_{t}, x_0)} + \ln \frac{ q(x_1 \mid x_0)}{ q(x_T \mid x_0)} \right]
\]

\[
= \mathbb{E}_{q(x_{1:T} \mid x_o)} \left[ \ln \frac{p(x_{T}) p_{\theta}(x_0 \mid x_1)}{q(x_{T} \mid x_{0})} + \sum_{t=2}^T \ln \frac{ p_{\theta}(x_{t-1} \mid x_t)}{q(x_{t-1} \mid x_{t}, x_0)} \right]
\]

\[
\begin{aligned}
= \mathbb{E}_{q(x_{1:T} \mid x_0)} [\ln p_{\theta}{(x_0 \mid x_1})] + \mathbb{E}_{q(x_{1:T} \mid x_0)} \left[ \ln \frac{p(x_T)}{q(x_T \mid x_0)} \right] \\ + \sum_{t=2}^T \mathbb{E}_{q(x_{1:T} \mid x_0)} \left[ \ln \frac{ p_{\theta}(x_{t-1} \mid x_t)}{q(x_{t-1} \mid x_{t}, x_0)} \right]
\end{aligned}
\]

\[
\begin{aligned}
= \mathbb{E}_{q(x_{1} \mid x_0)} [\ln p_{\theta}{(x_0 \mid x_1})] + \mathbb{E}_{q(x_{T} \mid x_0)} \left[ \ln \frac{p(x_T)}{q(x_T \mid x_0)} \right] \\
+ \sum_{t=2}^T \mathbb{E}_{q(x_{t}, x_{t-1} \mid x_0)} \left[ \ln \frac{ p_{\theta}(x_{t-1} \mid x_t)}{q(x_{t-1} \mid x_{t}, x_0)} \right]
\end{aligned}
\]

\[
\begin{aligned}
= \mathbb{E}_{q(x_{1} \mid x_0)} [\ln p_{\theta}{(x_0 \mid x_1})] - D_{KL}(q(x_T \mid x_0) || p(x_T)) \\ 
- \sum_{t=2}^T \mathbb{E}_{q(x_t \mid x_0)} [D_{KL}(q(x_{t-1} \mid x_t, x_0)|| p_{\theta}(x_{t-1} \mid x_t))]
\end{aligned}
\] 

\[
= - \sum_{t=2}^T \mathbb{E}_{q(x_t \mid x_0)} [D_{KL}(q(x_{t-1} \mid x_t, x_0)|| p_{\theta}(x_{t-1} \mid x_t))]
\] 
\text{(Other terms can be ignored as they are insignificant.)}
\end{proof}

Before continuing with $term-II$, we present a well-known lemma, which can be found in any diffusion model literature, such as in \cite{lu2022fast}:

\begin{lemma}
Assuming that in the forward diffusion process, the transition kernel is denoted as \( q(x_t | x_{t-1}) \), with the joint posterior distribution given by 
\[
q(x_{1:T} | x_0) = \prod_{t=1}^{T} q(x_t | x_{t-1}),
\]
where each \( q(x_t | x_{t-1}) = \mathcal{N}(x_t; \sqrt{\alpha_t} x_{t-1}, (1 - \alpha_t) I) \). Similarly, for the backward diffusion process, the transition kernel is denoted as \( p(x_{t-1} | x_t) \), with the joint distribution
\[
p(x_{0:T}) = p(x_T) \prod_{t=1}^{T} p_{\theta}(x_{t-1} | x_t),
\]
where \( p(x_T) = \mathcal{N}(x_T; 0, I) \). Thus, after optimizing the diffusion model, the sampling procedure proceeds by sampling Gaussian noise from \( p(x_T) \) and iteratively applying the denoising transitions \( p_{\theta}(x_{t-1} | x_t) \) for \( T \) steps to generate a new sample \( x_0 \). Assuming all the transition kernels are Gaussian, the following holds:

\begin{itemize}
    \item $ q(x_{t-1} | x_t, x_0) = \mathcal{N}(x_{t-1}; \mu_q(t), \sigma_q^2(t) I) \quad \text{with} \quad \mu_q(t) = \frac{1}{\sqrt{\alpha_t}} x_t - \frac{1 - \alpha_t}{\sqrt{1 - \overline{\alpha_t}} \sqrt{\alpha_t}} \epsilon_0$
    \item
     $p_\theta(x_{t-1}|x_t) = \mathcal{N}(x_{t-1}; \mu_\theta(t), \sigma_q^2(t) I) \quad
     \text{w/} \quad \mu_\theta(t) = \frac{1}{\sqrt{\alpha_t}} x_t - \frac{1 - \alpha_t}{\sqrt{1 - \overline{\alpha_t}} \sqrt{\alpha_t}} \cdot \epsilon_{\theta}(x_t, t)$
    \item $\sigma_q^2(t) = \frac{(1 - \alpha_t)(1 - \overline{\alpha}_{t-1})}{ (1 - \overline{\alpha}_t)}$

\end{itemize}
\end{lemma}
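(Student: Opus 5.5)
The plan is to treat the three bullets separately. The first and third follow from a direct Gaussian conjugacy computation. The second is a parameterization choice that we justify by matching it to the first.

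\textbf{Step 1 (marginals).} From the Markov forward kernel $q(x_t\mid x_{t-1})=\mathcal{N}(\sqrt{\alpha_t}x_{t-1},(1-\alpha_t)I)$, we show by induction on $t$ that $q(x_t\mid x_0)=\mathcal{N}(\sqrt{\bar\alpha_t}x_0,(1-\bar\alpha_t)I)$. The inductive step composes two independent Gaussian perturbations: the mean picks up a factor $\sqrt{\alpha_t}$, and the variance becomes $\alpha_t(1-\bar\alpha_{t-1})+(1-\alpha_t)=1-\bar\alpha_t$. This is exactly the closed form already recorded in the Preliminaries, so we may simply invoke it there.

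\textbf{Step 2 (posterior by completing the square).} By Bayes' rule and the Markov property,
\[
q(x_{t-1}\mid x_t,x_0)\propto q(x_t\mid x_{t-1})\,q(x_{t-1}\mid x_0).
\]
Both factors are isotropic Gaussians in $x_{t-1}$, so the product is Gaussian. Collecting the quadratic terms gives the precision
\[
\frac{\alpha_t}{1-\alpha_t}+\frac{1}{1-\bar\alpha_{t-1}}=\frac{1-\bar\alpha_t}{(1-\alpha_t)(1-\bar\alpha_{t-1})},
\]
using $\alpha_t\bar\alpha_{t-1}=\bar\alpha_t$. This yields the stated $\sigma_q^2(t)$. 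Collecting the linear terms gives the mean
\[
\mu_q=\sigma_q^2(t)\Big(\tfrac{\sqrt{\alpha_t}}{1-\alpha_t}x_t+\tfrac{\sqrt{\bar\alpha_{t-1}}}{1-\bar\alpha_{t-1}}x_0\Big).
\]

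\textbf{Step 3 (noise reparameterization).} Substitute $x_0=(x_t-\sqrt{1-\bar\alpha_t}\,\epsilon_0)/\sqrt{\bar\alpha_t}$ from the forward closed form. The coefficient of $x_t$ becomes
\[
\frac{\alpha_t(1-\bar\alpha_{t-1})+(1-\alpha_t)}{(1-\bar\alpha_t)\sqrt{\alpha_t}}=\frac{1}{\sqrt{\alpha_t}}.
\]
The coefficient of $\epsilon_0$ becomes $-\frac{1-\alpha_t}{\sqrt{1-\bar\alpha_t}\sqrt{\alpha_t}}$, which is the claimed $\mu_q(t)$. This algebraic simplification is the only place requiring care; it is the main (though routine) obstacle, and the identity $\bar\alpha_t=\alpha_t\bar\alpha_{t-1}$ is what makes it collapse.

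\textbf{Step 4 (reverse kernel).} For $p_\theta$, we note that the model kernel is a design choice. We fix its covariance to $\sigma_q^2(t)I$ and give it the same functional form of mean as $\mu_q$, with $\epsilon_0$ replaced by the learned $\epsilon_\theta(x_t,t)$. The justification is that with equal covariances the KL divergence between the two Gaussians reduces, by the multivariate normal KL lemma above, to a multiple of $\|\mu_q-\mu_\theta\|^2$. That in turn equals $\frac{(1-\alpha_t)^2}{2\sigma_q^2(t)\alpha_t(1-\bar\alpha_t)}\|\epsilon_0-\epsilon_\theta\|^2$, so this parameterization is exactly the one under which minimizing the ELBO terms becomes the noise-prediction MSE. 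This completes the plan and also connects the lemma to the weighted MSE form used in term-II.
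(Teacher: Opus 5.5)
Your derivation is correct. The precision $\frac{\alpha_t}{1-\alpha_t}+\frac{1}{1-\bar\alpha_{t-1}}=\frac{1-\bar\alpha_t}{(1-\alpha_t)(1-\bar\alpha_{t-1})}$, the linear term, the reduction of the $x_t$-coefficient to $1/\sqrt{\alpha_t}$ via $\bar\alpha_t=\alpha_t\bar\alpha_{t-1}$, and the $\epsilon_0$-coefficient all check out. The paper itself gives no proof of this lemma: it calls it well known and defers to the diffusion literature. So what you have written is the standard Gaussian-conjugacy derivation (as in Ho et al.) that the paper takes for granted, rather than a competing route.

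Your version adds two clarifications that the paper's statement glosses over. First, the second bullet is a parameterization choice (fixed variance $\sigma_q^2(t)I$, mean of the same form as $\mu_q(t)$ with $\epsilon_0$ replaced by $\epsilon_\theta$). It does not ``hold'' as a consequence of the Gaussian assumption. Second, the computation needs $t\ge 2$: with $\bar\alpha_0=1$, the factor $q(x_{t-1}\mid x_0)$ is a point mass at $t=1$ and $\sigma_q^2(1)=0$. This is harmless, because the paper only uses the lemma for $t=2,\dots,T$.

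As an aside, carrying your Step 4 one line further gives the weight $\frac{1-\alpha_t}{2\alpha_t(1-\bar\alpha_{t-1})}$. The paper's term-II uses twice this, which remains a valid but looser lower bound.
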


Now, utilizing the results of Lemma .1 and .3, we can rewrite expression 7 as follows:

\[
\begin{aligned}
\mathbb{E}_{\phi(\theta)} \left[ \sum_{x_o \in \mathcal{D}_{np}} \ln \mathcal{P}(x_0 | \theta) \right] \geq \\
\quad \frac{1}{B} \sum_{b=1}^{B} \left[ - \sum_{x_0 \in \mathcal{D}_{np}} \sum_{t=2}^T \mathbb{E}_{q(x_t \mid x_o)} \text{D}_{KL} [\left( q(x_{t-1} | x_t, x_0) \| p_\theta(x_{t-1} | x_t) \right)] \right]
\end{aligned}
\]
\[
\geq \quad \frac{1}{B} \sum_{b=1}^{B} \left[ - \sum_{x_0 \in \mathcal{D}_{np}} \sum_{t=2}^T \mathbb{E}_{q(x_t \mid x_o)} D_{KL} \left( \mathcal{N}(x_{t-1}; \mu_q, \Sigma_q(t)) \parallel \mathcal{N}(x_{t-1}; \mu_\theta, \Sigma_q(t)) \right) \right] \tag{8}
\]
\[
\begin{aligned}
\geq\;& \frac{1}{B} \sum_{b=1}^{B} \Bigg[
- \sum_{x_0 \in \mathcal{D}_{np}} \sum_{t=2}^T
\mathbb{E}_{q(x_t \mid x_o)}
\Bigg\|
\frac{1}{2 \sigma_q^2(t)}
\Bigg(
\frac{1}{\sqrt{\alpha_t}} x_t
-
\frac{1 - \alpha_t}{\sqrt{1 - \overline{\alpha_t}} \sqrt{\alpha_t}}
\, \epsilon_\theta(x_t, t)
\\
&\hspace{4.5cm}
- \frac{1}{\sqrt{\alpha_t}} x_t
+ \frac{1 - \alpha_t}{\sqrt{1 - \overline{\alpha_t}} \sqrt{\alpha_t}}
\, \epsilon_0
\Bigg)
\Bigg\|_2^2
\Bigg]
\end{aligned}
\tag{9}
\]

\[
\geq \quad \frac{1}{B} \sum_{b=1}^{B} \left[ - \sum_{x_0 \in \mathcal{D}_{np}} \sum_{t=2}^T \mathbb{E}_{q(x_t \mid x_o)}  \frac{(1 -\alpha_t)^2}{2 \sigma_q^2(t)(1 - \overline{\alpha}_t)(\alpha_t)}  \left\| \epsilon_0 - \epsilon_\theta(x_t, t) \right\|^2_2 \right]  \tag{10}
\]
\text{(By rearranging the terms.)}
\[
\geq \quad \frac{1}{B} \sum_{b=1}^{B} \left[ - \sum_{x_0 \in \mathcal{D}_{np}} \sum_{t=2}^T \mathbb{E}_{q(x_t \mid x_o)}  \frac{(1 -\alpha_t)}{(1 - \overline{\alpha}_{t-1})(\alpha_t)}  \left\| \epsilon_0 - \epsilon_\theta(x_t, t) \right\|^2_2 \right]  \tag{11}
\]
\text{(Using the expression of $\sigma_q^2(t)$.)}
\[
\geq - \sum_{x_0 \in \mathcal{D}_{np}} \sum_{t=2}^T \frac{(1 -\alpha_t)}{(1 - \overline{\alpha}_{t-1})(\alpha_t)}  \left\| \epsilon_0 - \epsilon_\theta(x_t, t) \right\|^2_2 \text{(Using the expression of $\sigma_q^2(t)$.)}  \tag{12}
\]

By combining Equation 5 and Equation 12, we can write:
\[
\begin{aligned}
\mathbb{D}_{KL} \left[ \phi(\theta) \, \| \, Z \cdot \frac{\mathcal{P}(\theta \mid \mathcal{D}_{np}, \mathcal{D}_{p})}{\mathcal{P}(\mathcal{D}_{np} \mid \theta)} \right] \gtrsim \\ \quad - \sum_{x_0 \in \mathcal{D}_{np}}\sum_{t=2}^{T} \frac{(1 -\alpha_t)}{ \alpha_t \cdot (1 - \bar{\alpha}_{t-1})} \left\| \epsilon_0 - \epsilon_\theta(x_t, t) \right\|^2 \\
     + \sum_{i=1}^{d} \left[ \frac{(\theta_i - \mu_i^*)^2}{2 \sigma_i^{*2}} -\frac{1}{2} + \mathrm{log}\frac{\sigma_i^{*}}{\sigma_i} + \frac{\sigma_i^{2}}{2 \sigma_i^{2}} \right].
\end{aligned}
\]
\end{proof}

\subsection{Proof of Theorem 4.3}\label{app:th3}
\begin{proof}

Let's assume that the dataset consists of finite bounded samples \( \mathcal{D} = \{ x^{(1)}, x^{(2)}, \dots, x^{(n)} \} \), and \( f \) is the denoiser designed to minimize \( L(\theta) \) as defined below:
    \[
L(\theta) = \mathbb{E}_{t, x_0, \epsilon} \left[ \left\| \epsilon - f_\theta \left( \sqrt{\alpha_t} x_0 + \sqrt{(1 - \alpha_t)} \epsilon \right) \right\|^2 \right]
\]
Where \( x_0 \sim \mathcal{X} \) represents real samples, \( \epsilon \sim \mathcal{N}(0, I) \) denotes the noise signal, and \( x_t = \sqrt{\alpha_t} x_0 + \sqrt{1 - \alpha_t} \epsilon \) is the perturbed sample at timestep \( t \). Sampling from diffusion models follows a Markov chain, which iteratively denoises from \( x_T \sim \mathcal{N}(0, I) \) to \( x_0 \). For convenience, we convert the denoiser into an \( x_0 \)-parameterization by rearranging the terms and defining \( F(x_t) = \frac{x_t - \sqrt{1 - \alpha_t}f(x_t)}{\sqrt{\alpha_t}} \), and the objective becomes:
\[
L = \mathbb{E}_{t, x_0, x_t} \left[ \| x_0 - F(x_t) \|^2 \right]
 \tag{13}
 \]

    An ideal denoiser \( F \) should minimize the value \( F(x_t) \) for all \( t, x_t \), implying an objective for \( F(x_t) \):
\[
    L_{t, x_t}(F(x_t)) = \mathbb{E}_{x_0 \sim p(x_0 | x_t)} \left[ \| x_0 - F(x_t) \|^2 \right]. \tag{14}
\]

By taking the derivative, it holds that
\[
    0 = \nabla_{F(x_t)} L_{t, x_t}(F(x_t)) = \mathbb{E}_{x_0 \sim p(x_0 | x_t)} [-2(x_0 - F(x_t))]. \tag{15}
\]

And finally,
\[
    F(x_t) = \mathbb{E}_{x_0 \sim p(x_0 | x_t)} [x_0]. \tag{16}
\]

That is,
\[
    F(x_t) = \int_{x_0} x_0 \cdot p(x_0 | x_t) \, dx_0. \tag{17}
\]

Using Bayes' rule, we can rewrite,
\[
    F(x_t) = \int_{x_0} \frac{x_0 \cdot p_{\mathcal{D}}(x_0) p(x_t | x_0) \, dx_0}{p_{\mathcal{D}}(x_t)}. \tag{18}
\]

Or equivalently,
\[
    F(x_t) = \frac{\int_{x_0}  x_0 \cdot p_\mathcal{D}(x_0) p(x_t | x_0) \, dx_0}{\int_{x_0} p_\mathcal{D}(x_0) p(x_t | x_0) \, dx_0}. \tag{19}
\]

Using a normal distribution,
\[
    F(x_t) = \frac{\int_{x_0} \mathcal{N}(x_t; \sqrt{\alpha_t} x_0, (1 - \alpha_t) I) \cdot x_0 \cdot p_\mathcal{D}(x_0) \, dx_0}{\int_{x_0} \mathcal{N}(x_t; \sqrt{\alpha_t} x_0, (1 - \alpha_t) I) \cdot p_\mathcal{D}(x_0) \, dx_0}. \tag{20}
\]

Using Monte-Carlo estimates, we can rewrite the above expression as:
\[
    F(x_t) = \frac{\sum_{x_0 \in \mathcal{D}} \mathcal{N}(x_t; \sqrt{\alpha_t} x_0, (1 - \alpha_t) I) \cdot x_0}{\sum_{x_0 \in \mathcal{D}} \mathcal{N}(x_t; \sqrt{\alpha_t} x_0, (1 - \alpha_t) I) }. \tag{21}
\]

\textbf{Case when \( t \to T \)} 

As \( t \to T \), \( \alpha_t \to 0 \), and thus \( \mathcal{N}(x_t; \sqrt{\alpha_t} x_0, (1 - \alpha_t) I) \to \mathcal{N}(x_t; 0, I) \), which is a constant for varying \( x_0 \). Bringing this back to Eq. (21), it follows that

\[
F(x_t) = \frac{1}{n} \sum_{x_0 \in \mathcal{D}} x_0, \tag{22}
\]

\textbf{Case when} $t \to 0$. As $t \to 0$, $\alpha_t \to 1$. For simplicity, assume that the closest sample to $\mathbf{x}_t$ is unique. Let
\[
    \mathbf{x_0}_{\text{closest}} = \arg \min_{\mathbf{x}_0 \in \mathcal{D}} \left\| \sqrt{\alpha_t} \mathbf{x}_0 - \mathbf{x}_t \right\|^2, \tag{23}
\]

\[
    d = \min_{\mathbf{x}_0 \in D \setminus \{\mathbf{x_0}_{\text{closest}}\}} \left( \left\| \sqrt{\alpha_t} \mathbf{x}_0 - \mathbf{x}_t \right\|^2 - \left\| \sqrt{\alpha_t} \mathbf{x_0}_{\text{closest}} - \mathbf{x}_t \right\|^2 \right) > 0, \tag{24}
\]
\[
0 \leq \left\| \frac{\sum_{x_0 \in \mathcal{D}} \mathcal{N}(x_t; \sqrt{\alpha_t}\mathbf{x}_0, (1 - \alpha_t)\mathbf{I} ) \cdot \mathbf{x_0} }{\sum_{x_0 \in \mathcal{D}} \mathcal{N}(x_t; \sqrt{\alpha_t}\mathbf{x_0}_{\text{closest}}, (1 - \alpha_t)\mathbf{I} )} - \mathbf{x_0}_{\text{closest}} \right\| \tag{25}
\]

\[
\leq \sum_{\mathbf{x_0} \in \mathcal{D} \setminus \{\mathbf{x_0}_{\text{closest}}\}} \left\|  \frac{1}{\sqrt{2 \pi (1- \alpha_t)}} \exp \left( \frac{- \left\| \sqrt{\alpha_t} \mathbf{x}_0 - \mathbf{x}_t \right\|^2 + \left\| \sqrt{\alpha_t} \mathbf{x_0}_{\text{closest}} - \mathbf{x}_t \right\|^2 }{2(1 - \alpha_t)} \right)  \right\| \tag{26}
\]
\[
\leq \sum_{\mathbf{x_0} \in \mathcal{D} \setminus \{\mathbf{x_0}_{\text{closest}}\}} \left\|  \frac{1}{\sqrt{2 \pi (1- \alpha_t)}} \exp\left(-\frac{d}{2(1 -\alpha_t)} \right)  \right\| \to 0 \tag{27}
\]
By setting, $\alpha_t$ $\to$ 1, 
\[
 \frac{\sum_{\mathbf{x}_0 \in \mathcal{D}}\mathcal{N}\left( \mathbf{x}_t; \sqrt{\alpha_t} \mathbf{x}_0, (1 - \alpha_t) \mathbf{I} \right) \cdot \mathbf{x}_0}{\sum_{x_0 \in \mathcal{D}} \mathcal{N}\left( \mathbf{x}; \sqrt{\alpha_t} \mathbf{x_0}_{\text{closest}}, (1 - \alpha_t) \mathbf{I} \right)} \to \mathbf{x_0}_{\text{closest}}  \tag{28}
\]
Similarly, we can write,
\[
 \frac{\sum_{\mathbf{x}_0 \in \mathcal{D}}\mathcal{N}\left( \mathbf{x}_t; \sqrt{\alpha_t} \mathbf{x}_0, (1 - \alpha_t) \mathbf{I} \right)}{\sum_{x_0 \in \mathcal{D}} \mathcal{N}\left( \mathbf{x}; \sqrt{\alpha_t} \mathbf{x_0}_{\text{closest}}, (1 - \alpha_t) \mathbf{I} \right)} \to 1
\]
Hence, $F(\mathbf{x}_t)$ $\to \mathbf{x_0}_{\text{closest}}$ (utilizing the relation in 28). It completes the proof.
\end{proof}

\clearpage

\noindent
\section{Additional Results On Fine Grained Preference Alignment Task}
\begin{figure}[!h] 
    \centering
    \includegraphics[width=0.80\textwidth]{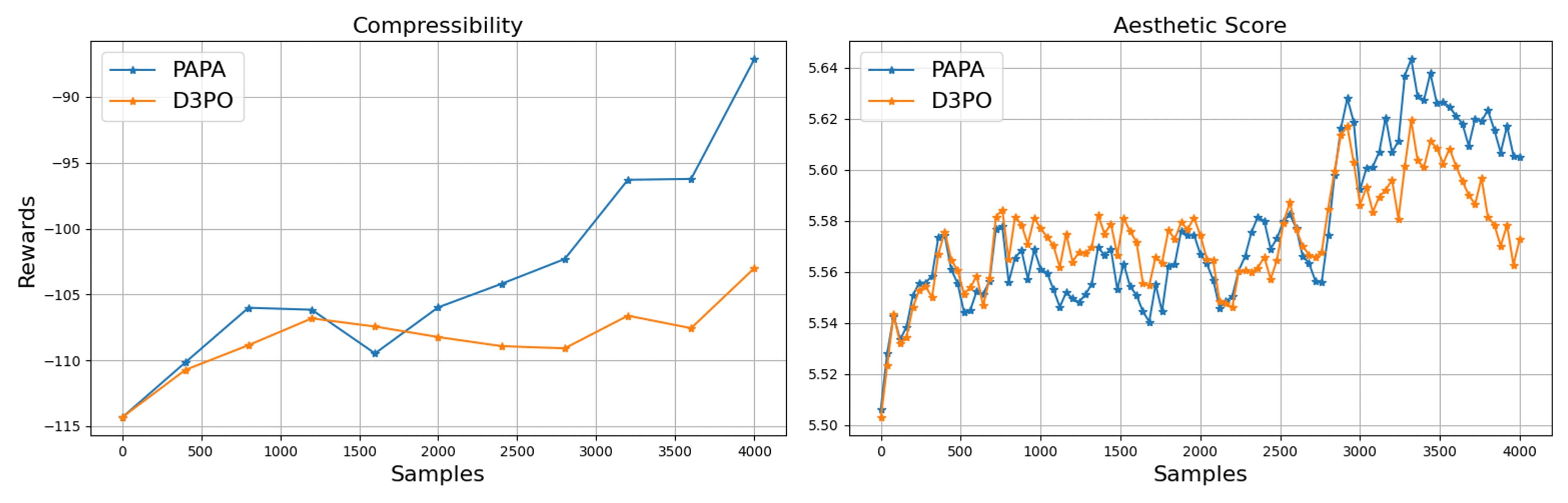}
    \caption{\small{Reward Comparison on Fine-Grained Alignment Tasks}.}
    \label{fig:Compress_aesthetic}
\end{figure}
We further evaluate the efficacy of PAPA on a more fine-grained preference alignment task with pre-defined quantifiable objectives. We consider two diverse alignment tasks: a) \emph{Compressibility}, in which an image with a smaller size is regarded as better; b) \emph{Aesthetic Quality}, in which we use the LAION aesthetic score predictor~\cite{schuhmann2022laion} to automatically assign aesthetic ratings to images, enabling objective reward assignment based on visual quality without requiring human evaluation. For this analysis, we use Stable Diffusion v1.5 as the base model and compare its performance with D3PO~\cite{yang2024using}. Our experimental findings, as reported in ~\ref{fig:Compress_aesthetic}, indicate that PAPA rapidly adapts to the preference with comparatively fewer online feedback responses than D3PO, justifying its suitability for fine-grained preference alignment in interactive environments.

\section{Additional Visualizations of PAPA on Fine-Grained Alignment Task}
Figure~\ref{fig:add_vis_fg} shows additional examples of images generated by PAPA for the fine-grained alignment task.

\begin{figure}[!h] 
    \centering
    \includegraphics[width=0.9\textwidth]{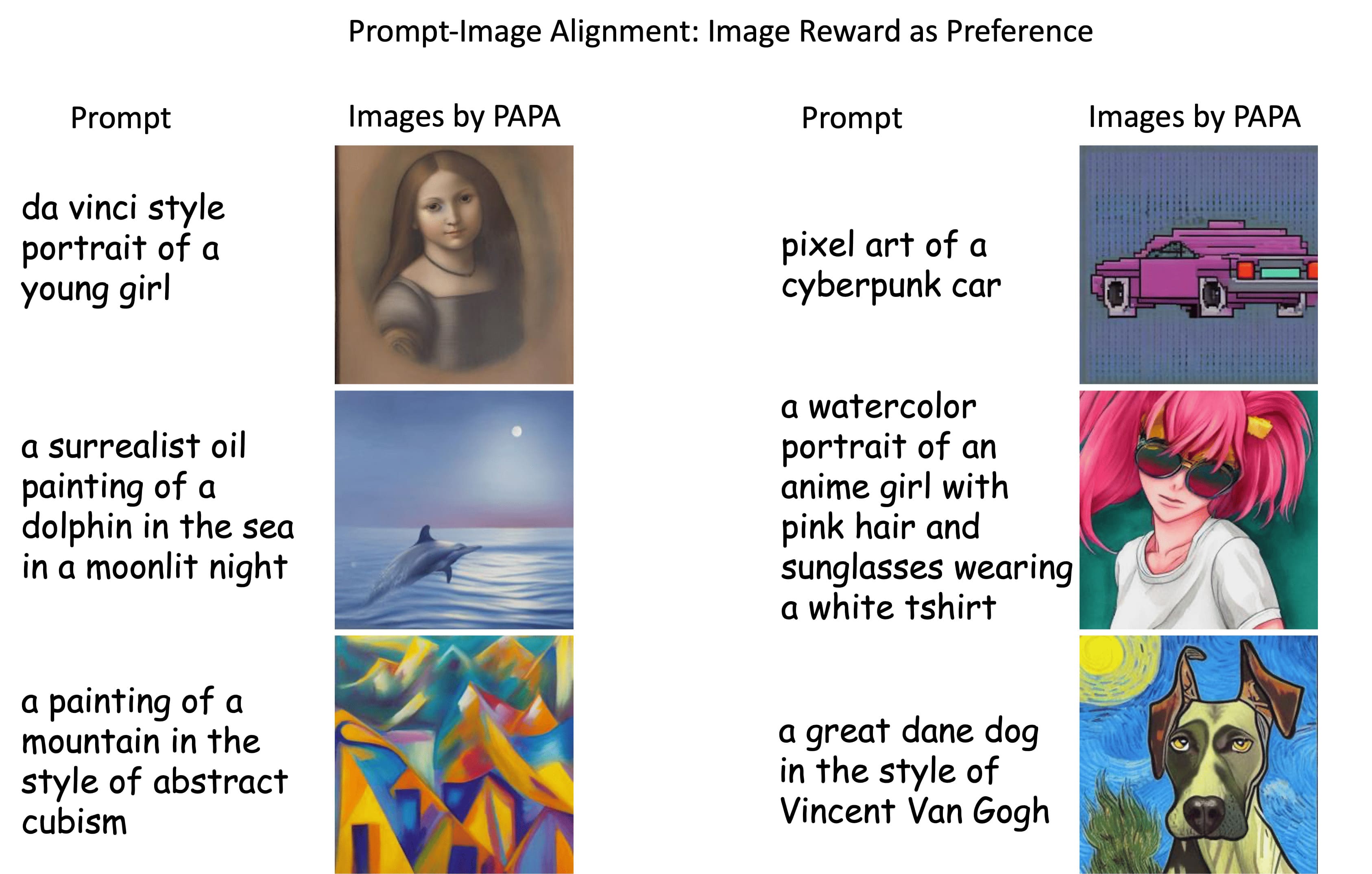}
    \caption{Additional visualizations of PAPA on the Fine-Grained Alignment Task.}
    \label{fig:add_vis_fg}
\end{figure}

\clearpage

\section{Qualitative Comparisons of Proposed PAPA with D3PO} \label{r:d3po_papa}
\begin{figure}[!h] 
    \centering
    \includegraphics[width=0.7\textwidth]{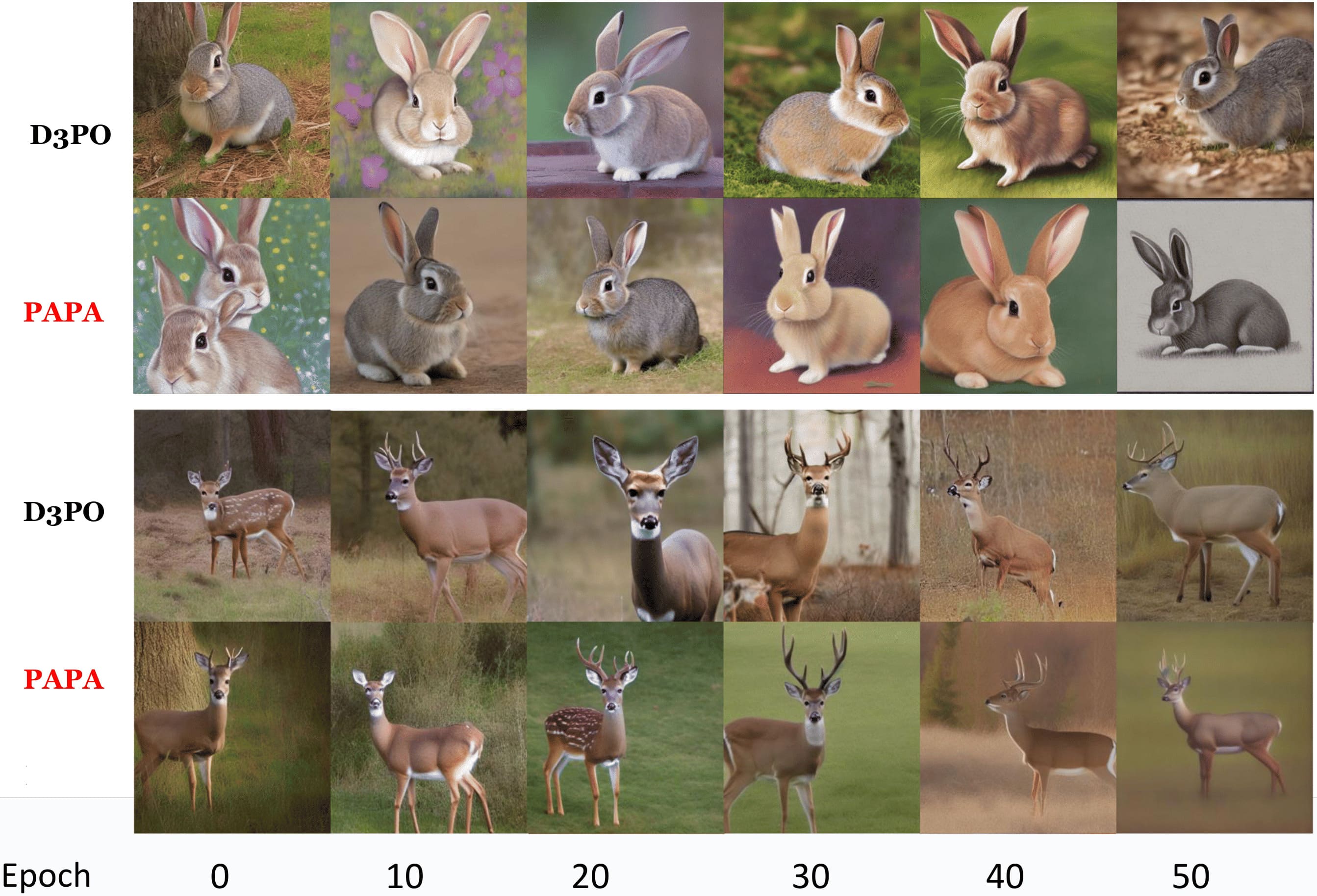}
    \caption{Additional visualizations of PAPA and D3PO in generating \textit{rabbit} and \textit{deer} images with compressibility as the objective.}
    \label{fig:compress_vis}
    \vspace{-10pt}
\end{figure}
\begin{figure}[!h] 
    \centering
    \includegraphics[width=0.7\textwidth]{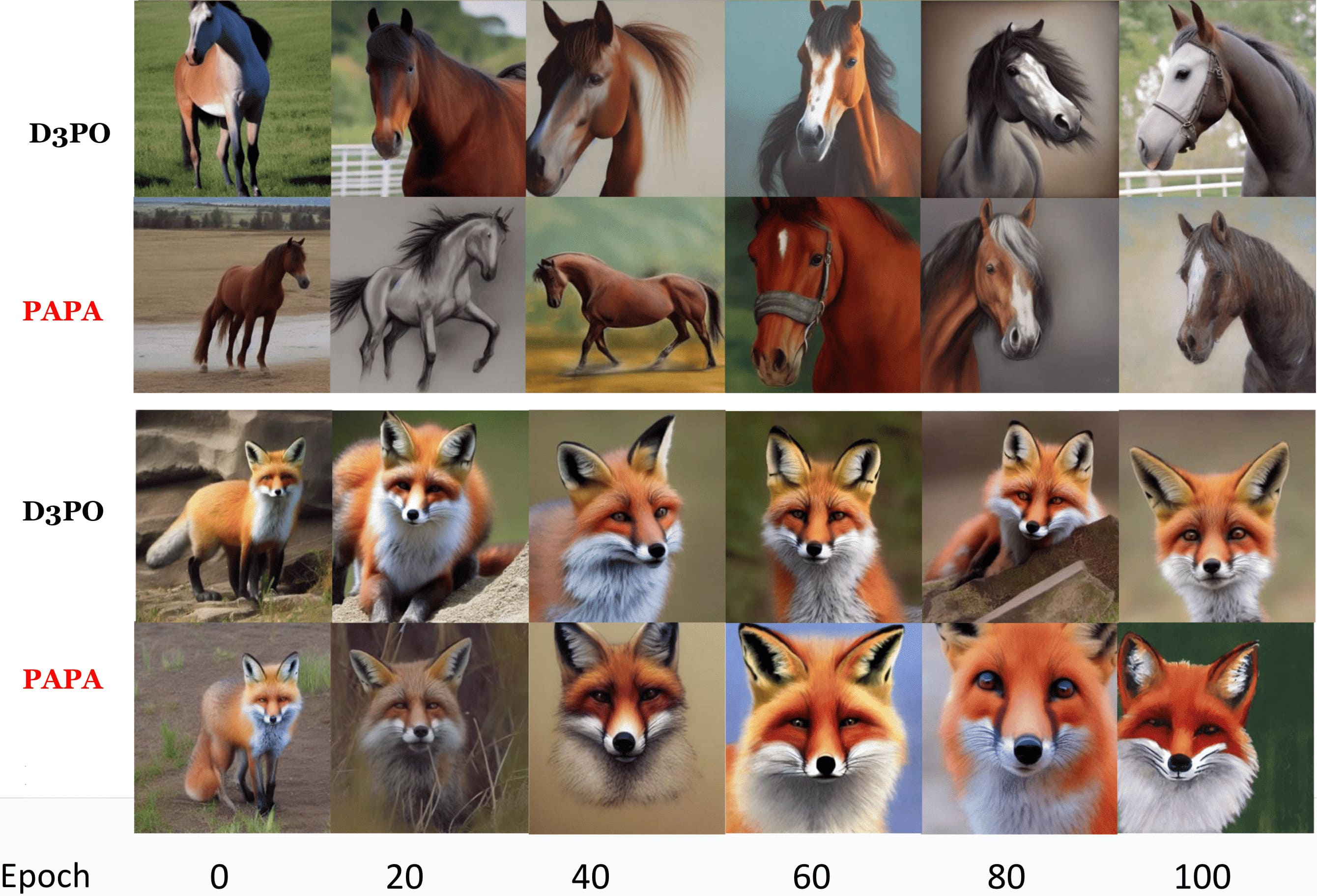}
    \caption{Additional visualizations of PAPA and D3PO in generating \textit{horse} and \textit{fox} images with the high aesthetic score objective.}
    \label{fig:aes_vis}
    \vspace{-10pt}
\end{figure}
Detailed comparison reveals that PAPA outperforms D3PO starting from early stages of interaction. Here, we present visualizations of generated images by D3PO and proposed PAPA across gradually increasing epochs. Figure~\ref{fig:compress_vis} presents samples generated with compressibility as a preference for prompts 'rabbit' and 'deer'. Figure~\ref{fig:aes_vis} presents generated examples for prompts 'horse' and 'fox' when aesthetic score is used as a preference.






\section{Insufficiency of Existing D3PO in Active Preference Alignment Tasks}
\begin{figure}[H] 
    \centering
    \includegraphics[width=0.90\textwidth]{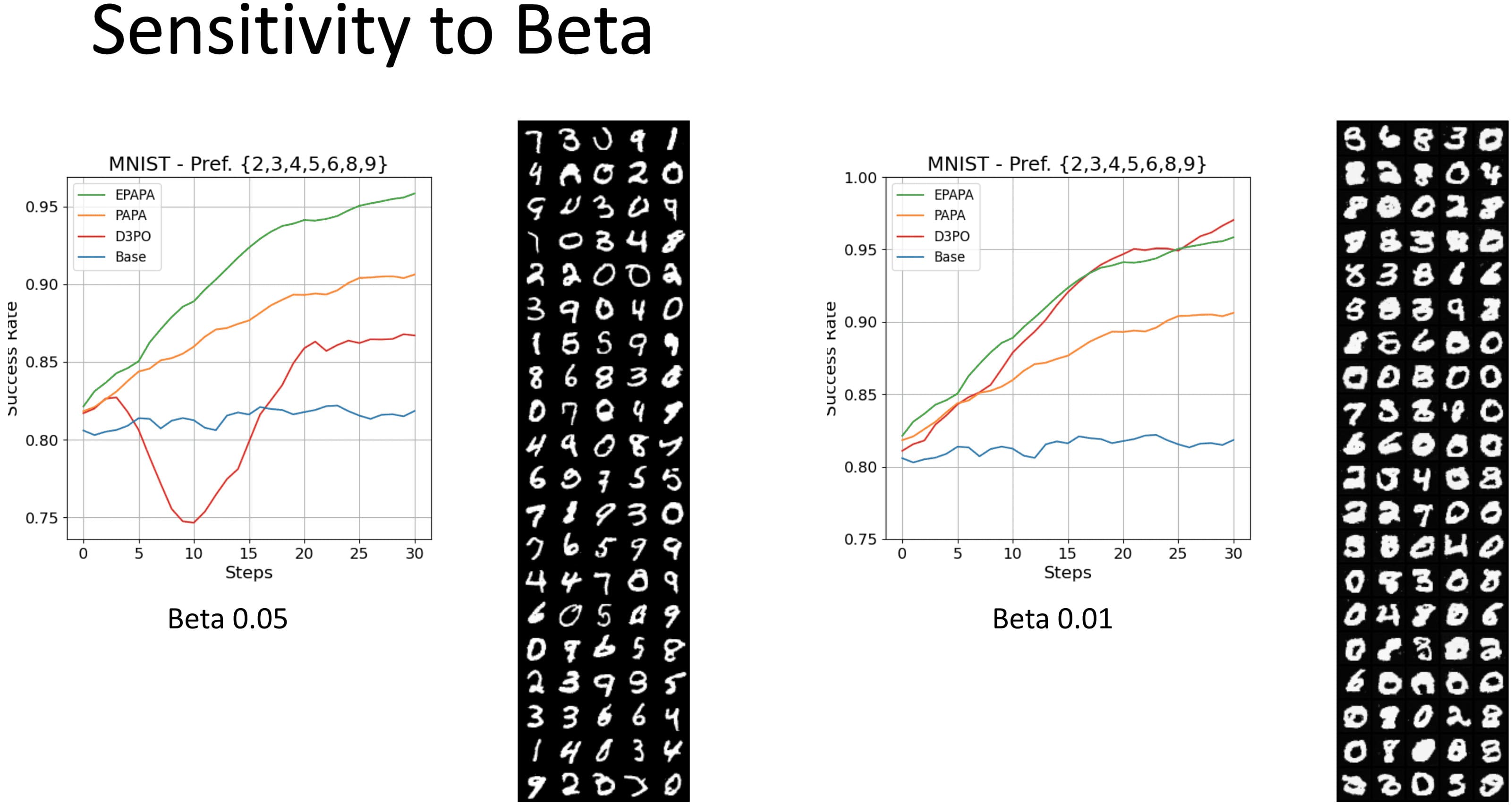}
    \caption{Instability of current preference alignment approach D3PO.}
    \label{fig:d3po_stability}
\end{figure}
Our extensive experimental analysis in the main paper shows D3PO's ineffectiveness in personalized active preference alignment. This section analyzes the underlying reasons. We initiate our analysis by conducting an experiment in which we systematically vary the value of $\beta$. This allows us to observe how adjustments to $\beta$—which governs the strength of the preference alignment loss (as proposed in~\cite{yang2024using}), as formally described below—affect the performance of preference alignment.
\vspace{10pt}

\begin{equation}
\mathcal{L}_i(\theta) = -\mathbb{E}_{(s_i, \sigma_w, \sigma_l)} \left[ \log \rho \left( \beta \log \frac{\pi_{\theta}(a_i^w | s_i^w)}{\pi_{\text{ref}}(a_i^w | s_i^w)} - \beta \log \frac{\pi_{\theta}(a_i^l | s_i^l)}{\pi_{\text{ref}}(a_i^l | s_i^l)} \right) \right]
\end{equation}

\vspace{10pt}

We present our empirical results in Figure~\ref{fig:d3po_stability}. Our observations indicate that increasing the value of $\beta$ leads to a decline in the success rate of D3PO, since a lower $\beta$ imposes a stronger emphasis on the preference alignment objective. Conversely, a higher $\beta$ relaxes this alignment, resulting in a notably reduced success rate, particularly when compared to PAPA and EPAPA. Intriguingly, we find an inverse relationship between the success rate and the quality of generated samples: high success rates correspond to lower sample quality, and vice versa. As the influence of preference alignment diminishes (i.e., as $\beta$ increases), the model remains closer to the base or reference model in parameter space, leading to improved sample quality but reduced alignment. Distinct from PAPA, D3PO lacks an independent parameter that controls sample quality without affecting the strength of the preference alignment objective. Consequently, unlike PAPA, D3PO cannot attain strong preference alignment without a substantial loss in the quality and diversity of generated samples.

\vspace{20pt}


\section{Results with Cifar-10}\label{r:cifar10}
In this section, we analyze our proposed approach using the Cifar-10 dataset. For pre-training the DDPM model on Cifar-10, we use the same architecture and hyperparameters specified in the open-source code available here\footnote{https://github.com/openai/improved-diffusion}.

\begin{table}[H]
    \centering
    \begin{minipage}{0.48\textwidth}
        \caption{Comparison of FID and IS scores on CIFAR-10 with preference set \{Pleane, Car, Truck, Ship\}.}
        \label{tb:cifar_fid}
        \smallskip
        \centering
        \begin{tabular}{p{1.18cm}p{2.10cm}p{2.10cm}}
            \toprule
            \multicolumn{3}{c}{\footnotesize{CIFAR-10: Preference set $s_1$ }} \\
            \midrule
            Method & \>\>FID$\downarrow$ & IS$\uparrow$   \\
            \midrule
            Base & 124.4 $\pm$ 6.0 & 6.6 $\pm$ 0.2  \\
            \hline 
            PAPA & 104.7 $\pm$ 5.0 & 5.3 $\pm$ 0.5   \\
            EPAPA & \textbf{95.9 $\pm$ 5.7} & \textbf{5.3 $\pm$ 0.5}  \\
            \bottomrule
            \bottomrule
        \end{tabular}
    \end{minipage}\hfill
    \begin{minipage}{0.44\textwidth}
        \centering
            \includegraphics[width=0.90\textwidth]{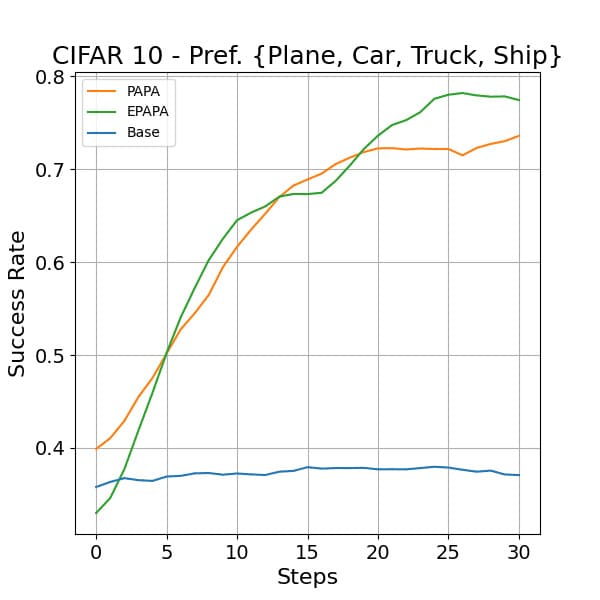} 
            \caption{Success rate (SR) comparisons with Cifar-10 across interaction steps.}
            \label{fig:sr_cifar1}
    \end{minipage}
\label{tb:cifar10}   
\end{table}

\begin{figure}[!h] 
    \centering
    \includegraphics[width=0.85\textwidth]{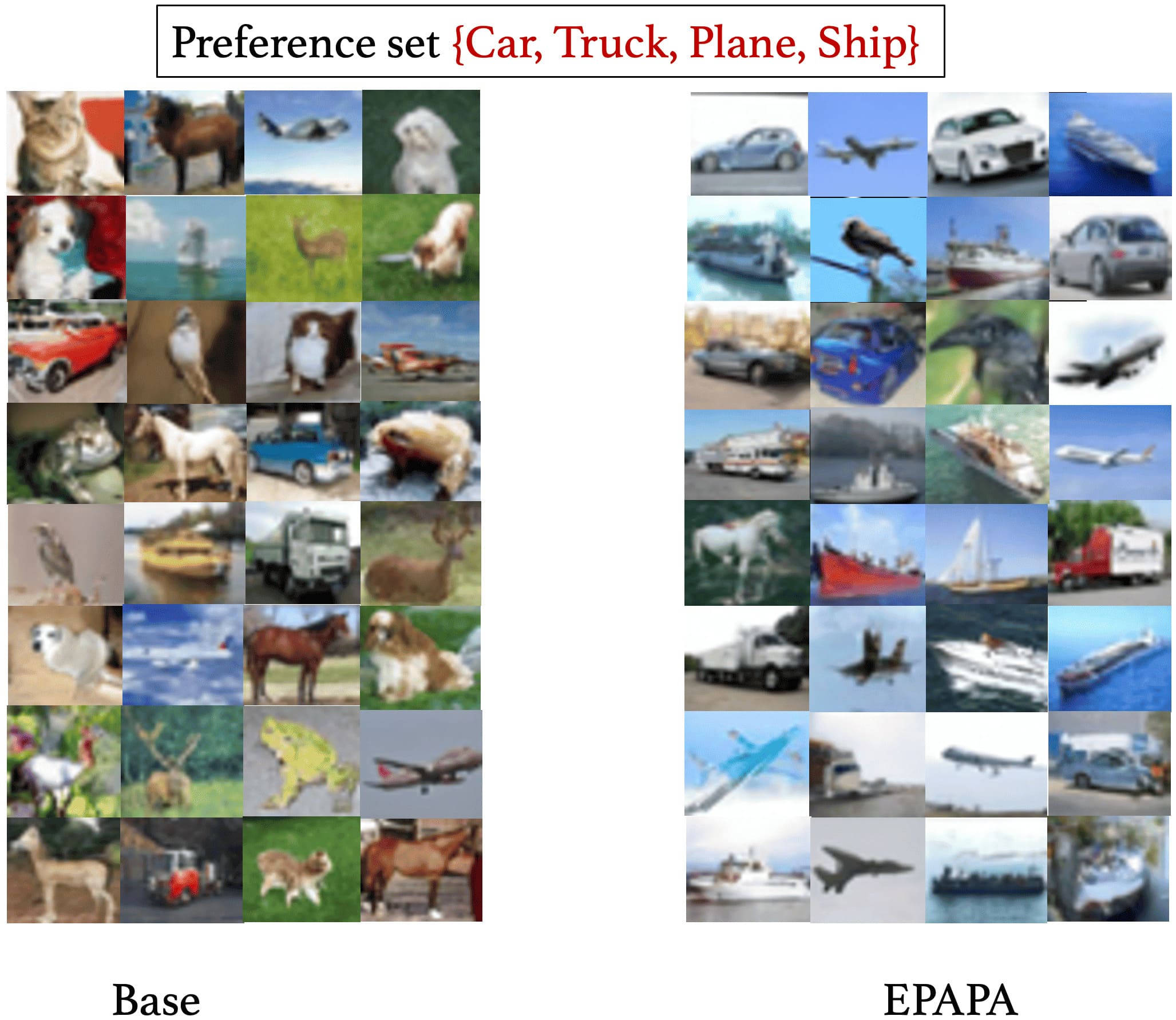}
    \caption{Additional Visualizations on the Efficacy of EPAPA on generating images aligning with the preference set.}
    \label{fig:cifar_p1}
\end{figure}

We evaluate performance using different user preference sets. Table~\ref{fig:sr_cifar1} presents the results based on the SR metric, with preference $s_1\in\{\text{Car, Truck,
Plane, Ship}\}$. The empirical results indicate that as fine-tuning advances and more user preference data is gathered, the SR significantly improves compared to the pre-trained model (denoted as \emph{Base}), highlighting the model's increasing capability to precisely align with user preferences. Moreover, we observe that EPAPA consistently outperforms PAPA, consistent with our findings across other datasets. This improvement underscores the effectiveness of EPAPA's sampling strategy, which combines the pre-trained diffusion model—adept at low-level denoising—with the fine-tuned model, which excels in high-level shaping. We assess sample quality and diversity using FID and IS metrics, with results summarized in Table~\ref{tb:cifar_fid}. The results demonstrate that our approach successfully preserves high quality, as evidenced by achieving an IS score comparable to the \emph{Base} model (trained on the preference set), while promoting diversity within the preferred set, indicated by a lower FID. Overall, these empirical findings highlight the impact of our proposed approaches in tackling the active preference alignment problem. 

\section{Qualitative Visualization from Cifar-10}
In this section, we provide a side-by-side visualization comparing samples produced by our proposed method and those generated by a baseline—specifically, a pretrained diffusion model trained on the CIFAR-10 dataset. For this qualitative analysis, we select a preference set $s \in \{ Car, Truck, Airplane, Ship \}$, as illustrated in Figure~\ref{fig:cifar_p1}. The results reveal that EPAPA produces samples closely aligned with user preferences, whereas the baseline diffusion model frequently generates samples outside the specified preferences. Additionally, samples from EPAPA maintain both quality and diversity within the chosen preference set. These further visualizations from CIFAR-10 underscore the effectiveness of our approach in a standard computer vision context.


\section{Effect of $\beta$}\label{app:beta}
This section examines the impact of $\beta$ on performance. To this end, we perform experiments with different values of $\beta$ and evaluate the results based on Success Rate, FID, and IS scores. For this analysis, we select $\{$ Boot, Sandal, Sneaker $\}$ as the preference set. The comparison of FID and IS scores is presented in Table~\ref{tb:gamaa}, while the Success Rate comparison is shown in Table~\ref{fig:gamaaSR}. Our empirical results reveal that selecting a high value for $\beta$ improves sample quality but significantly hinders preference alignment. On the other hand, a very low $\beta$ greatly enhances preference alignment, but at the expense of sample quality. Therefore, a mid-range value like 0.009 strikes the ideal balance, offering improved preference alignment while maintaining high sample quality—making it the most effective choice for practical applications.

\begin{table}[H]
    \centering
    \begin{minipage}{0.48\textwidth}
        \caption{Comparison of FID and IS scores for different values of $\mathbb{\beta}$.}
        \label{tb:gamaa}
        \smallskip
        \centering
        \begin{tabular}{p{1.18cm}p{2.10cm}p{2.10cm}}
            \toprule
            \multicolumn{3}{c}{Eval: Effect of $\beta$} \\
            \midrule
            $\beta$ & FID$\downarrow$ & IS$\uparrow$   \\
            \midrule
            0.0 & 39.93$\pm$18.3 & 2.8$\pm$0.38   \\
            \hline 
            0.001 & 47.98$\pm$15.53 & 2.3$\pm$0.40   \\
            0.005 & 238.90$\pm$74.02 & 2.4$\pm$0.43  \\
            \textbf{0.009} & \textbf{36.72}$\pm$\textbf{16.9} & \textbf{2.9}$\pm$\textbf{0.34}  \\
            0.05 & 401.70$\pm$115.58 & 1.7$\pm$0.84  \\
            \bottomrule
        \end{tabular}
    \end{minipage}\hfill
    \begin{minipage}{0.44\textwidth}
            \includegraphics[width=0.90\textwidth]{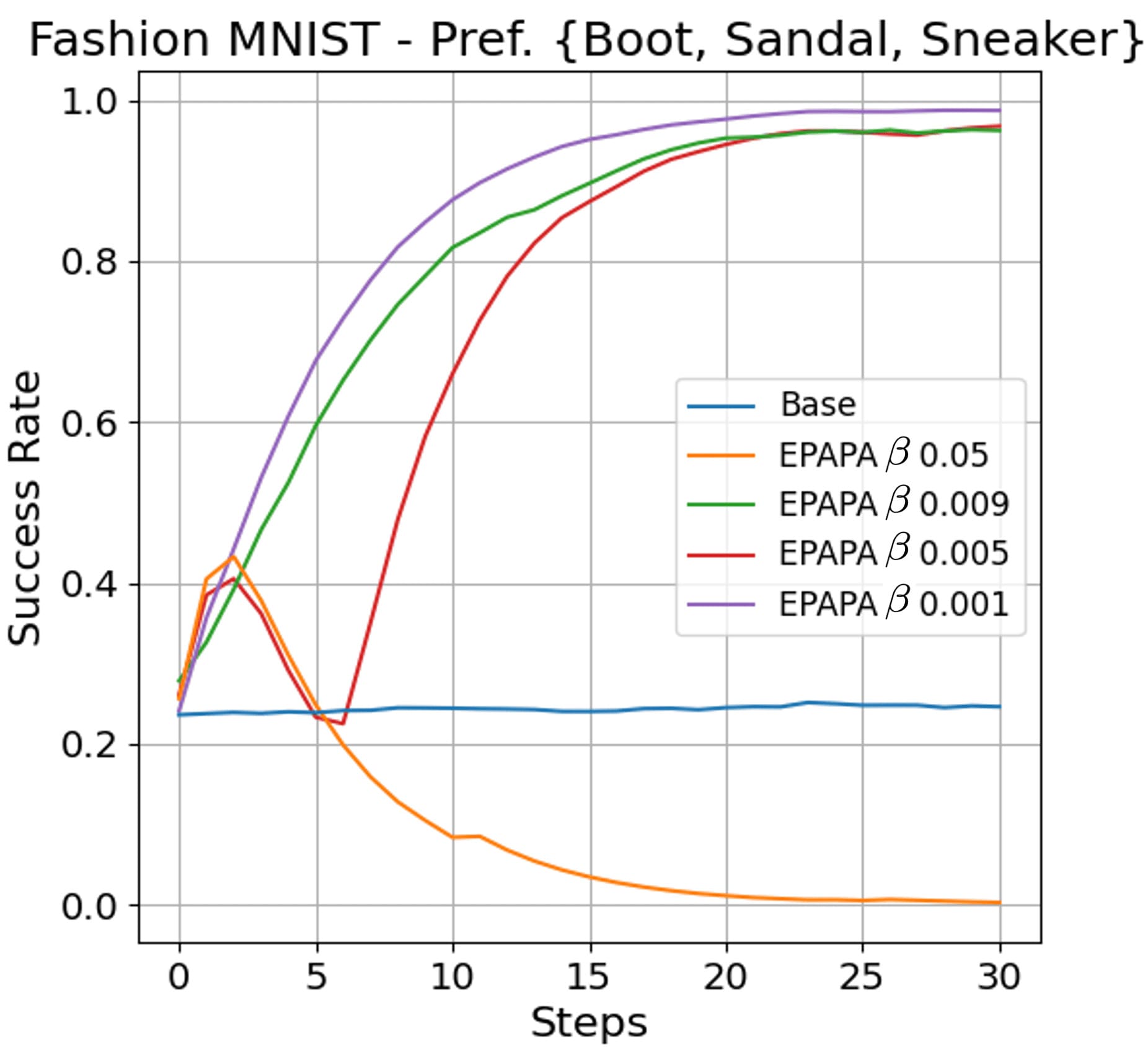} 
            \caption{Analyzing the Effect of $\beta$.}
            \label{fig:gamaaSR}
    \end{minipage}
\end{table}

\clearpage


\section{Impact of ${L}^{QPDE}$ on Success Rate}
\label{app:srqpde}

\begin{minipage}{1.0\textwidth}
    \begin{minipage}{0.5\textwidth}
    In the main paper (see Section 6), we examine the effect of $\mathcal{L}^{\mathrm{QPDE}}$ on the quality and diversity of the generated samples. Here, we focus on its impact on the success rate. For this analysis, we select $\{$ Boot, Sandal, Sneaker$\}$ as the preference set and present the results in Figure~\ref{fig:sr_n}. We observe that the model fine-tuned with and without $\mathcal{L}^{QPDE}$ achieves similar success rates. This suggests that while $\mathcal{L}^{\mathrm{QPDE}}$ does not directly influence the success rate, it significantly enhances the diversity and quality of the generated samples.
    \end{minipage}
    \hfill
    \begin{minipage}{0.45\textwidth}
        \begin{figure}[H] 
             \centering
             \includegraphics[width=0.50\textwidth]{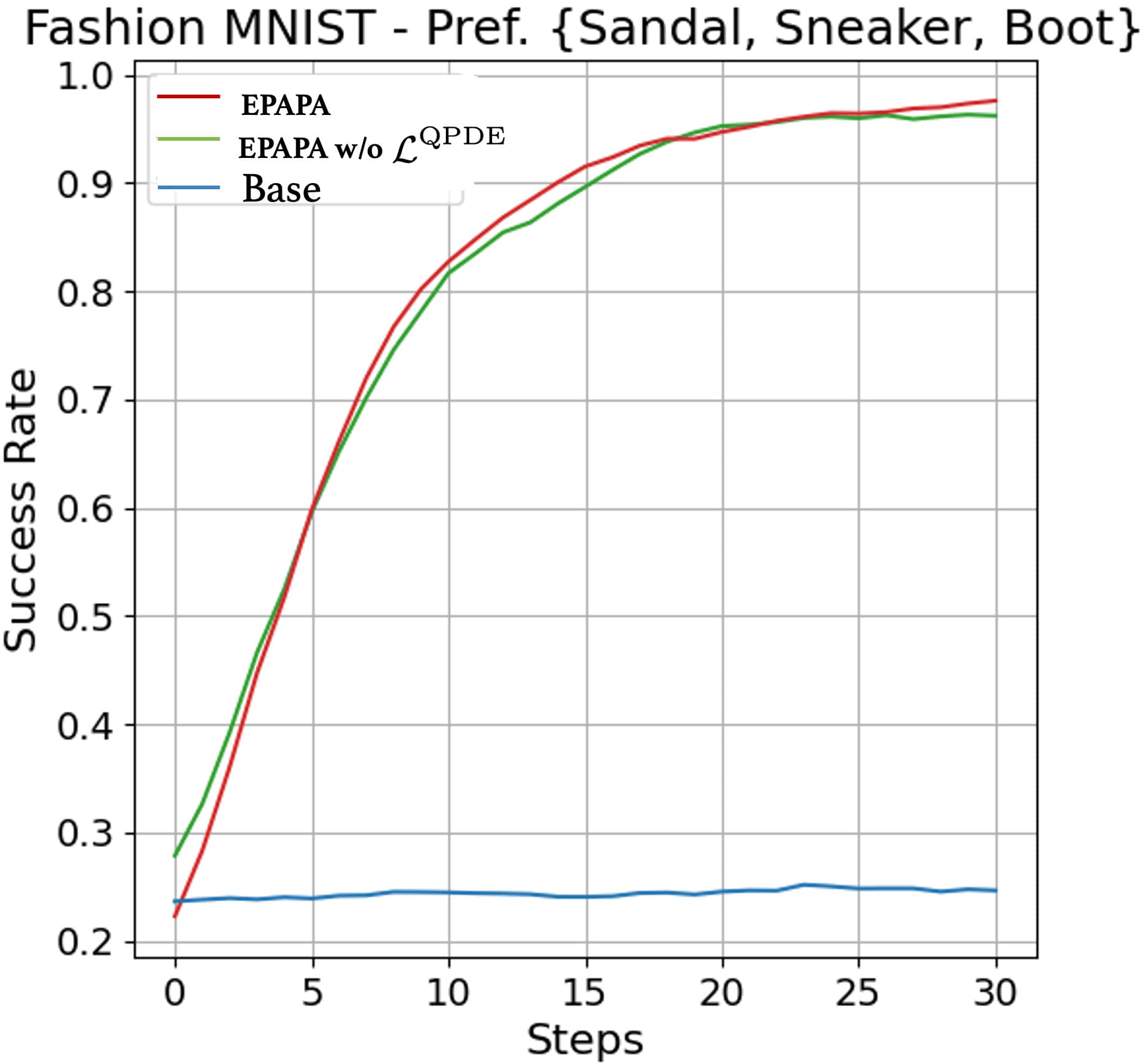}
             \caption{\small{Analyzing $\mathcal{L}^{QPDE}$.}}
             \label{fig:sr_n}
     \end{figure}
    \end{minipage}

\end{minipage}



\section{Qualitative Analysis on the Importance of ${L}^{{QPDE}}$}\label{app:qqpde}
In the main paper (see Section~6), we study the role of $\mathcal{L}^{\mathrm{QPDE}}$ in improving sample quality and diversity. In this section, we provide additional qualitative results that further highlight the impact of $\mathcal{L}^{\mathrm{QPDE}}$ on both sample quality and diversity. In Figure~\ref{fig:vis_qpde}, we compare the samples generated by models fine-tuned with and without $\mathcal{L}^{\mathrm{QPDE}}$. For this comparison, we choose $\{$ Boot, Sandal, Sneaker$\}$ as the preference set. We observe a noticeable improvement in both the quality and diversity of the generated samples when $\mathcal{L}^{\mathrm{QPDE}}$ is included in the objective.

\begin{figure}[H] 
    \centering
    \includegraphics[width=0.4\textwidth]{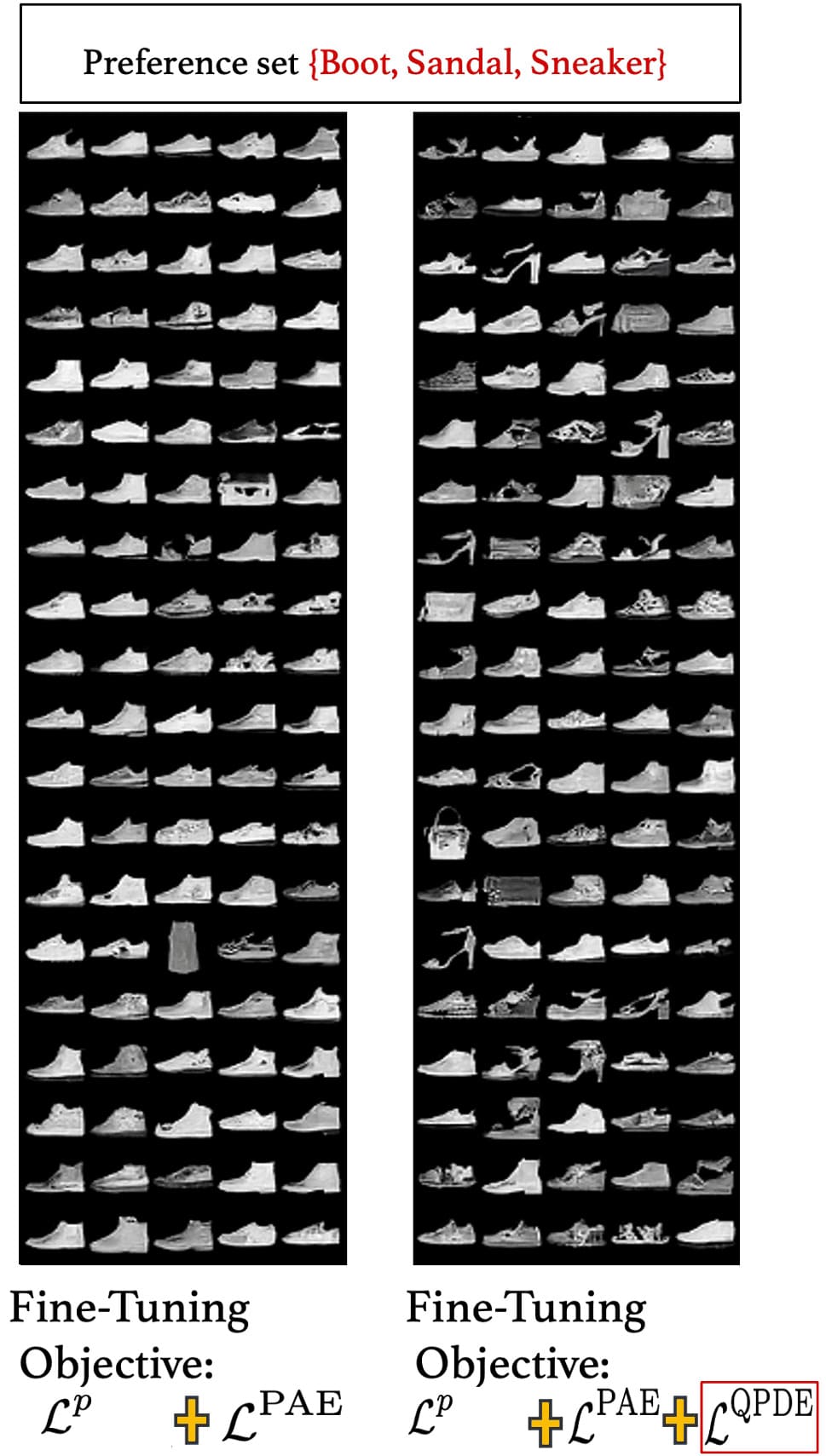}
    \caption{Visualizing the importance of $\mathcal{L}^{\mathrm{QPDE}}$ on Sample Quality $\&$ Diversity.}
    \label{fig:vis_qpde}
\end{figure}



\section{Visualizations of Insufficiency of \emph{${L}^p$}}

In the main paper (see Section~5), we argue that the $\mathcal{L}^p$ objective alone is insufficient for effectively addressing active preference alignment problems. Through a series of ablation studies, we emphasize the critical role of other components within the PAPA and EPAPA frameworks (see Section~6). To illustrate the limitations of $\mathcal{L}^p$, we provide a visualization showing its inability to generate diverse samples within the preference set. The results, shown in Figure~\ref{fig:papa_ins}, use $\{$ Boot, Sneaker, Sandal$\}$ as the preference set. We observe that the model trained with the $\mathcal{L}^p$ objective predominantly generates samples from a single preferred class (i.e., Boot), underscoring its failure to produce a diverse range of samples, which is also supported by Theorem 1. 

\begin{figure}[H] 
    \centering
    \includegraphics[width=0.5\textwidth]{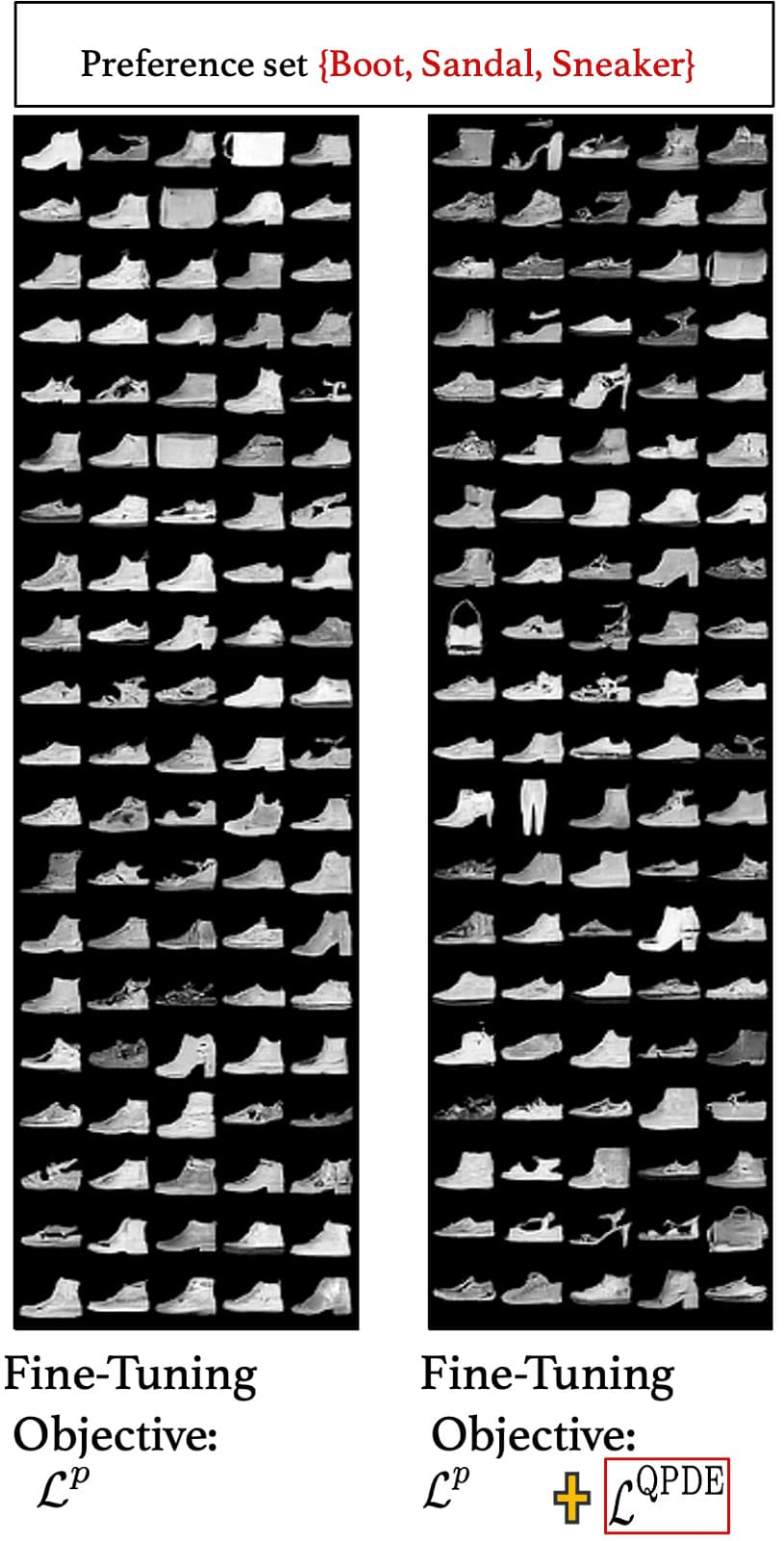}
    \caption{\small{Insufficiency of $\mathcal{L}^p$.}
}
    \label{fig:papa_ins}
\end{figure}

\section{Insufficiency of \emph{${L}^{np}$}}
One might wonder whether $\mathcal{L}^{np}$ alone is sufficient to address the active preference alignment task. However, it turns out that $\mathcal{L}^{np}$ by itself is inadequate for solving this problem. It faces challenges similar to those encountered when the model is fine-tuned using only $\mathcal{L}^{p}$. As depicted in Figure~\ref{fig:collapse}, we observe that when fine-tuned solely with $\mathcal{L}^{np}$, the model gradually forgets everything and eventually generates entirely black images. This suggests that the objective essentially leads to forgetting the non-preferred set derived from the user's feedback. Thus, relying exclusively on $\mathcal{L}^{np}$ during fine-tuning causes the model to unlearn prior knowledge. This observation underscores the critical role of $\mathcal{L}^{p}$ in maintaining a balance between relearning and forgetting (\textbf{"memory consolidation"}), which is essential for effective active preference alignment.

\begin{figure}[!h] 
    \centering
    \includegraphics[width=0.45\textwidth]{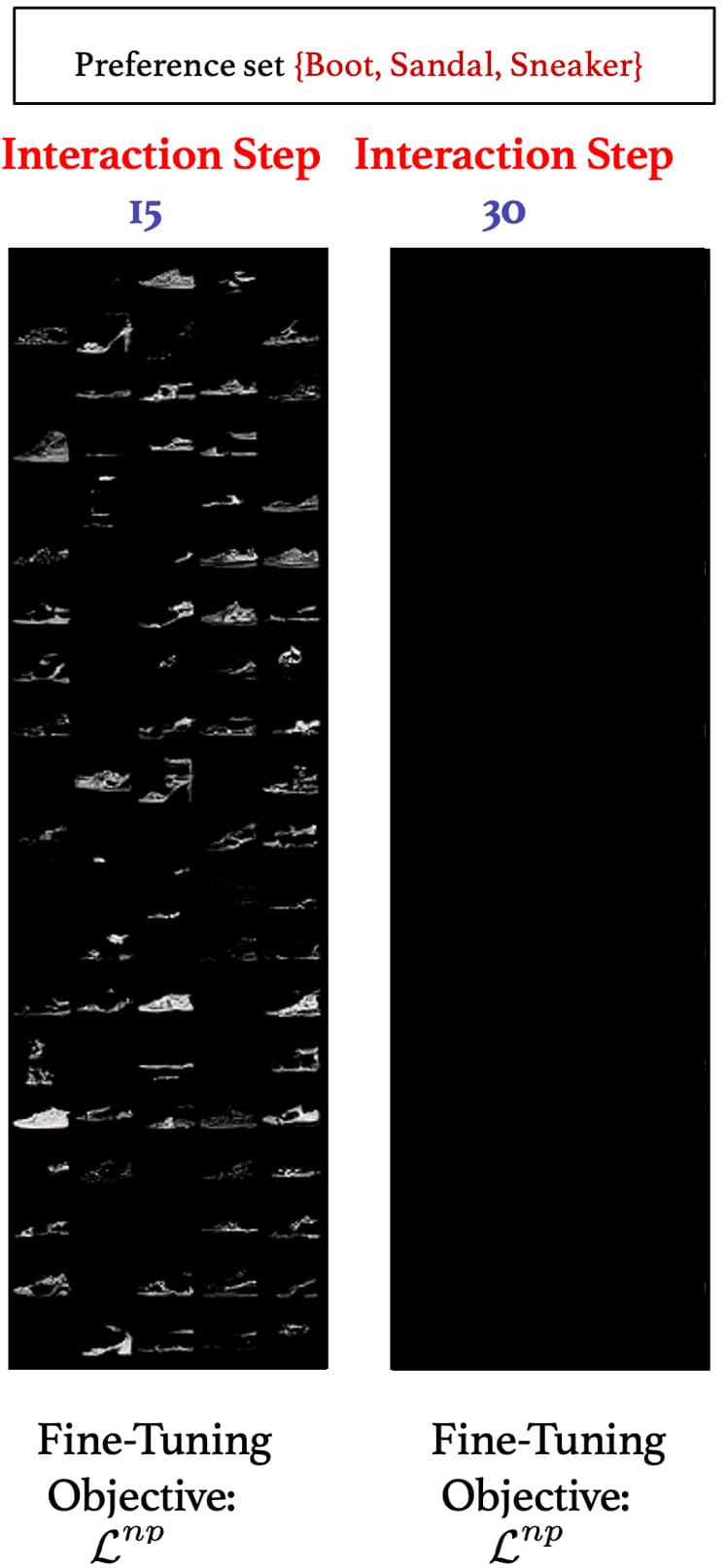}
    \caption{\small{Insufficiency of $\mathcal{L}^{np}$.}
}
    \label{fig:collapse}
\end{figure}


\section{Generalization of PAPA under Non-Binary Feedback}
\begin{figure}[!h] 
    \centering
    \includegraphics[width=0.70\textwidth]{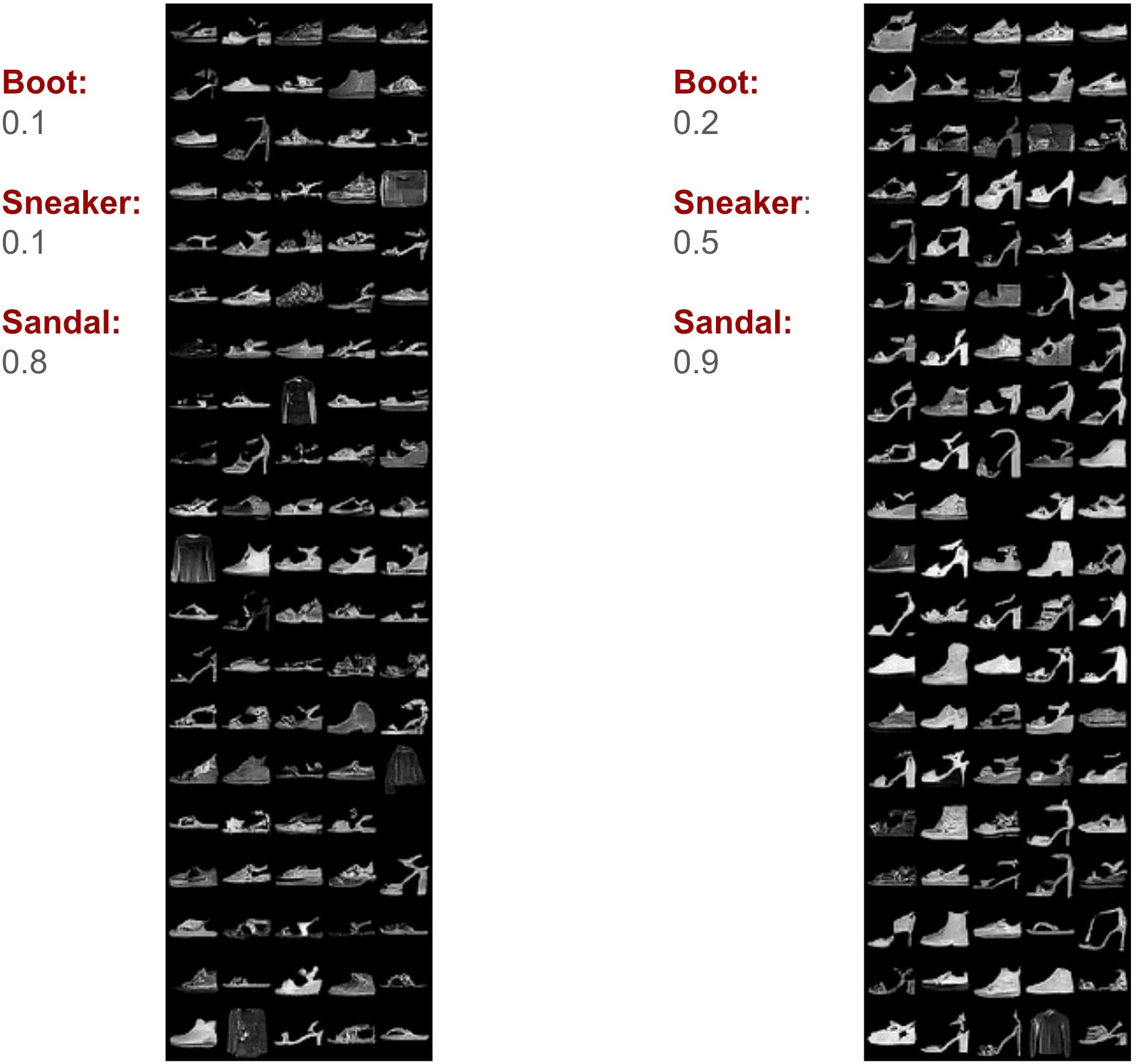}
    \caption{Visualizations of generated samples with EPAPA under non-binary feedback settings.}
    \label{fig:papa_nb}
\end{figure}
While our current formulation treats all preferred samples equally, the Preference Aligner objective (Eqn. 5) is flexible and can be naturally extended to handle richer feedback signals, such as ratings or auxiliary inputs. In particular, normalized rating scores can be incorporated as weighting coefficients in the $\mathcal{L}_{PAPA}$ term, allowing us to prioritize certain preferences more strongly than others. To explore this idea, we conducted a controlled experiment where one preferred class (e.g., Sandal) was assigned a higher normalized preference score of 0.9, compared to other preference classes (Boot with a preference score of 0.2, and Sneaker with 0.5). As feedback accumulated, the model adapted accordingly—generating noticeably more sandal samples, in alignment with the stronger preference signal. For a visualization of the generated samples, please see Figure~\ref{fig:papa_nb} (right). In Figure~\ref{fig:papa_nb} (left),  we depict the visualization of generated samples with a different normalized preference score across the same preference set. These additional results not only demonstrate the adaptability of PAPA to more expressive feedback modalities but also highlight its potential to serve as a broader framework for personalized generative modeling. We believe incorporating fine-grained feedback is an exciting avenue for future work, and PAPA is well-positioned to support it.

\section{Qualitative Comparisons with different $K$}\label{app:k}
\begin{figure}[!h] 
    \centering
    \includegraphics[width=0.60\textwidth]{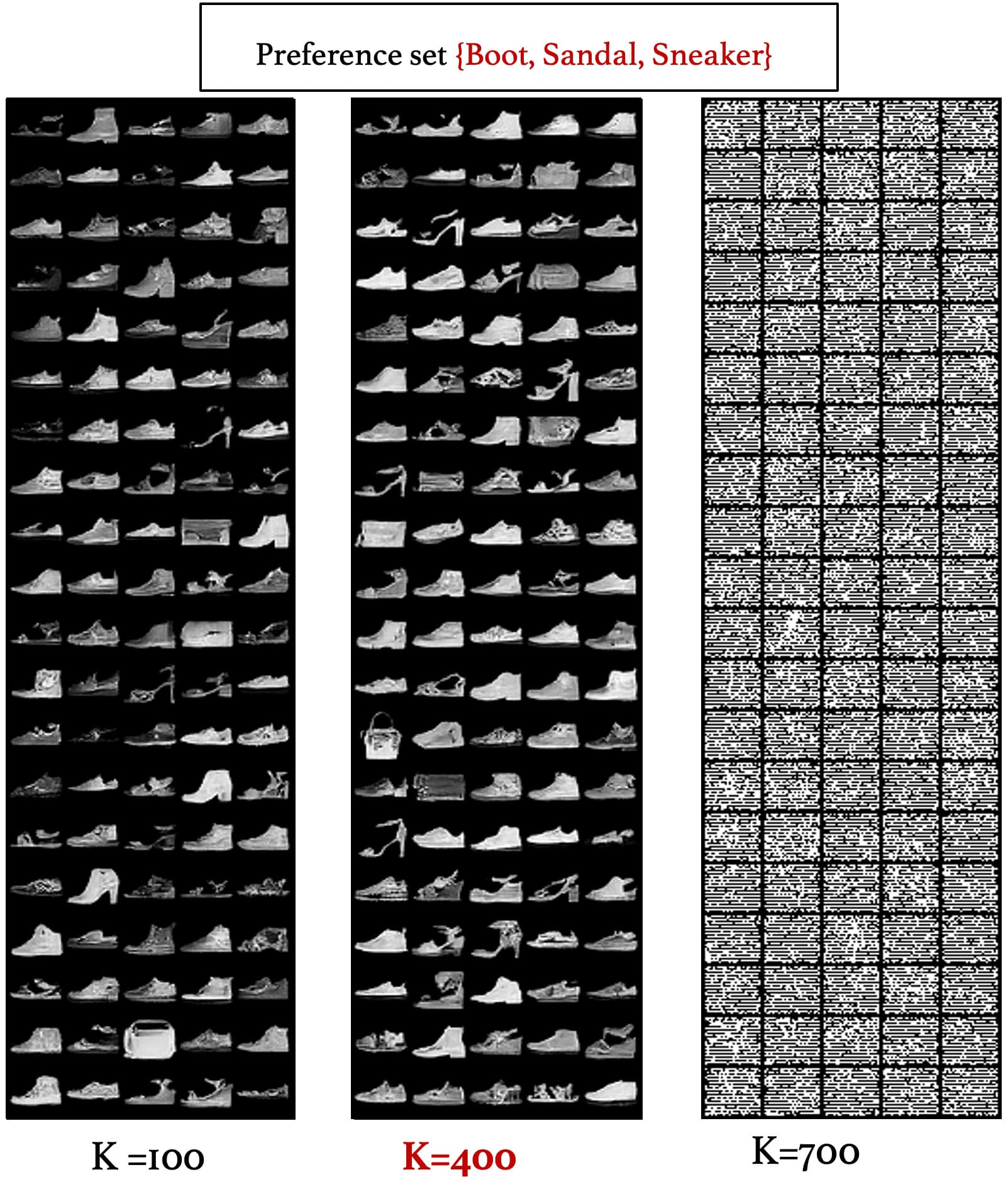}
    \caption{Visualizations with Different Values of $K$.}
    \label{fig:ablk}
\end{figure}
In the main paper (see Section~6), we provide a quantitative analysis of the performance for different values of $K$. Here, we offer a qualitative comparison, with the results shown in Figure~\ref{fig:ablk}. We choose $\{$ boot, Sandal, Sneaker $\}$ as the preference set for this analysis. As shown in Figure~\ref{fig:ablk}, the best results are achieved with  $K=400$, while larger values of $K$ lead to a decline in sample quality. These findings align with our hypothesis, reinforcing the importance of using the fine-tuned model as a denoiser for higher noise levels, and the pre-trained model for lower noise levels, to enhance performance.


\section{Additional Results on the Effects of $K$} 
As analyzed in the main paper (see Section~6.3), our experiments with Fashion-MNIST show that extreme values of $K$ are suboptimal: large $K$ values lead to poor denoising by pre-trained diffusion models, while very small $K$ values result in weaker alignment due to insufficient guidance. This motivates choosing $K$ in the mid-range of the reverse diffusion steps, where the model achieves a balance between generative quality and preference alignment. While the optimal $K$ may vary slightly across domains, we find that values near the midpoint of the reverse diffusion step generalize well in practice. To further validate this, we conducted additional experiments on MNIST and report our results in the following Table~\ref{tb:k_new}. We observe a consistent trend with Fashion-MNIST—optimal performance is achieved when the value of $K$ lies near the midpoint of the diffusion trajectory. We present the result in the following table, and our empirical outcomes are consistent with the result we observed with the other dataset (see Table~2 in the main paper).

\begin{table}[ht]
    \caption{Effect of $K$.}
    \label{tb:k_new}
    \centering
    \begin{tabular}{p{0.70cm} p{1.90cm} p{1.80cm}}
        \toprule
        $K$ & \>\>FID$\downarrow$ & \>\>IS$\uparrow$ \\
        \midrule
        100 & 18.87$\pm$5.5 & 2.0$\pm$0.04 \\
        400 & \textbf{16.25$\pm$3.03} & \textbf{2.0$\pm$0.05} \\
        700 & 30.95$\pm$4.02 & 2.0$\pm$0.03 \\
        \bottomrule
    \end{tabular}
\end{table}



\section{Stability Analysis of Our Proposed Approach}
To assess the stability of our proposed method, we conducted experiments across 3 independent random trials. The following Figure~\ref{fig:papa_stability} presents a success rate plot showing the mean and standard deviation calculated from these trials. The solid lines in the plot denote the mean, while the shaded regions denote the standard deviation. These results further reinforce the efficacy and stability of our proposed method. 

\begin{figure}[!h] 
    \centering
    \includegraphics[width=0.90\textwidth]{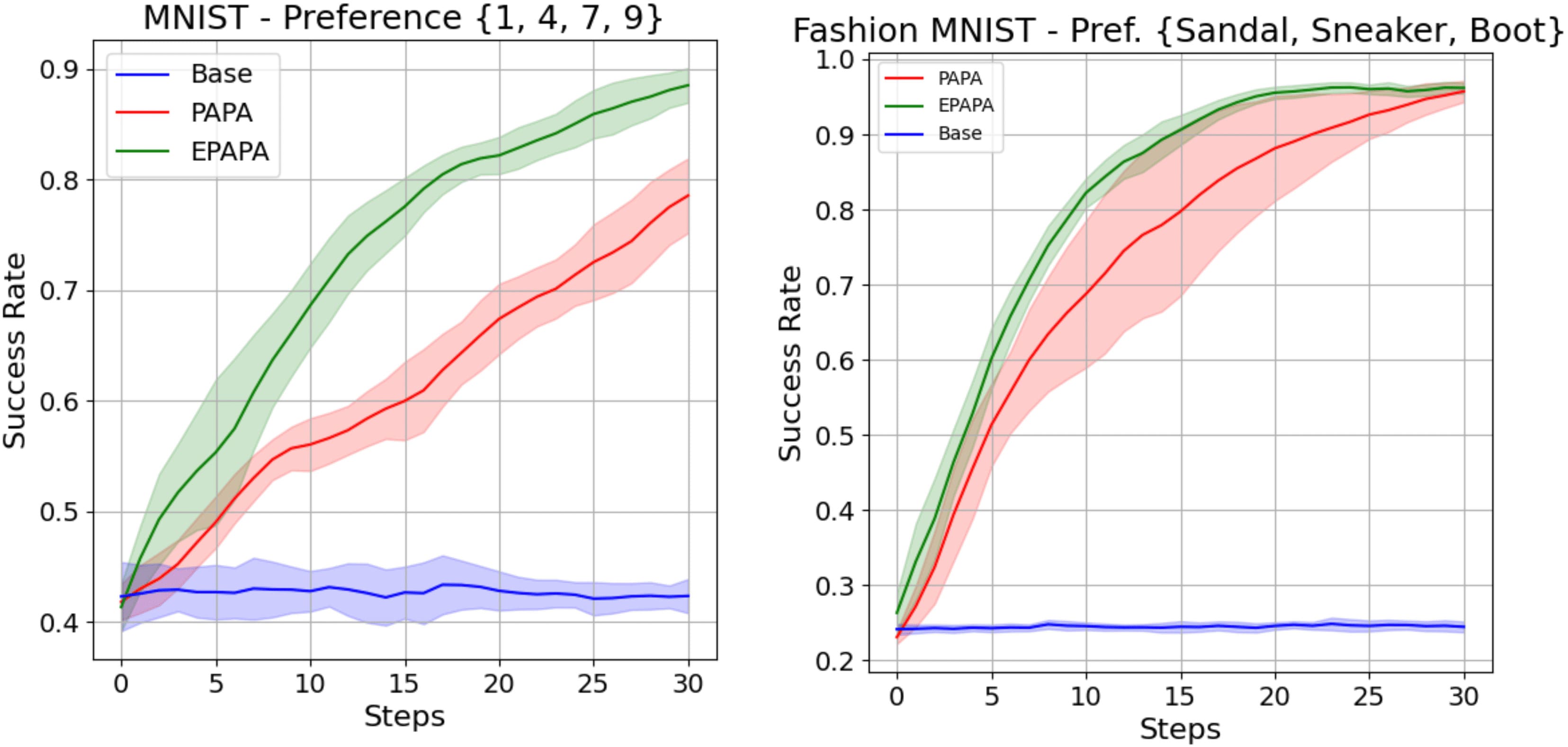}
    \caption{Visualizations of \textit{success rate} across different choices of preference sets using three independent trials.}
    \label{fig:papa_stability}
\end{figure}

\vspace{20pt}

\section{Details of the Predictive Model Used as the Proxy for User Feedback}\label{sec:pred}
 
In our work, instead of relying on user-provided feedback for preferences, we use a pre-trained classifier model to predict the class of the generated image and provide binary feedback based on whether the predicted class belongs to the user's preferred set. If the predicted class matches, feedback is positive; otherwise, it is negative. The classifier utilizes a custom neural network architecture with two 2D convolutional layers ([32, 64] filters), followed by two fully connected layers. Max-pooling is applied between the last convolutional layer and the first fully connected layer. ReLU activations are used after each layer, and dropout layers (with rates of 0.25 and 0.5) help prevent overfitting. We use cross-entropy loss to optimize the parameters of the classifier and leverage Adadelta~\cite{zeiler2012adadelta} as the optimizer and StepLR as the learning rate scheduler. Models trained on the MNIST and Fashion-MNIST datasets achieve accuracies of 98\%  and 94\%, respectively, making them reliable substitutes for human feedback in simulating user interaction. Nonetheless, human preferences are often nuanced and intricate, frequently surpassing what a typical classifier can capture. In our work, we adopt a pre-trained classifier as a practical and controlled proxy for user feedback, particularly to facilitate reproducible experiments in the absence of large-scale human-in-the-loop data. We see this as a foundational step toward more realistic preference modeling. Notably, our framework is modular by design and can seamlessly integrate richer forms of feedback—such as human responses or learned preference models—as they become available.

\clearpage

\section{Additional Visualizations of Preference Alignment with Diverse Preference Set}
Here, we present additional comparative visualizations of EPAPA, Base, and D3PO across diverse preference sets. We present the visualizations in Figure~\ref{fig:vis_add1}, again reinforcing the effectiveness of our proposed approach. 
\begin{figure}[H] 
    \centering
    \includegraphics[width=0.70\textwidth]{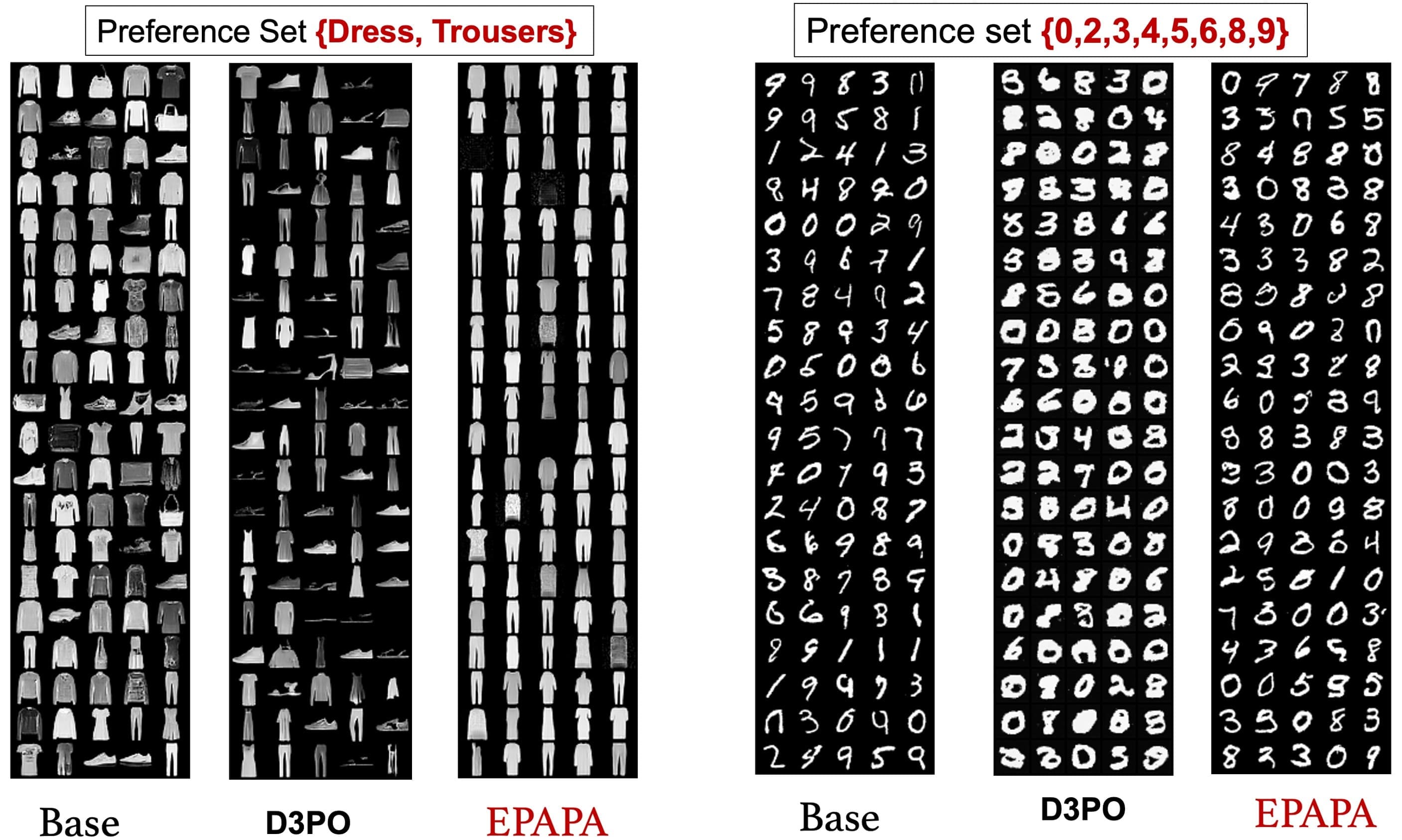}
    \caption{Additional visualizations of generated images using proposed EPAPA compared to base model and existing approach D3PO.}
    \label{fig:vis_add1}
\end{figure}

\section{Comparative Visualizations of PAPA and EPAPA}\label{app:pe}
In the main paper, we offer a quantitative analysis of PAPA and EPAPA across various experimental settings (see Section 6.1). Here, we provide a qualitative comparison between PAPA and EPAPA through visualizations, which are presented in Figure~\ref{fig:papa_epapa}. These qualitative visualizations indicate that the quality and diversity of the samples generated by EPAPA are notably superior to those produced by PAPA. The enhanced performance of EPAPA can be primarily attributed to its effective use of the pre-trained model during the reverse diffusion process at low noise levels.

\begin{figure}[!h] 
    \centering
    \includegraphics[width=0.7\textwidth]{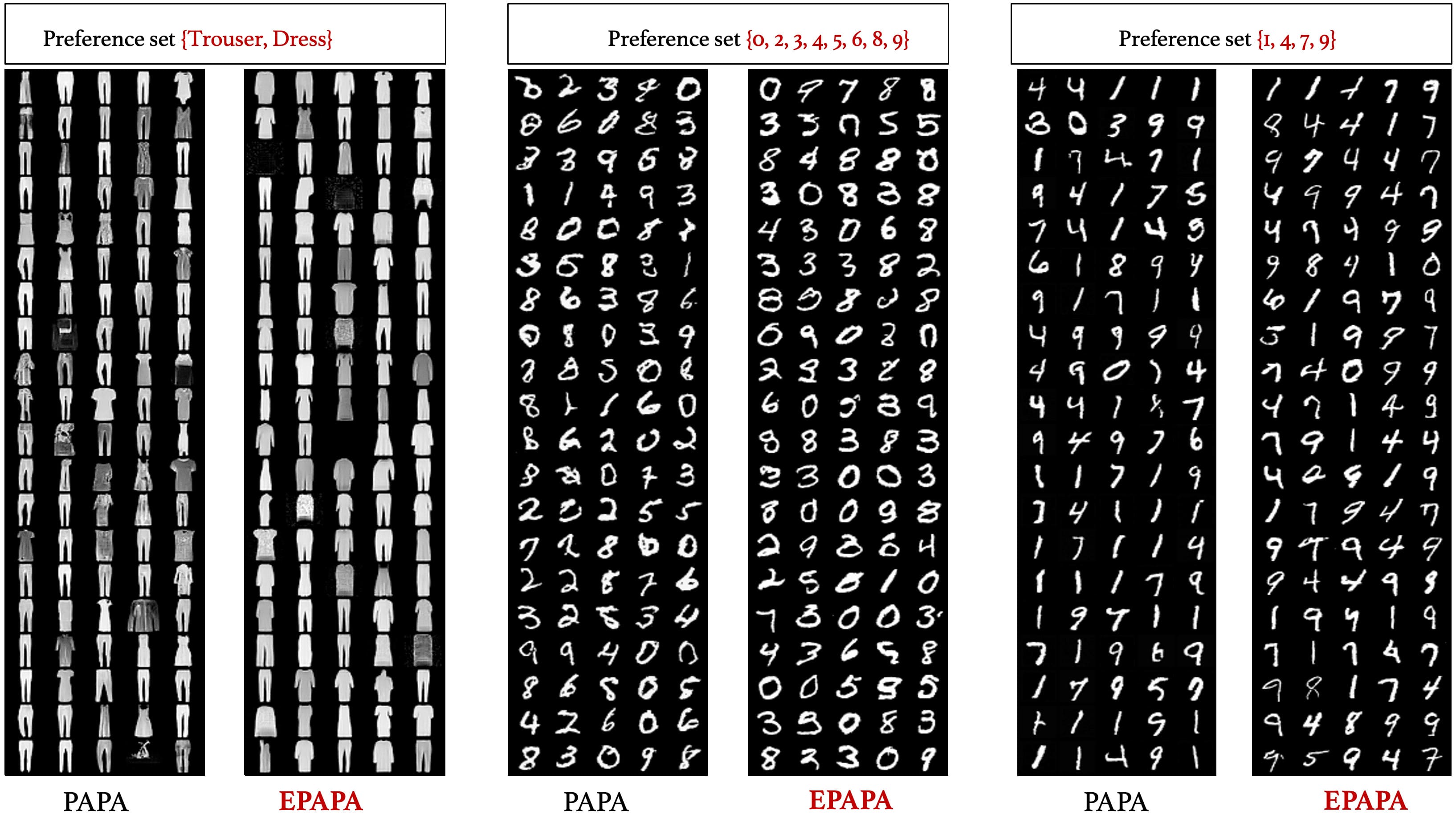}
    \caption{\small{Comparative visualizations of PAPA and EPAPA.}
}
    \label{fig:papa_epapa}
\end{figure}

\clearpage

\section{Additional Examples of Generated Samples Using a Different Preference Set from CIFAR-10}

In this section, we present visualizations with $s_2 \in \{ Horse, Deer, Cat, Dog\}$ as the preference set, as depicted in Figure~\ref{fig:cifar_p2}. These additional visualizations further reinforce the efficacy of EPAPA in generating samples aligning with diverse preference sets. 

\begin{figure}[!h] 
    \centering
    \includegraphics[width=0.60\textwidth]{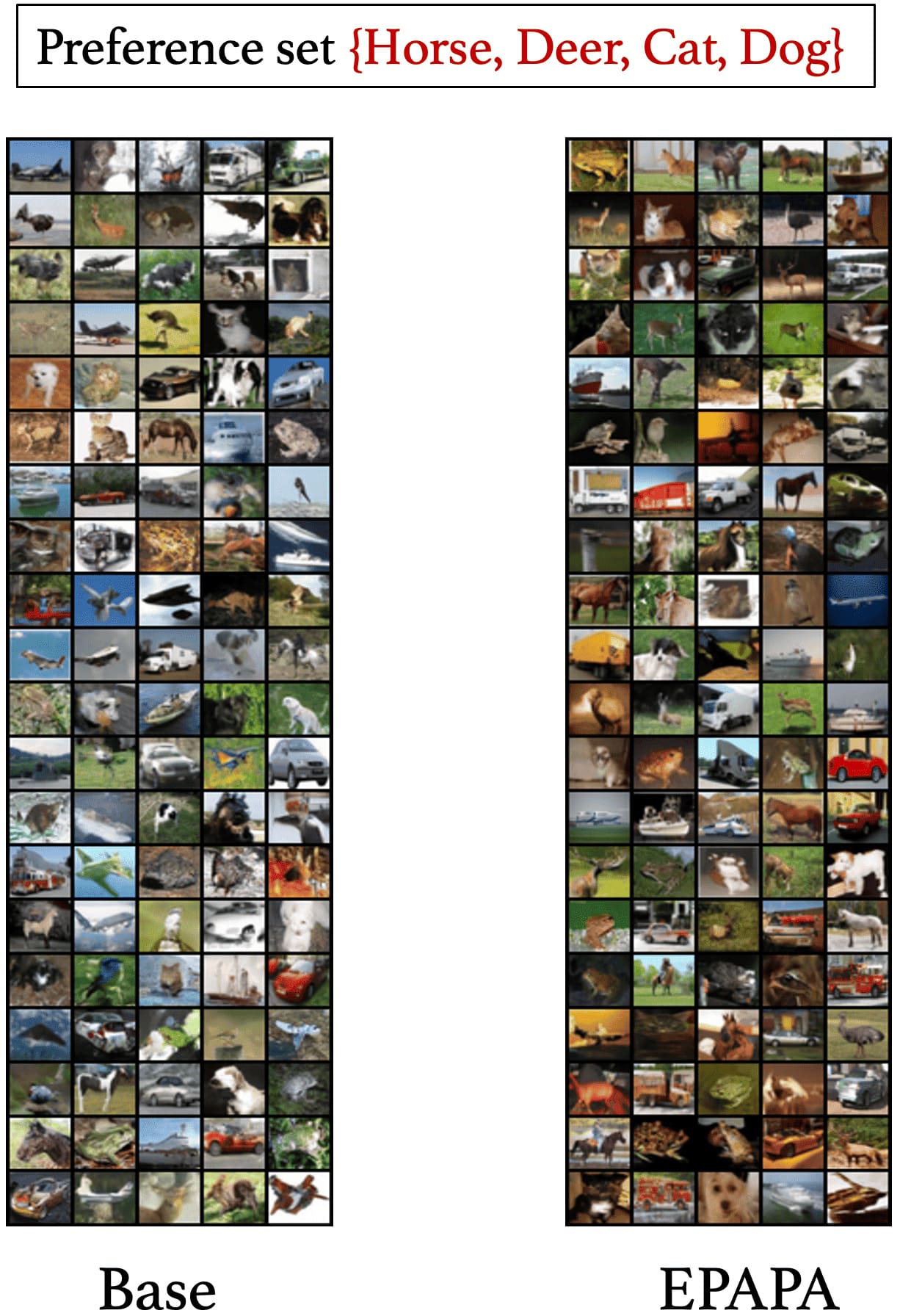}
    \caption{Additional visualizations on the efficacy of EPAPA on generating images aligning with the preference set.}
    \label{fig:cifar_p2}
\end{figure}

\clearpage

\section{Details of Computing Resource and Hyperparameters}\label{app:hyp}

Our diffusion model employs a UNet~\cite{ronneberger2015u} architecture as its backbone. For MNIST experiments, the network comprises six ResNet~\cite{he2016deep} layers and four attention heads, providing a robust framework for image generation tasks. Our implementation is configurable, allowing seamless adaptation for both grayscale and RGB images at higher resolutions. We incorporate a sinusoidal time embedding to effectively encode the time step of the diffusion process.

All experiments were conducted on Linux-based servers equipped with AMD Instinct MI250X GPUs with 64 GB memory each and NVIDIA GTX 2080 with 8GB memory. 
To efficiently utilize multiple GPUs within a single training run, we leverage Data Distributed Parallel (DDP) for scalable computation. Our code is publicly available at \url{https://github.com/NasikNafi/papa}. Here are the details of the hyperparameters used in the experiments for MNIST:

\begin{table}[H]
\begin{center}
\begin{tabular}{ll}
\multicolumn{1}{c}{\bf Hyperparameter}  &\multicolumn{1}{c}{\bf Values}
\\ \hline \\

\# num of timesteps &           1000 \\
noise scheduler $\beta$ start &           0.0001\\
noise scheduler $\beta$ end &           0.02\\
optimizer & ADAM\\
learning rate & 5e-4 \\
\# num of ResNet blocks &            6\\
\# num of attention heads &            4\\
K for EPAPA &  400 \\
QPDE coefficient $\beta$ & 0.009\\ 
time embedding dimension & 128 \\
total samples & 300 \\
\# num of interaction step &            30\\
\# num of samples per interaction &  10\\
\hline
\end{tabular}
\end{center}
\caption{Hyperparameters used for training and evaluation}
\label{table-hyp}
\end{table}

\section{Future Work}
In this work, we introduce a foundational framework for active preference alignment and conduct a comprehensive analysis of its core components, rigorously validating their roles and interactions using standard computer vision datasets. Looking forward, we are eager to expand this framework to tackle significant scientific challenges — such as active drug discovery and the generation of new metals and molecules — while dynamically accommodating target property preferences as they are sequentially revealed, thereby broadening the influence of active preference alignment in critical scientific domains.

\end{document}